\documentclass{article}

\usepackage{ifthen}
\newboolean{preprint}
\setboolean{preprint}{false}

\ifthenelse{\boolean{preprint}}{
\usepackage[square]{natbib}
\usepackage{fullpage}
\usepackage{libertine}
}{
\PassOptionsToPackage{round}{natbib}
\usepackage[final]{neurips_2025}
\usepackage{microtype}
}

\renewcommand{\cite}{\citep}

\usepackage{amssymb}
\usepackage{mathtools}
\usepackage{amsthm}
\usepackage{thmtools}
\usepackage{graphicx} %
\usepackage{color}
\usepackage{bm}
\usepackage{nicefrac}
\usepackage{booktabs}
\usepackage{multirow}
\usepackage{subcaption}
\usepackage{enumitem}
\usepackage{colortbl}
\usepackage{inconsolata}
\usepackage[bb=dsserif]{mathalpha}
\usepackage[group-separator={,},group-minimum-digits={3}]{siunitx}
\usepackage{csquotes}

\usepackage[pdftex,bookmarksnumbered,bookmarksopen,colorlinks,citecolor={blue!70!black},linkcolor={blue!70!black},urlcolor=blue]{hyperref}
\usepackage[capitalize]{cleveref}

\usepackage{pgfplots}
\usetikzlibrary{fadings}
\usetikzlibrary{patterns}
\usetikzlibrary{shadows.blur}
\usetikzlibrary{shapes}

\DeclareMathOperator{\E}{\mathbb{E}}

\newcommand{\sQ}{\mathbb{Q}}
\newcommand{\sD}{\mathbb{D}}
\newcommand{\bbD}{\mathbb{D}}
\newcommand{\bbE}{\mathbb{E}}
\newcommand{\sR}{\mathbb{R}}
\newcommand{\sA}{\mathbb{A}}
\newcommand{\sV}{\mathbb{V}}
\newcommand{\sW}{\mathbb{W}}
\newcommand{\sY}{\mathbb{Y}}

\newcommand{\calA}{\mathcal{A}}
\newcommand{\mech}{M}
\newcommand{\mem}{\mathsf{mem}}
\newcommand{\xinf}{\mathsf{xinf}}
\newcommand{\adv}{\mathsf{adv}}

\newcommand{\id}{\boldsymbol{\mathbb{1}}}
\newcommand{\advantage}{\mathsf{adv}}
\newcommand{\success}{\mathsf{succ}}
\newcommand{\baseline}{\mathsf{base}}
\newcommand{\prior}{P}

\newcommand{\define}{~\smash{\triangleq}~}

\newcommand{\figwidth}{\linewidth}

\definecolor{seabornblue}{HTML}{1f77b4}
\definecolor{seabornorange}{HTML}{ff7f0e}
\definecolor{seaborngreen}{HTML}{2ca02c}
\definecolor{seabornred}{HTML}{d62728}

\newtheorem{definition}{Definition}[section]
\newtheorem{proposition}{Proposition}[section]
\newtheorem{corollary}{Corollary}[section]

\newtheorem{theorem}{Theorem}[section]
\newtheorem{lemma}{Lemma}[section]

\setitemize{itemsep=-.2em}

\title{Unifying Re-Identification, Attribute Inference, and Data Reconstruction Risks in Differential Privacy}

\ifthenelse{\boolean{preprint}}{
\usepackage{authblk}

\author[1,2]{Bogdan Kulynych}
\author[3]{Juan Felipe Gomez}
\author[4]{Georgios Kaissis}
\author[4]{\authorcr Jamie Hayes}
\author[4]{Borja Balle}
\author[3]{Flavio du Pin Calmon}
\author[1,2]{Jean Louis Raisaro}

\affil[1]{Lausanne University Hospital (CHUV)}
\affil[2]{University of Lausanne}
\affil[3]{Harvard University}
\affil[4]{Google DeepMind}

\date{}
}{

\author{
    Bogdan Kulynych \\ Lausanne University Hospital \And
    Juan Felipe Gomez \\ Harvard University \And
    Georgios Kaissis \\ Google DeepMind \And
    Jamie Hayes \\ Google DeepMind \And
    Borja Balle \\ Google DeepMind \And
    Flavio P. Calmon \\ Harvard University \AND
    Jean Louis Raisaro \\ Lausanne University Hospital \\ University of Lausanne
}

}

\begin{document}

\maketitle

\begin{abstract}
\noindent
Differentially private (DP) mechanisms are difficult to interpret and calibrate because existing methods for mapping standard privacy parameters to concrete privacy risks---re-identification, attribute inference, and data reconstruction---are both overly pessimistic and inconsistent. In this work, we use the hypothesis-testing interpretation of DP ($f$-DP), and determine that bounds on attack success can take the same unified form across re-identification, attribute inference, and data reconstruction risks. Our unified bounds are (1) consistent across a multitude of attack settings, and (2) tunable, enabling practitioners to evaluate risk with respect to arbitrary, including worst-case, levels of baseline risk. Empirically, our results are tighter than prior methods using $\varepsilon$-DP, R\'enyi DP, and concentrated DP. As a result, calibrating noise using our bounds can reduce the required noise by 20\% at the same risk level, which yields, e.g., an accuracy increase from 52\% to 70\% in a text classification task. Overall, this unifying perspective provides a principled framework for interpreting and calibrating the degree of protection in DP against specific levels of re-identification, attribute inference, or data reconstruction risk.
\end{abstract}

\section{Introduction}
\label{sec:intro}

\begin{figure*}[h]

    \centering
    \resizebox{\linewidth}{!}{
\begin{tikzpicture}[
    declare function={normcdf(\x) = 1/(1 + exp(-0.07056*((\x)/1)^3 - 1.5976*(\x)/1));
                      _invpart(\p) = -5.531 * (((1 - \p) / \p)^0.1193 - 1);
                      invnormcdf(\p) = (\p >= 0.5) * _invpart(\p) + (\p < 0.5) * (-_invpart(1 -\p));
    }
]
    \begin{axis}[
        xlabel={\sffamily Baseline},
        ylabel={\sffamily Attack success},
        domain=0:1,
        xmin=0,
        xmax=1,
        ymin=0,
        ymax=1,
        width=6cm,
        height=6cm,
        samples=500,
        legend cell align={left},
        legend style={at={(1.3,0)}, anchor=south west, draw=lightgray}, 
        axis lines=box,
        axis line style={black, thick},
        trim axis left,
        trim axis right,
        enlargelimits=false,
        grid=both,
        xtick={0, 0.25, 0.5, 0.75, 1},
        ytick={0, 0.25, 0.5, 0.75, 1},
        name=plot, 
    ]
    \def\deltaparam{0.00001}
    \def\ord{2}
    \def\eps{3.14680}
    \def\mu{0.75}
    \def\datasize{10000}
    \addplot[color=seabornred, domain=0:1, ultra thick] {exp(\eps) * x / ((1 - 1/\datasize))^(\datasize - 1)) + \datasize * \deltaparam};
    \addlegendentry{\footnotesize \sffamily Prior bound on re-identification (\citet{cohen2020towards}, via $(\varepsilon, \delta)$-DP)}
    \addplot[color=seabornblue, domain=0:1, ultra thick] {(x * exp(0.5 * \ord * \mu^2))^((\ord - 1)/\ord)};
    \addlegendentry{\footnotesize \sffamily Prior bound on reconstruction / attribute inference (\citet{balle2022reconstructing}, via R\'enyi DP)}
    
    \addplot[color=seabornorange, domain=0:1, ultra thick] {1 - normcdf(invnormcdf(1 - x) - \mu)};

    \addlegendentry{\footnotesize \sffamily Unifying bound: $\text{Success} \leq 1 - f(\text{Baseline})$ (ours, via $f$-DP)}
    \addplot[color=gray, thick, dashed, opacity=0.5] {x};
    \end{axis}
    
    \node[anchor=west, align=left] at ($(plot.east)+(1cm,1cm)$) {
    \resizebox{0.95\linewidth}{!}{
    \begin{tabular}{r@{}c@{}l}
        \textsf{Attack success} = $\Pr\Big[\textsf{successful }$ & 
        \multirow{2}{*}{
        \raisebox{2em}{
            \multirow{2}{*}{
                {\sffamily
                \renewcommand{\arraystretch}{1.33}
                \begin{tabular}[c]{@{}c@{}}
                    re-identification \\[0.3em]
                    \arrayrulecolor{lightgray}\hline
                    attribute inference \\[0.3em]
                    \arrayrulecolor{lightgray}\hline
                    data reconstruction
                \end{tabular}
                }
            }
        }
        }
        &
        \textsf{ after observing mech. output }$\Big]$%
        \\[1em]
        
        \textsf{Baseline} = $\Pr\Big[\textsf{successful }$ & & \textsf{ prior to observing mech. output }$\Big]$ \\
    \end{tabular}
    }
    };
\end{tikzpicture}
}
    \vspace{.2em}
    \includegraphics[width=0.85\linewidth]{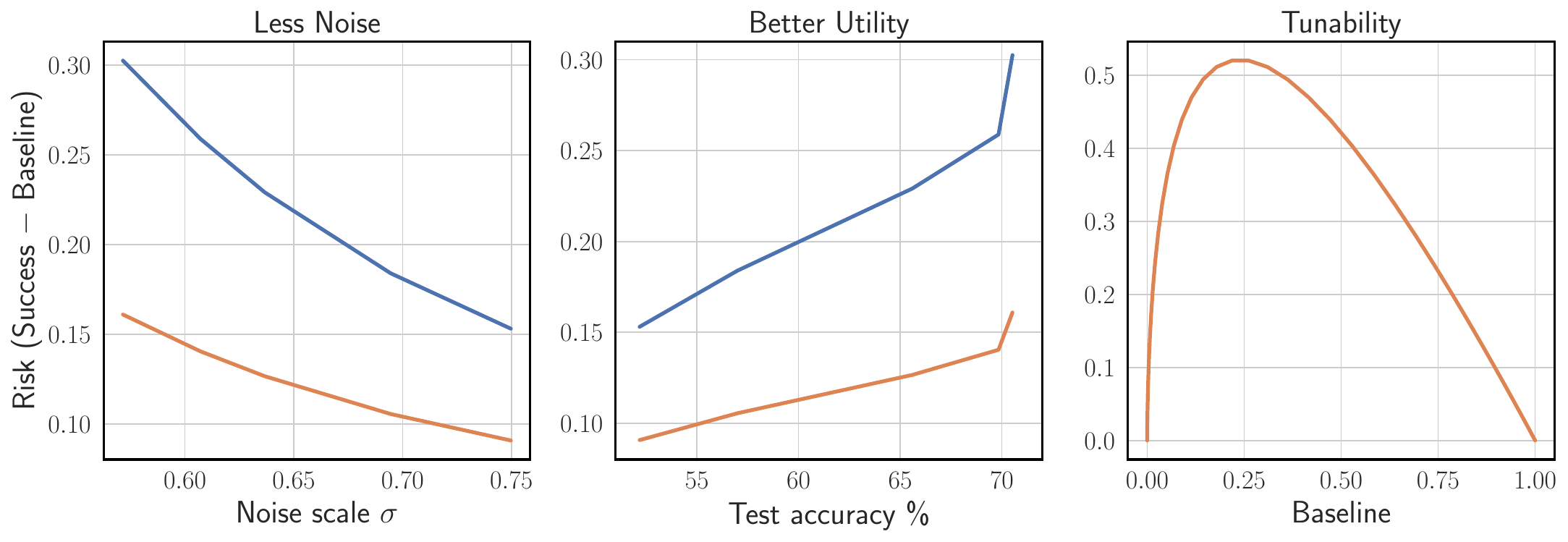}

    \caption{Our results offer a unified and more precise way to interpret and calibrate DP mechanisms in terms of re-identification, attribute inference, and data reconstruction risks. \textit{Top:} The success of all these attacks cannot be higher than the power of the worst-case membership inference attack, which we immediately obtain from the $f$ function in the decision-theoretic characterization of privacy mechanisms---$f$-DP. \textit{Bottom left:} \textcolor{seabornorange}{Our results} enable to add $\approx 20\%$ less noise at any given level of risk compared to using \textcolor{seabornblue}{prior methods} (see \cref{sec:exp} for details). \textit{Bottom middle:} Less noise translates into $\approx18$pp improved task accuracy (shown: DP-SGD for sentiment classification with GPT-2). \textit{Bottom right:} The unifying risk measure is tunable---we can either estimate post-release risk for any given level of baseline risk, or measure the worst-case risk (shown: US 2020 Census state-level data).
    }
    \label{fig:bound-at-a-glance}
\end{figure*}

Releases of models and statistics derived from personal data---such as de-identified and synthetic data releases, outcomes of statistical analyses, and machine-learning models---can reveal information about individuals in the data. These releases can be classified into \emph{record-level} (releasing de-identified or obfuscated records) and \emph{aggregate} (releasing statistical properties and models)~\cite{gadotti2024anonymization}. 
The disclosure of such information incurs different types of privacy risks, which are legislated differently across jurisdictions and regulatory frameworks.

In record-level releases, information leakage can occur via \emph{singling out} of an individual~\cite{cohen2020towards}, e.g., determining that there is only one Colombian woman fluent in Mandarin over the age of $60$ in the dataset. 
When combined with external information, singling out can result in a person's \emph{re-identification.} Re-identification is harmful when it enables inference of previously unknown personal attributes by an attacker, e.g., disease status or voting record~\cite{sweeney2002k}. Although in aggregate releases there is no one-to-one mapping between the released information and individuals, sensitive information can still be inferred via \emph{attribute inference} or \emph{data reconstruction attacks}~\cite{fredrikson2015model,yeom2018privacy,balle2022reconstructing}, in which the adversary could, e.g., inspect the confidence outputs of a predictive model to infer one or more unknown attributes of an individual. These notions of risk are interpretable and appear in data-protection guidelines such as ISO standards~\cite{iso2022privacy}, or guidelines from the European Medicines Agency~\cite{european2018external}. 

Differential privacy (DP)~\cite{dwork2014algorithmic} is a standard tool for controlling information leakage in record-level and aggregate data disclosures. The goal of DP mechanisms is to  mask the contribution of each individual in the release via noise injection and randomization. 
One of the core challenges in applying DP in practice is its interpretation~\cite{franzen2022private,nanayakkara2023chances}, as DP parameters do not trivially bound concrete risks such as re-identification, attribute inference, and data reconstruction. In the standard practice, information leakage in DP is quantified by two parameters: $\varepsilon$, with values closer to zero meaning better privacy, and $\delta$, which should be kept cryptographically small~\cite{vadhan2017complexity}. 

There are few established guidelines for selecting DP parameters $(\varepsilon,\delta)$, and often contradictory standards are adopted in practice. For example, heuristics such as $\varepsilon < 10$ are common in deep learning \cite{ponomareva2023dp}. When translated to concrete privacy risks, such choices can result in vacuous privacy guarantees: in theory, $\varepsilon \approx 10$ can yield a  more than $99\%$ worst-case attribute inference accuracy.\footnote{Attack accuracy is bounded by $\frac{e^\varepsilon + \delta}{e^\varepsilon + 1}$ \cite{kairouz2015composition,humphries2023investigating,salem2023sok}.}
This poor performance \underline{\emph{is not}} due to inherent limitations of DP itself---instead, it is a reflection of the imprecise nature of existing methods that convert between DP guarantees and notions of privacy risk. In fact, empirical evidence  shows that even $\varepsilon > 10$ still prevents practical reconstruction attacks \cite{ziller2024reconciling,hayes2024bounding}. 

In this paper, we use the decision-theoretic characterization of DP known as $f$-DP~\cite{dong2019gaussian} to unify and improve existing conversion bounds between DP and different notions of privacy risk. 
We work under the \emph{strong-adversary threat model}, in which the adversary has access to the entire dataset except for one element, and has a certain \emph{prior distribution} over the remaining element.  
Inspired by \citet{cohen2020towards}, we characterize the attack \emph{success probability} of the adversary after observing the output of a DP mechanism as a function of the attack's \emph{baseline probability} without the DP output. Surprisingly, we find that under the strong threat model, we can simultaneously bound the success probability of re-identification, attribute inference, and reconstruction attacks at once, using the same expression. Importantly, we find that not only this is a convenient way to analyze multiple kinds of risk at once, but also it provides a significantly tighter bound than prior methods, due to our usage of $f$-DP. See \cref{fig:bound-at-a-glance} for an illustration of these properties, with more details in \cref{sec:exp}. 

As a concrete example, consider the release of all state-level US 2020 Census data, which satisfies 
DP with $\varepsilon = 10.6$ at $\delta = 10^{-10}$. A standard risk analysis~\cite{salem2023sok} indicates that the maximum difference between success and baseline probabilities of an attribute inference attack is $>99$ percentage points (pp), a more involved numeric analysis~\cite{balle2022reconstructing} based on the notion of R\'enyi DP~\cite{mironov2017renyi} yields $73$pp, whereas our worst-case bound yields $52$pp. Moreover, if we assume an adversary with a specific goal and background knowledge, e.g., inferring a disease status with a $1$ in \num{10000} prevalence in the population, our bound yields $< 0.001$pp. This corroborates and generalizes prior observations~\cite{ziller2024reconciling,hayes2024bounding} that even $\varepsilon \approx 10$ attains meaningful privacy guarantees in certain regimes, to arbitrary $f$-DP mechanisms.

To summarize, our contributions are:
\begin{itemize}
    \item A unifying framework for analyzing re-identification, attribute inference, and data reconstruction risks in the strong threat model of DP. This includes a new formalization of re-identification risk as a variant of the notion of predicate singling out~\cite{cohen2020towards}, adapted to the standard threat model of DP.
    \item Demonstration that our bounds offer tighter estimates of risk compared to prior approaches~\cite{cohen2020towards,balle2022reconstructing}.
    \item Case studies demonstrating that our results provide a more precise and comprehensive interpretation of privacy risk for the US 2020 Census than prior methods, and enable noise reduction in DP mechanisms by approximately $20\%$, yielding utility improvements in deep learning with DP-SGD of up to 18pp in test accuracy when calibrating noise to a given level of privacy risk~\cite{kulynych2024attack}.
    \item Novel bounds on generalization and memorization under $f$-DP as corollaries of our results.
    \item We release the code as part of the Python package:
    \begin{center}
        \url{https://github.com/Felipe-Gomez/riskcal}
    \end{center}
\end{itemize}
Overall, our unifying perspective provides a principled and easy-to-use framework for interpreting and calibrating the degree of privacy protection in DP mechanisms against specific levels of privacy risks that are relevant in practice.

\section{Background}\label{sec:background}
This section sets up the notation and provides a concise overview of relevant background. We defer a more extensive overview of relevant background to \cref{app:background}.

Suppose that we have a sensitive dataset, e.g., patient healthcare records, $S \in 2^\sD$, where $\sD$ is the data record space.  We want to run a randomized algorithm $M(S)$, e.g., a statistical analysis, or train a machine learning model, and release its output. We also call $M(\cdot)$ a \emph{mechanism}. We denote the space of its outputs as $\Theta$, and a specific output, e.g., a model, as $\theta \in \Theta$. 
We say that two datasets $S, S'$ are \emph{neighbouring} if they belong to a neighborhood relation denoted as $S \simeq S'$.
To capture a meaningful notion of training data privacy, this relation must correspond to adding, removing, or replacing all of the data that contains information of a single \emph{privacy unit}, e.g., individual or secret. In the rest of the paper, we follow the standard practice and assume that the privacy unit corresponds to a single record. We consider two standard neighbourhood relations: the \emph{add-remove} relation in which either $S = S' \cup \{z\}$ or $S' = S \cup \{z\}$ for some $z \in \sD$, and the \emph{replace-one} relation in which both $S$ and $S'$ have the same size, but differ by one record. 

\paragraph{Differential privacy.} 
An algorithm $M: 2^\sD \to \Theta$ satisfies $(\varepsilon, \delta)$-DP if for any measurable $E \subseteq \Theta$ and $S \simeq S'$, we have
$ \Pr[M(S) \in E] \leq e^\varepsilon \Pr[M(S') \in E] + \delta$~\cite{dwork2014algorithmic}. We also make use of the notion of $\eta$-total-variation (TV) privacy, which is equivalent to satisfying $(0, \eta)$-DP.

\paragraph{Strong-adversary membership inference.}
DP can be completely characterised via a constraint on the success rate of \emph{strong membership inference attacks} (SMIA), which aim to determine whether a given record was part of the input dataset $S$ based on the output of $M(S)$.
Formally, given a sensitive record $z \in \sD$, and a partial dataset $\bar S \in \sD^{n - 1}$, let $P = M(\bar{S})$ and $Q = M(\bar{S} \cup \{z\})$. SMIA can be seen as a hypothesis test:
\begin{equation}\label{eq:strong-mia-hypothesis-test}
H_0: \theta \sim P, \text{ and } H_1: \theta \sim Q.
\end{equation}
For a given decision $\phi: \Theta \rightarrow \{0, 1\}$ to reject the null hypothesis, we can quantify the adversary's success by characterizing its error rates~\cite{wasserman2010statistical, kairouz2015composition, dong2019gaussian}: $\alpha_\phi \define \E_P[\phi], \beta_\phi \define 1 - \E_Q[\phi]$, where $\alpha_\phi$ and $\beta_\phi$ are the false positive rate (FPR) and false negative rate (FNR) respectively. 
For any desired FPR level $\alpha \in [0, 1]$, the Neyman-Pearson lemma guarantees that there exists an optimal test $\phi^*$ which achieves the lowest possible FNR $\beta$. We can thus characterize the SMIA by a \emph{trade-off curve}, a function which shows the lowest FNR achieved by the most powerful test for any level of FPR $\alpha$: $T(P, Q)(\alpha) \define \inf_{\phi:~\Theta \rightarrow [0, 1]}\{ \beta_{\phi} \mid \alpha_\phi \leq \alpha \}$. 

This trade-off curve forms the basis of a more general version of DP called $f$-DP: a mechanism $M(\cdot)$ satisfies $f$-DP if for any $S \simeq S'$ and $\alpha \in [0,1]$ we have that $T(M(S), M(S'))(\alpha) \geq f(\alpha)$. This formulation is more general than DP: a mechanism $M(\cdot)$ is $(\varepsilon, \delta)$-DP iff it satisfies $f$-DP with:
\begin{equation}\label{eq:dp-to-f}
    f(\alpha) = \max\{0, 1 - \delta - e^\varepsilon \alpha,\ e^{-\varepsilon} (1 - \delta - \alpha)\}.
\end{equation}
Notably, $f$-DP is closed under \emph{post-processing}: if $M(\cdot)$ satisfies $f$-DP, then so does $g \circ M$ for any deterministic or randomized mapping $g(\cdot)$. 
We make use of a lesser known representation of $f$-DP:
\begin{restatable}{lemma}{fdpalt}\label{stmt:f-dp-alt}
An algorithm $M: 2^\sD \to \Theta$ satisfies $f$-DP iff for any measurable $E \subseteq \Theta$ and $S \simeq S'$:
\begin{equation}\label{eq:f-dp-alt}
    \Pr[M(S) \in E] \leq 1 - f(\Pr[M(S') \in E]).
\end{equation}
\end{restatable}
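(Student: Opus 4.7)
The plan is to prove both directions separately. The claim is essentially $f$-DP restricted to indicator (deterministic) tests, so the real work is lifting the restricted condition back to all randomized tests. For the forward direction, I would fix a measurable $E \subseteq \Theta$ and a neighboring pair $S \simeq S'$, and apply the $f$-DP hypothesis to the indicator test $\phi = \id_E$ with $P = M(S')$ and $Q = M(S)$ (the swap is legitimate by symmetry of the neighbor relation). Setting $\alpha = \Pr[M(S') \in E] = \E_P[\phi]$, the definition of $T$ as an infimum immediately gives $T(P, Q)(\alpha) \leq 1 - \E_Q[\phi] = 1 - \Pr[M(S) \in E]$, and combining with $T(P, Q)(\alpha) \geq f(\alpha)$ yields \eqref{eq:f-dp-alt}.

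For the reverse direction I would use a layer-cake argument. Given any (possibly randomized) test $\phi : \Theta \to [0,1]$, write $\phi(\theta) = \int_0^1 \id\{\phi(\theta) > t\}\, dt$, and let $E_t = \{\theta : \phi(\theta) > t\}$. By Fubini, $\E_{M(S)}[\phi] = \int_0^1 \Pr[M(S) \in E_t]\, dt$ and similarly for $M(S')$. Applying \eqref{eq:f-dp-alt} pointwise in $t$ to each $E_t$ and integrating gives $\E_{M(S)}[\phi] \leq 1 - \int_0^1 f(\Pr[M(S') \in E_t])\, dt$. Because every trade-off function $f$ is convex, Jensen's inequality bounds the integral below by $f\left(\int_0^1 \Pr[M(S') \in E_t]\, dt\right) = f(\E_{M(S')}[\phi])$. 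If $\E_{M(S')}[\phi] \leq \alpha$, the non-increasing property of $f$ yields $1 - f(\E_{M(S')}[\phi]) \leq 1 - f(\alpha)$, giving $1 - \E_{M(S)}[\phi] \geq f(\alpha)$, which is exactly $T(M(S'), M(S))(\alpha) \geq f(\alpha)$. Swapping the roles of $S$ and $S'$ supplies the other ordering needed for $f$-DP.

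The main obstacle is the reverse direction: the hypothesis only controls indicator tests, whereas $f$-DP concerns arbitrary randomized tests, and the step that bridges this gap depends crucially on convexity and non-increasingness of $f$, both of which are built into the definition of a trade-off function in $f$-DP. The measurability bookkeeping around the layer-cake formula and the Fubini-Jensen chain is routine given that $\phi$ is Borel-measurable and $f$ is continuous, so I do not expect technical difficulty there.
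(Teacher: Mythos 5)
Your forward direction matches the paper's: fix $E$, apply $f$-DP to the indicator test $\id_E$, done. Your reverse direction, however, is noticeably more careful than the paper's. The paper writes "for any $\phi:\Theta\to[0,1]$ and $E=\{\theta:\phi(\theta)=1\}$ we have $\Pr[M(S)\in E]=\E_P[\phi]$," which is only valid when $\phi$ is an indicator; it then jumps directly from that indicator-test inequality to the full $f$-DP condition, which is defined as an infimum over \emph{all} randomized tests $\phi:\Theta\to[0,1]$. The extension from deterministic to randomized tests is left tacit. You make it explicit with the layer-cake decomposition $\phi(\theta)=\int_0^1 \id\{\phi(\theta)>t\}\,dt$, Fubini, and Jensen's inequality applied to the convex function $f$. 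This is the right way to close the gap, and it has the pedagogical advantage of surfacing exactly why the definition of a trade-off function requires $f$ convex and non-increasing: without convexity, dominance of $f$ over the deterministic-test error curve would not imply dominance over its convex hull, which is what the randomized-test trade-off curve actually is. So the two arguments reach the same endpoint, but yours supplies a step the paper assumes silently.
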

This form also appears in \citet{kifer2022bayesian}, and we provide a self-contained proof in \cref{app:proofs}. Another property of $f$-DP that is relevant to our work is that it implies $\eta$-TV privacy with $\eta = \max_{\alpha \in [0, 1]} (1 - f(\alpha) - \alpha)$ \cite{kaissis2024beyond}. As a result, even though TV privacy on its own is a weak notion of privacy~\cite{vadhan2017complexity}, any $f$-DP algorithm implies TV privacy for some $\eta \geq 0$.

\section{Bounding Operational Privacy Risks}
Next, we introduce formalizations of standard notions of risk in privacy-preserving statistics and learning: singling out, attribute inference, and data reconstruction. Our main results enable  the analysis of these risks under a unifying framework given by $f$-DP. 

\subsection{Threat Model}
\label{sec:threat}

We focus on the risk within the \emph{strong threat model}~\cite[see, e.g.,][]{balle2022reconstructing} in which the adversary has access to the workings of the privacy-preserving algorithm $M(\cdot)$, has access to the partial dataset $\bar S$ except for one target record $z$, and has side information about the remaining record $z$ in the form of a prior distribution $z \sim \prior$. 
Note that this threat model is slightly different from the standard threat model of SMIA (see \cref{sec:background}) in that we do not assume that the adversary knows the target record exactly. To quantify the adversary’s success rate, we consider expectations over the prior distribution $\prior$ as well as over the randomness of the algorithm, as we detail next.

Although this might seem like an overly strong model, providing security in this setting is desirable because it represents the worst-case scenario for attacks against privacy of individuals. 
In particular, as the adversary knows the partial dataset, it captures even the case when the adversary can poison the data~\cite{leemann2024gaussian}. 
By bounding the success of the strong adversary, we also bound the success of attacks against the considered privacy unit under other threat models that are weaker or more realistic for a specific application. 

In addition, we discuss existing risk notions which assume that the adversary does \emph{not} have access to the partial dataset, but knows the data distribution $S \sim \prior^n$. The success is then evaluated over $S \sim \prior^n$. We call this the \emph{average-dataset threat model}. 
We summarize the difference between these threat models in \cref{tab:threat-models} in \cref{app:extra-stuff}.

\subsection{Notions of Risk}
\label{sec:risks}

\paragraph{Singling-out risk.} 
We introduce formalizations of singling-out risk based on the notion of predicate singling out (PSO)~\cite{cohen2020towards}. 
For the purpose of our work, we use singling-out risk and re-identification risk interchangeably for brevity, although singling out is only a necessary condition for re-identification, and other approaches to defining re-identification (e.g., also capturing inference) exist~\cite{article29wp}.

\begin{definition}[PSO security]\label{def:pso}
    For a given $n > 1$, mechanism $M: \sD^n \rightarrow \Theta$, data distribution $\prior$ over $\sD$, weight $w \in [0, \nicefrac{1}{n}]$, and adversary $\calA_{n, M, \prior, w}: \Theta \rightarrow \sQ_{\prior,w}$ which outputs a predicate that aims to single out one record in the training dataset from the set of \emph{admissible predicates} $\sQ_{ \prior,w} \define \{ p ~\mid~ p: \sD \rightarrow \{0, 1\}, \E_\prior[p] \leq w \}$, we define the PSO success rate as follows:
    \begin{equation}
        \success_\text{PSO}(n, M, \prior,w; \calA) \define \Pr_{\substack{S \sim \prior^n \\ p \gets \calA_{n, M, \prior, w}(M(S))}}\left[ \sum_{z \in S} p(z) = 1\right],
    \end{equation}
    \vspace{-.5em}
    and the baseline success as:
    \begin{equation}
        \baseline_\text{PSO}(n, \prior,w) \define \sup_{p \in \sQ_{\prior,w}} \Pr_{S \sim \prior^n}\left[\sum_{z \in S} p(z) = 1 \right] = \sup_{p \in \sQ_{\prior,w}} n \E_{\prior}[p] \cdot (1 - \E_{\prior}[p])^{n-1}.
    \end{equation}
\end{definition}
Observe that PSO security does not assume that the adversary has access to the dataset beyond its distribution $S \sim \prior^n$, thus assumes the \emph{average-dataset threat model}. As we are interested in the strong-adversary model, we introduce a new notion of PSO security where we assume that the adversary has knowledge of the entire dataset except for one element:
\begin{definition}[SPSO security]\label{def:spso}
    For a given $n > 1$, $k \geq 2$, mechanism $M: 2^\sD \rightarrow \Theta$, partial dataset $\bar S \in \sD^{n-1}$, data distribution $\prior$ over a \emph{candidate set} $\sW \subseteq \sD$ with $|W| = k$, weight $w \in [0, 1]$, and adversary $\calA_{M, \bar S, \prior, w}: \Theta \rightarrow \sQ_{\bar S, \prior,w}$, which outputs a predicate that aims to single out the unknown record in the training dataset from the set of predicates $\sQ_{\bar S, \prior,w} \subseteq \{ p ~\mid~ p: \sD \rightarrow \{0, 1\}, \sum_{z' \in \bar S} p(z') = 0, \E_\prior[p] \leq w\}$, we define the strong PSO (SPSO) success rate as follows:
    \begin{equation}
        \success_\text{SPSO}(M, \bar S, \prior, w; \calA) \define \Pr_{\substack{z \sim \prior \\ p \gets \calA_{M, \bar S, \prior,w}(M(\bar S \cup \{z\}))}}\left[ p(z) = 1 \right],
    \end{equation}
    and the baseline success as:
    \begin{equation}
        \baseline_\text{SPSO}(\bar S, \prior,w) \define \sup_{p \in \sQ_{\bar S, \prior,w}} \Pr_{z \sim \prior}\left[p(z) = 1\right] = \sup_{p \in \sQ_{\bar S, \prior,w}} \E_P[p] \leq w.
    \end{equation}
\end{definition}
Unlike PSO, in which the adversary aims to single out any record in an unknown dataset, in SPSO, the adversary aims to find a predicate that matches only one record from a prior set of candidates.
We provide additional background on PSO in \cref{app:psos}.

\paragraph{Attribute inference and reconstruction attacks.} 
Following \citet{balle2022reconstructing}, we formalize attribute inference and data reconstruction attacks using the notion of reconstruction robustness:
\begin{definition}[SRR security]\label{def:srr}
    For a given $n \geq 1$, mechanism $M: 2^\sD \rightarrow \Theta$, data distribution $\prior$ over $\sD$, partial dataset $\bar S \in \sD^{n-1}$, loss function $\ell: \sD \times \sD \rightarrow \sR$, threshold $\gamma \in \sR$, and reconstruction attack $\calA_{M, \bar S, \cal D}: \Theta \rightarrow \sD$, we define the strong reconstruction robustness (SRR) success rate as follows:
    \begin{equation}
        \success_\text{SRR}(M, \bar S, \prior; \calA, \ell, \gamma) \define \Pr_{\substack{z \sim \prior \\ \hat z \gets \calA_{M, \bar S, \prior}(M(\bar S \cup \{z\}))}}\left[ \ell(z, \hat z) \leq \gamma \right],
    \end{equation}
    and the baseline success as
    $
        \baseline_\text{SRR}(\prior; \ell, \gamma) \define \sup_{\hat z \in \sD} \Pr_{z \sim \prior}\left[ \ell(z, \hat z) \leq \gamma \right].
    $
\end{definition}
Robustness against attribute inference attacks can be seen as a special case of SRR:
\begin{definition}[SAI security]\label{def:sai}
    For a given $n \geq 1$, mechanism $M: 2^\sD \rightarrow \Theta$, set of attributes $\sA = \{1, \ldots, k\}$, mapping from records to attributes $a: \sD \rightarrow \sA$, data distribution $\prior$ over $\sD$, partial dataset $\bar S \in \sD^{n-1}$, and attribute inference attack $\calA_{M, \bar S, \prior}: \Theta \rightarrow \sA$, the strong attribute inference (SAI) success rate is defined as:
    \begin{equation}
        \success_\text{SAI}(M, \bar S, \prior; \calA, a) \define \Pr_{\substack{z \sim \prior \\ \hat a \gets \calA_{M, \bar S, \prior}(M(\bar S \cup \{z\}))}}\left[ a(z) = \hat a\right],
    \end{equation}
    and the baseline success as
    $
        \baseline_\text{SAI}(\prior, a) \define \sup_{\hat a \in \sA} \Pr_{z \sim \prior}\left[ a(z) = \hat a \right].
    $
\end{definition}

We review the prior methods to bound these notions of risk under DP in \cref{app:prior-bounds}.
 
\subsection{Main Results: Unifying Bounds}
Our core contribution is a unifying relationship between the hypothesis-testing interpretation of DP via $f$-DP and the operational notions of privacy risk in the strong adversary model defined in \cref{sec:risks}. We achieve this via a general result that connects the expectation of an attack-specific query function $q(z, \theta)$ over the randomness of $\theta \sim M(S \cup \{z\})$ and $z\sim P$ to the maximum ``baseline'' expectation of the same function over only $z\sim P$:
\begin{restatable}{lemma}{avgtosupbaseline}\label{stmt:avg-to-sup-baseline}
    Suppose that $M: 2^\sD \rightarrow \Theta$ satisfies $f$-DP w.r.t.\@ either add-remove or replace-one relation. 
    Then, for any bounded function $q: \sD \times \Theta \rightarrow [0, 1]$, any partial dataset $\bar S \in \sD^{n-1}$ with $n \geq 1$, and any probability distribution $\prior$ over $\sD$, we have:
    \begin{equation}
        \E_{z \sim \prior} \E_{\theta \sim M(\bar S \cup \{z\})} \left[ q(z; \theta) \right] \leq 1 - f\left(\sup_{\theta \in \Theta} \E_{z \sim \prior} \left[ q(z; \theta) \right] \right).
        \label{eq:avg-to-sup-baseline}
    \end{equation}
\end{restatable}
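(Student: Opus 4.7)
The plan is to first promote Lemma~\ref{stmt:f-dp-alt} from indicator sets $E$ to arbitrary $[0,1]$-valued functions, then apply it pointwise in $z$ and integrate. The workhorse will be the following ``functional'' form of $f$-DP: for any $r: \Theta \to [0,1]$ and any $S \simeq S'$,
\begin{equation*}
    \E_{\theta \sim M(S)}[r(\theta)] \;\leq\; 1 - f\bigl(\E_{\theta \sim M(S')}[r(\theta)]\bigr).
\end{equation*}
To derive this, I would apply the layer-cake representation $\E[r(\theta)] = \int_0^1 \Pr[r(\theta) > t]\,dt$, invoke Lemma~\ref{stmt:f-dp-alt} on each super-level set $E_t = \{\theta : r(\theta) > t\}$, and then use Jensen's inequality on $[0,1]$ (viewed as a probability space under Lebesgue measure) together with the convexity of the trade-off function $f$ to pull $f$ outside the integral.

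With this in hand, fix a partial dataset $\bar S$ and, for each $z \in \sD$, apply the functional form with $r(\theta) = q(z;\theta)$. For the add-remove relation, take $S = \bar S \cup \{z\}$ and $S' = \bar S$, which are neighbors; for replace-one, fix any reference element $z_0 \in \sD$ and take $S' = \bar S \cup \{z_0\}$. Either way we obtain
\begin{equation*}
    \E_{\theta \sim M(\bar S \cup \{z\})}[q(z;\theta)] \;\leq\; 1 - f\bigl(\E_{\theta \sim M(S')}[q(z;\theta)]\bigr).
\end{equation*}

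Next I would take $\E_{z \sim \prior}$ on both sides. On the right-hand side, a second application of Jensen's inequality (again exploiting convexity of $f$) lets me push the outer $\E_z$ inside the $f$, yielding an upper bound of $1 - f\bigl(\E_z \E_{\theta \sim M(S')}[q(z;\theta)]\bigr)$. Since $q$ is nonnegative and bounded, Fubini swaps the two expectations to give $\E_{\theta \sim M(S')} \E_{z \sim \prior}[q(z;\theta)]$, which is bounded above by $\sup_{\theta \in \Theta} \E_{z \sim \prior}[q(z;\theta)]$. Applying the monotonicity of $1-f$ (which is nondecreasing because $f$ is nonincreasing) finishes the argument.

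The main obstacles are bookkeeping rather than conceptual. I need to make sure the two Jensen applications go in the right direction: the first one (to promote the set-based lemma) needs $f$ convex so that $\int f \geq f(\int)$; the second (to pull $\E_z$ inside $f$) uses the same convexity, so $\E_z[f(X(z))] \geq f(\E_z[X(z)])$, and then the minus sign in $1-f$ flips the inequality into the direction I want. I also have to handle the replace-one case by introducing an auxiliary $z_0$ whose choice is ultimately immaterial, since Fubini eliminates it once the supremum is taken. Finally I should check the boundary $n = 1$ (so $\bar S = \emptyset$), which is handled uniformly as long as $M$ is defined on all inputs of the relevant size.
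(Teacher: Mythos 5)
Your proof is correct, and its overall skeleton matches the paper's: apply a ``functional'' form of \cref{stmt:f-dp-alt} pointwise in $z$ to neighbouring datasets, take $\E_{z\sim\prior}$, push the expectation inside $f$ by Jensen (convexity of $f$), swap expectations by Fubini, pass to the supremum over $\theta$, and finish using monotonicity of $1-f$. The paper carries this out via an intermediate lemma (\cref{stmt:dataset-specific-generalization}) and \cref{stmt:avg-to-sup-f}, which is the same two-step reduction you describe.

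The one genuinely different sub-step is how you promote \cref{stmt:f-dp-alt} from indicator events $E$ to general $[0,1]$-valued functions $r$. You do this from scratch via the layer-cake formula $\E[r]=\int_0^1\Pr[r>t]\,dt$, applying \cref{stmt:f-dp-alt} to each super-level set $E_t$ and then a further Jensen in $t$ to pull $f$ outside the integral. The paper instead treats $q(z;\cdot)$ as the acceptance probability of a randomized binary post-processing $g_z$ (outputting $\mathsf{Bern}(q(z;\theta))$); since $f$-DP is closed under post-processing, one can then apply \cref{stmt:f-dp-alt} directly to the event $\{1\}$ of $g_z\circ M$, obtaining the functional bound with no extra Jensen. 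Your route is more elementary and self-contained (it does not invoke post-processing), at the cost of one more appeal to convexity; the paper's route is shorter once post-processing is taken as a black box, and more directly foreshadows the later reductions where $\calA$ itself is folded into the mechanism. Both are valid, and the rest of the argument---including your remark that the choice of $z_0$ in the replace-one case is immaterial once the supremum over $\theta$ is taken---matches the paper.
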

Intuitively, the function $q(z; \theta)$ can be seen as indicating the degree of success of some attack against the privacy of the record $z$ given the knowledge of a model $\theta$, on a scale from 0 to 1. We provide a proof of this statement, as well as of all the following results, in \cref{app:proofs}. 
Technically, the core steps of the proof involve the usage of the representation of $f$-DP in \cref{stmt:f-dp-alt}, and an application of Jensen's inequality to push the expectation over $z \sim \prior$ inside of  $f(\alpha)$. 

We recover the notions of risk in \cref{sec:risks} by 
taking relevant instantiations of $q(z; \theta)$, e.g., the indicator of a successful reconstruction event, $q(z; \theta) = \id[\ell(z, \calA(\theta)) \leq \gamma]$, in the case of reconstruction attacks. As a result, we get a unifying bound on attack success:
\begin{theorem}[Informal]\label{stmt:f-to-succ-informal}
    Suppose that $M(\cdot)$ satisfies $f$-DP w.r.t.\@ either add-remove or replace-one relation. For SPSO, SRR, and SAI, it holds that:
    \begin{equation}
        \success_{\lbrack \text{SPSO, SRR, SAI} \rbrack} \leq 1 - f(\baseline_{\lbrack \text{SPSO, SRR, SAI} \rbrack}).
    \end{equation}
\end{theorem}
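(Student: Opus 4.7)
The plan is to specialize \cref{stmt:avg-to-sup-baseline} to each of the three risk notions by choosing a suitable attack-specific query function $q: \sD \times \Theta \rightarrow [0,1]$ such that (i) the left-hand side of \cref{eq:avg-to-sup-baseline} equals the corresponding $\success$ rate, and (ii) the quantity inside $f$ on the right-hand side is upper-bounded by the corresponding $\baseline$ rate. Since trade-off functions $f$ are non-increasing, $1 - f$ is non-decreasing; combined with \cref{stmt:avg-to-sup-baseline}, this gives $\success \leq 1 - f(\sup_\theta \E_z[q(z;\theta)]) \leq 1 - f(\baseline)$, which is the claim.

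The choices of query function are natural. For SRR, I take $q(z; \theta) \define \Pr_{\hat z \sim \calA(\theta)}[\ell(z, \hat z) \leq \gamma]$, absorbing the adversary's internal randomness into $q$; for SAI, $q(z; \theta) \define \Pr_{\hat a \sim \calA(\theta)}[a(z) = \hat a]$; and for SPSO, $q(z; \theta) \define \E_{p \gets \calA(\theta)}[p(z)]$. Each is $[0,1]$-valued, so \cref{stmt:avg-to-sup-baseline} applies. By Fubini (the expectations over $z$, $\theta$, and the adversary's randomness can be reordered freely), the LHS of \cref{eq:avg-to-sup-baseline} coincides with the definition of $\success$ in each case. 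For the RHS, fixing $\theta$ pins down a (possibly randomized) adversary output distribution, so $\E_z[q(z; \theta)]$ is a mixture of quantities of the form $\E_z[p(z)]$ for SPSO, $\Pr_z[\ell(z, \hat z) \leq \gamma]$ for SRR, or $\Pr_z[a(z) = \hat a]$ for SAI; by linearity, each such mixture is bounded by the supremum of its summands, and each summand is in turn dominated by the respective $\baseline$---since every realized adversary output lies in the admissible class $\sQ_{\bar S, \prior, w}$, $\sD$, or $\sA$ used to define the baseline.

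The main (but minor) obstacle is checking admissibility in the SPSO case, where the mixture $q(\cdot; \theta) = \E_{p \gets \calA(\theta)}[p(\cdot)]$ is a $[0,1]$-valued function rather than a $\{0,1\}$-valued predicate, and thus does not itself belong to $\sQ_{\bar S, \prior, w}$. The argument nevertheless goes through: we do not need $q(\cdot; \theta) \in \sQ_{\bar S, \prior, w}$, only $\E_\prior[q(\cdot; \theta)] \leq \baseline_\text{SPSO}$, which follows from linearity because every realized predicate $p$ satisfies $\E_\prior[p] \leq w \leq \baseline_\text{SPSO}$. For SRR and SAI the baseline ranges over arbitrary $\hat z \in \sD$ or $\hat a \in \sA$, so no analogous check is needed, and the bound follows directly.
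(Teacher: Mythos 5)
Your proof is correct and takes essentially the same route as the paper: both specialize \cref{stmt:avg-to-sup-baseline} with an attack-specific query function and then use monotonicity of $1-f$. The only cosmetic difference is that the paper first post-processes $M$ by $\calA$ (so $q$ becomes a $\{0,1\}$-valued indicator on the adversary's output space, and the supremum inside $f$ equals the baseline exactly), whereas you keep $M$ as the mechanism and absorb $\calA$'s randomness into $q$; your observation about SPSO admissibility is a nice check, but the paper's post-processing formulation sidesteps it entirely since the indicator $\id[p(z)=1]$ ranges over genuine predicates $p \in \sQ_{\bar S, \prior, w}$.
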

We provide a precise statement including the missing arguments for each risk notion in \cref{app:proofs}.

\paragraph{Normalized success.}
Multiple prior works have observed that privacy analyses based on attack success alone can be misleading~\cite{guerra2023analysis,salem2023sok,cohen2025data}.
Thus, for all notions of risk, we consider an additional representation of risk in terms of success normalized by the baseline via additive \emph{advantage}:
$
    \advantage \define \success - \baseline,
$
where we drop the arguments for brevity.
As a consequence of \cref{stmt:f-to-succ-informal}, we obtain:
\begin{theorem}[Informal]\label{stmt:f-to-adv}
    Suppose that $M(\cdot)$ satisfies $f$-DP w.r.t.\@ either add-remove or replace-one relation. For SPSO, SRR, and SAI, it holds that:
    \begin{equation}
        \advantage_{\lbrack \text{SPSO, SRR, SAI} \rbrack} \leq 1 - f(\baseline_{\lbrack \text{SPSO, SRR, SAI} \rbrack}) - \baseline_{\lbrack \text{SPSO, SRR, SAI} \rbrack}.
    \end{equation}
\end{theorem}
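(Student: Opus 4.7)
The plan is to derive this bound as an immediate algebraic consequence of \cref{stmt:f-to-succ-informal}. By the definition $\advantage \define \success - \baseline$, we have
\begin{equation*}
    \advantage_{[\text{SPSO, SRR, SAI}]} = \success_{[\text{SPSO, SRR, SAI}]} - \baseline_{[\text{SPSO, SRR, SAI}]},
\end{equation*}
and substituting the success bound from \cref{stmt:f-to-succ-informal} yields the claim by simple subtraction. In this sense the theorem really is a corollary of the preceding one; no new probabilistic argument is required.

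The only care needed is to verify, in the formal version stated in the appendix, that the $\baseline$ quantity subtracted on both sides refers to exactly the same object. Since \cref{stmt:f-to-succ-informal} is obtained by instantiating the query function $q(z; \theta)$ in \cref{stmt:avg-to-sup-baseline} with the success indicator for each risk notion (e.g.\@ $q(z;\theta) = \id[\ell(z, \calA(\theta)) \leq \gamma]$ for SRR, $q(z;\theta) = \id[p(z) = 1]$ with $p \gets \calA(\theta)$ for SPSO, and analogously for SAI), I would check that the expectation $\E_{z \sim \prior}\E_{\theta}[q(z;\theta)]$ is exactly the $\success$ quantity in each definition, and that $\sup_\theta \E_{z \sim \prior}[q(z;\theta)]$ upper-bounds the corresponding $\baseline$. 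The latter is immediate: for any fixed $\theta$, the adversary's output (a predicate, reconstruction, or attribute) is a deterministic function of $\theta$, so $\sup_\theta \E_{z \sim \prior}[q(z;\theta)]$ is at most the supremum over all admissible such outputs, which is the $\baseline$ in each definition.

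Given this identification, the formal proof is: apply \cref{stmt:f-to-succ-informal} to obtain $\success \leq 1 - f(\baseline)$, then subtract $\baseline$ from both sides. There is no genuine obstacle here; the main ``work'' is bookkeeping to confirm the three instantiations line up with the definitions in \cref{def:spso,def:srr,def:sai}. I would write the formal statement as a single display for all three risks and point back to the proof of \cref{stmt:f-to-succ-informal} for the nontrivial content.
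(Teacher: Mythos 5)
Your proposal is correct and matches the paper's own reasoning: the paper introduces this theorem with the phrase \enquote{As a consequence of \cref{stmt:f-to-succ-informal}, we obtain:} and provides no separate proof, treating it exactly as the algebraic corollary you describe (subtract $\baseline$ from both sides of $\success \leq 1 - f(\baseline)$). Your additional bookkeeping check that the $\baseline$ terms coincide across the two statements is sound and is implicitly handled in the paper by applying \cref{stmt:avg-to-sup-baseline} to the post-processed mechanism $\calA \circ M$, for which the supremum over the mechanism's output space is precisely the supremum defining $\baseline$ in \cref{def:spso,def:srr,def:sai}.
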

\cref{stmt:f-to-adv} trivially extends to other ways to normalize attack success, such as $\frac{\success - \baseline}{1 - \baseline}$~\cite[see, e.g.,][]{guo2023analyzing} or $\frac{1 - \success}{1 - \baseline}$~\cite[see, e.g.,][]{chatzikokolakis2023bayes}. 
In the rest of the paper, we use additive advantage to express risk.

\paragraph{Baseline-independent guarantees.}
We can also obtain a bound on advantage that is independent of the baseline by just taking the maximum over all baselines. For this, we use the notion of $\eta$-TV privacy, which any $f$-DP algorithm satisfies for some $\eta$ (see \cref{sec:background}).
\begin{theorem}[Informal]\label{stmt:tv-to-adv-informal}
    Suppose that the algorithm $M(\cdot)$ satisfies $\eta$-TV privacy w.r.t.\@ either the add-remove or replace-one relation. 
    Then, the advantage of SPSO, SRR, and SAI is bounded:
    \begin{equation}
        \advantage_{\lbrack \text{SPSO, SRR, SAI} \rbrack} \leq \eta.
    \end{equation}
\end{theorem}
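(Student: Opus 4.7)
The plan is to derive this as a direct corollary of \cref{stmt:f-to-adv} by observing that $\eta$-TV privacy is a special instance of $f$-DP. Indeed, by definition $\eta$-TV privacy coincides with $(0, \eta)$-DP, which via \cref{eq:dp-to-f} is equivalent to $f$-DP with the trade-off function
\[
f_\eta(\alpha) \define \max\{0,\ 1 - \eta - \alpha\}.
\]
Because the TV-privacy hypothesis is assumed with respect to one of the two standard neighbourhood relations, this reduction yields $f_\eta$-DP with respect to the \emph{same} relation, so the hypotheses of \cref{stmt:f-to-adv} are met for each of SPSO, SRR, and SAI.

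First I would invoke \cref{stmt:f-to-adv} with this particular $f_\eta$ for each of the three risk notions, obtaining
\[
\advantage_{[\text{SPSO, SRR, SAI}]} \leq 1 - f_\eta(\baseline_{[\text{SPSO, SRR, SAI}]}) - \baseline_{[\text{SPSO, SRR, SAI}]}.
\]
Next, I would bound the right-hand side uniformly by $\eta$ via a short case split on the baseline $b \in [0,1]$: if $b \leq 1 - \eta$, then $f_\eta(b) = 1 - \eta - b$, so $1 - f_\eta(b) - b = \eta$; if $b > 1 - \eta$, then $f_\eta(b) = 0$, so $1 - f_\eta(b) - b = 1 - b < \eta$. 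In either case the bound collapses to the baseline-independent guarantee $\advantage \leq \eta$, as claimed. This also matches the general characterisation $\eta = \max_\alpha (1 - f(\alpha) - \alpha)$ mentioned in \cref{sec:background}.

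A direct alternative, which sidesteps \cref{stmt:f-to-adv} entirely, would start from the definition of TV privacy: for any bounded $q: \sD \times \Theta \rightarrow [0, 1]$ and any fixed $z \in \sD$ one has $\E_{\theta \sim M(\bar S \cup \{z\})}[q(z; \theta)] \leq \E_{\theta \sim M(\bar S)}[q(z; \theta)] + \eta$. Taking $\E_{z \sim \prior}$ on both sides, applying Fubini on the right to swap the order of expectations, and bounding $\E_{\theta \sim M(\bar S)} \E_{z \sim \prior}[q(z; \theta)]$ by $\sup_{\theta} \E_{z \sim \prior}[q(z; \theta)]$ gives $\success \leq \baseline + \eta$ directly, after instantiating $q$ to the singling-out predicate indicator, the reconstruction loss indicator $\id[\ell(z, \calA(\theta)) \leq \gamma]$, or the attribute-match indicator $\id[a(z) = \calA(\theta)]$ respectively. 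There is no substantive obstacle in either approach; the only care needed is to make sure the $\sup_\theta$ step lines up with the baseline definitions in \cref{def:spso,def:srr,def:sai}, which it does since each baseline is already the supremum over the appropriate decision space and $\E_\theta$ can only weaken a supremum.
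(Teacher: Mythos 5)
Your proposal is correct and matches the paper's route: the paper likewise deduces $\eta$-TV $\Rightarrow$ $f$-DP, invokes the baseline-dependent bound from \cref{stmt:avg-to-sup-baseline} (which underlies \cref{stmt:f-to-adv}), and then bounds $1 - f(\baseline) - \baseline \leq \max_\alpha(1 - f(\alpha) - \alpha) \leq \eta$ via \cref{stmt:dps-to-tv}, exactly the same maximization over baselines that your case split carries out explicitly for $f_\eta$. The direct TV-based alternative you sketch is a valid, more elementary shortcut, but the primary argument is the one the paper uses.
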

This result is useful as (1) it enables to measure and communicate the worst-case risk \emph{across all baselines using a single number,} and (2) we can use standard software tools for analyzing privacy in common algorithms such as DP-SGD~\cite{abadi2016deep} by evaluating the DP guarantee for $(\varepsilon = 0, \delta = \eta)$-DP without the need to instantiate the full $f$ curve. We detail on these properties next.

\subsection{Discussion of the Theoretical Results}
\label{sec:bound-discussion}
\paragraph{Computing the bounds in practice.} 
Effective usage of our baseline-specific bounds in \cref{stmt:f-to-succ-informal,stmt:f-to-adv} requires an analysis of a privacy-preserving algorithm in terms of $f$-DP. 
Although different variants of DP imply $f$-DP, e.g., via \cref{eq:dp-to-f} for a single pair of $(\varepsilon, \delta)$ guarantees, such conversions are loose~\cite{kulynych2024attack}. 
For simple mechanisms such as Gaussian or Laplace~\cite{dwork2014algorithmic}, exact trade-off curves are known~\cite{dong2019gaussian}. 
For more complex algorithms such as DP-SGD~\cite{abadi2016deep}, there are two ways for obtaining the corresponding curve.

First, we can estimate it using \cref{eq:dp-to-f} from the privacy profile of the algorithm~\cite{balle2018privacy}, i.e., the set of all attainable $(\varepsilon, \delta)$-DP pairs. This method has been used for privacy auditing previously~\cite{nasr2023tight}. Second, it is possible to analyze DP-SGD or other algorithms that are compositions of (subsampled) Gaussian and Laplace mechanisms using a direct method~\cite{kulynych2024attack}. Both approaches provide tight analyses when using state-of-the-art accounting tools as a backbone, such as the Connect-the-Dots accountant~\cite{doroshenko2022connect}. In particular, prior work~\cite{nasr2023tight} has shown that the upper bounds on privacy leakage obtained with the modern accounting techniques can be nearly reached by empirical attacks in the SMIA threat model.

\paragraph{Tunability.} 
Our bound on advantage in \cref{stmt:f-to-adv} enables practitioners to tune the baseline risk. 
In certain data release scenarios, it might be useful to simulate relevant re-identification or attribute inference attacks, estimate their plausible baseline risk, and consider the risk of the release with respect to such a baseline, i.e., $\advantage \leq 1 - f(\baseline) - \baseline$. As an example, consider releasing outputs of a mechanism trained on tabular medical data which contains a column corresponding to a patient’s HIV status. In this case, we might want to evaluate the risk of an adversary inferring the HIV status under standard threat models considered when releasing medical data~\cite{el2010risk} such as ``marketer'' or ``journalist''. For instance, the baseline risk of a ``marketer'' adversary---who does not target any specific individual---could be modeled as guessing based on the prevalence of HIV in the general population. The baseline risk of a ``journalist'' adversary---who is assumed to be able to obtain side information on the target from public sources---could be modeled as guessing based on a target patient's demographics. We leave the derivation of application-specific procedures and guidelines for modeling baseline risk to future work.

In high-risk scenarios, e.g., public model releases, it might be more desirable to ensure security against the worst-case attacks. In this case, it is more appropriate to use the baseline-independent bound of $\advantage \leq \max_{\baseline \in [0, 1]} (1 - f(\baseline) - \baseline) \leq \eta$ from \cref{stmt:tv-to-adv-informal}.

\paragraph{On tightness of the bounds.} Our bounds are never vacuous, and are saturated for perfectly private and blatantly non-private mechanisms. However, they can be significantly tightened under additional assumptions on the adversary's threat model. In particular, we show an example of such strengthening in \cref{app:specialized-threat-models} for an important case of \emph{binary attribute inference} with a non-uniform prior, and empirically show that it provides tighter bounds in this setting than prior work~\cite{guo2023analyzing}. Although there exist cases under which the unified bound can be tightened, we conjecture that it is tight in the most general case, i.e., there exist non-trivial settings under which \cref{stmt:f-to-succ-informal} holds with equality. We leave the verification of this conjecture, and the identification of settings under which equality is achieved in case it is true, to future work.

\paragraph{Beyond privacy.} 
Our results extend to statistical learning theory. 
Concretely, in \cref{app:generalization}, we show that our derivations can be extended to a new kind of generalization bound. Informally, we show that for the on-average train set error $\mathsf{err}_{\text{tr}}$ and on-average generalization error $\mathsf{err}_{\text{test}}$~\cite{shalev2010learnability} of a learning algorithm satisfying $f$-DP, it holds that $\mathsf{err}_{\text{test}} \leq 1 - f(\mathsf{err}_{\text{tr}})$. 
Moreover, in \cref{app:extra-bounds}, we show that our derivations imply a bound on memorization~\cite{feldman2019does,zhang2023counterfactual}. 
We believe these results are of independent interest.

\section{Experimental Evaluation}
\label{sec:exp}

\subsection{Bounds Comparison}
\label{sec:bounds-comparison}

\paragraph{Singling-out risk.} 
We compare prior bounds on singling-out risk in the average-dataset threat model from \citet{cohen2020towards} (detailed in \cref{app:related}) to our bound on SPSO in \cref{stmt:f-to-adv}, which operates under a different threat model. To provide an apples-to-apples comparison, we fix the weight $w$ for both threat models. The baseline probability for PSO is $\baseline = n \cdot w \cdot (1 - w)^{n - 1}$, whereas it is $\baseline = w$ for SPSO. 
We use standard Laplace and Gaussian mechanisms~\cite{dwork2014algorithmic} with varying noise levels, and simulate different dataset sizes $n \in \{500, 1000, 5000\}$ with a fixed weight $w = \nicefrac{1}{5000}$.
We use $\delta = 10^{-5}$ to derive $\varepsilon$ for the Gaussian mechanism, and analyze DP under replace-one relation. 

We show the results for the Gaussian mechanism in \cref{fig:pso-gaussian} (the results for Laplace mechanism are similar, shown in \cref{app:extra-stuff}). Unlike the PSO bounds, our bounds provide meaningful guarantees even in the $\varepsilon \in [10,20]$ regime,  and saturate only around $\varepsilon \approx 35$. Note that our SPSO bound based on $f$-DP and the strong adversary threat model shows \emph{lower} risk than the PSO bound in the average-dataset threat model based on $(\varepsilon, \delta)$-DP. In \cref{app:extra-bounds}, we derive a novel PSO guarantee based on $f$-DP, but we observe no meaningful difference from the $(\varepsilon, \delta)$-DP bound in \cref{fig:pso-gaussian} (see \cref{app:extra-stuff}). We hypothesize that both phenomena are due to the looseness in the derivations of the average-dataset PSO results that are not present in the SPSO derivation. We leave the derivation of tighter PSO bounds as future work. 
\begin{figure}
    \centering
    \includegraphics[width=\figwidth]{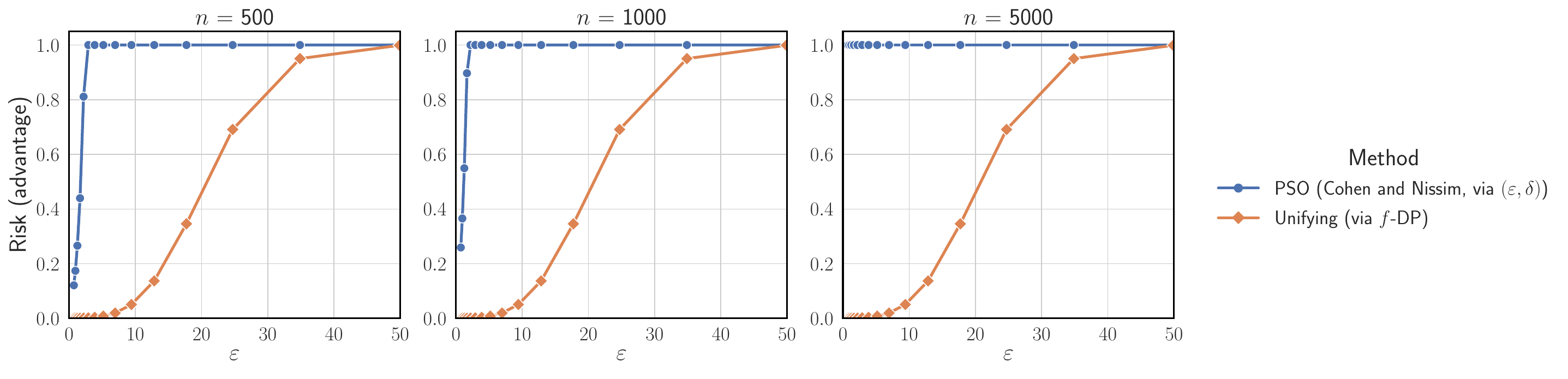}
    \vspace{-2em}
    \caption{\textbf{Our bound on predicate singling out in the strong threat model (SPSO) is always non-vacuous, and, surprisingly, shows significantly lower risk than bounds in the PSO threat model.} The risk is $\advantage = \success - \baseline$ with fixed given $\baseline$ for Gaussian mechanism with $\varepsilon$ calculated for $\delta = 10^{-5}$.
    }
    \label{fig:pso-gaussian}
\end{figure}
\paragraph{Reconstruction robustness.} Next, we compare prior bounds on SRR from \citet{balle2022reconstructing} that are based on R\'enyi DP (RDP)~\cite{mironov2017renyi} and zero-concentrated DP (zCDP)~\cite{dwork2016concentrated,bun2016concentrated}, detailed in \cref{app:prior-bounds}, to our unifying bounds in \cref{stmt:f-to-adv}. We use the Gaussian mechanism, analyzed under the add-one relation. We vary the noise parameter $\sigma$, and compute $\varepsilon$ at $\delta = 10^{-5}$. For the SRR bounds, we evaluate separately the bound for a single RDP guarantee $(t,\varepsilon)$ with $t = 2$, 
and the bound for the tighter zCDP guarantee. 
\cref{fig:rero-gaussian} demonstrates that our bound always shows lower risk than prior approaches.

\begin{figure}
    \centering
    \includegraphics[width=\figwidth]{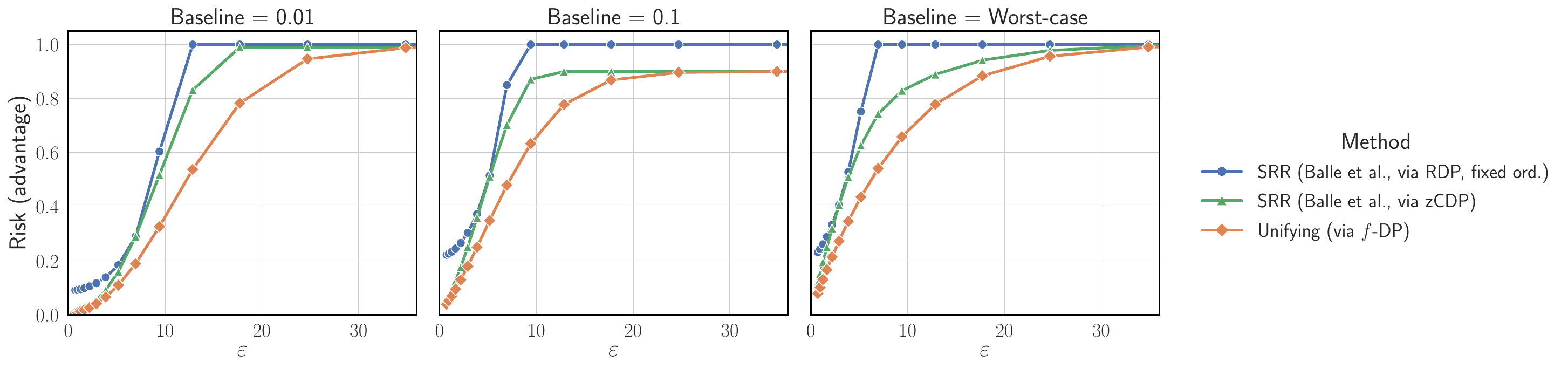}
    \vspace{-2em}
    \caption{\textbf{Our bound on reconstruction robustness shows lower risk than prior bounds.} We show risk as $\advantage = \success - \baseline$ for three different baseline values for Gaussian mechanism with $\varepsilon$ calculated for $\delta = 10^{-5}$. 
    }
    \label{fig:rero-gaussian}
    \vspace{-.5em}
\end{figure}

\subsection{Case Studies}
We present two case studies which showcase how our results can be used to interpret and calibrate realistic DP mechanisms. Additionally, in \cref{app:exp-details}, we discuss another application of our results to a DP statistical-query answering system~\cite{gaboardi2020programming}.

\paragraph{Calibrating noise to reconstruction risk in deep learning.} It is possible to partially reconstruct training-data records from observing outputs or weights of classification models~\cite{balle2022reconstructing} and language models~\cite{carlini2022quantifying}.
In this case study, we assume the modeler fine-tunes a language model for text sentiment classification on the SST-2 dataset~\cite{socher2013recursive}. In order to limit the privacy risk to a given threshold, e.g., ensure at most $0.15$ increase in the probability of successful reconstruction under the worst-case baseline, the modeler runs DP-SGD with attack-aware noise calibration~\cite{kulynych2024attack}.
To simulate this, we fine-tune multiple GPT-2 (small) models~\cite{radford2019language} using a DP version of LoRA~\cite{yu2021differentially} (we provide technical details in \cref{app:exp-details}). 
We obtain the $f$ curve under the add-remove relation for each model using the direct method~\cite{kulynych2024attack}, and apply \cref{stmt:f-to-adv} to measure risk. We compare this to the RDP-based analysis from \citet{balle2022reconstructing}, where we optimize RDP over a grid (see \cref{app:prior-bounds}).

\cref{fig:bound-at-a-glance} shows that if we calibrate the noise scale to a given level of maximum attack risk, our unifying bound enables one to choose a lower noise scale at the same level of risk, which, in turn, results in better classification accuracy. \cref{fig:bound-at-a-glance} (left) shows the risk as a function of noise scale, and \cref{fig:bound-at-a-glance} (middle) shows the corresponding accuracy when training using that noise scale. The figure demonstrates that for the worst-case risk target of $0.15$, we can increase the accuracy from $52\%$ to $70\%$ using our analysis, as opposed to an RDP-based one. Notably, the increase in accuracy is due to a more precise privacy calculation alone, thus does not come with additional costs. In \cref{app:extra-bounds}, we detail on a variant of this case study with CIFAR-10~\cite{krizhevsky2009learning}.

\paragraph{Analyzing risk in US Census release.} The US Census bureau has released the 2020 Census data using DP~\cite{abowd2022census}.
In this case study, we aim to analyze this release in terms of operational privacy risks using our bounds, leveraging a recent analysis of the Census algorithm in terms of $f$-DP~\cite{su20242020}. 
The Census data consists of eight different levels of granularity, ranging from the US-wide level to block-level, with different privacy guarantees at each level. The standard analysis uses zCDP, thus we compare \cref{stmt:f-to-adv} to the analysis of data reconstruction risk via zCDP, as in our previous experiments in \cref{sec:bounds-comparison}.

In \cref{fig:census}, we present the results for each granularity level under the worst-case baseline, which show that our unifying bound indicates up to $33\%$ less risk. The results for other baselines are similar, shown in \cref{fig:census-multiple-baselines}, \cref{app:extra-stuff}. Moreover, in \cref{fig:bound-at-a-glance} (right) we show the risk as a function of baseline for the state-level release as an illustration for analyzing risk at different baselines.

\begin{figure}
    \vspace{-1em}
    \centering
    \includegraphics[width=\figwidth]{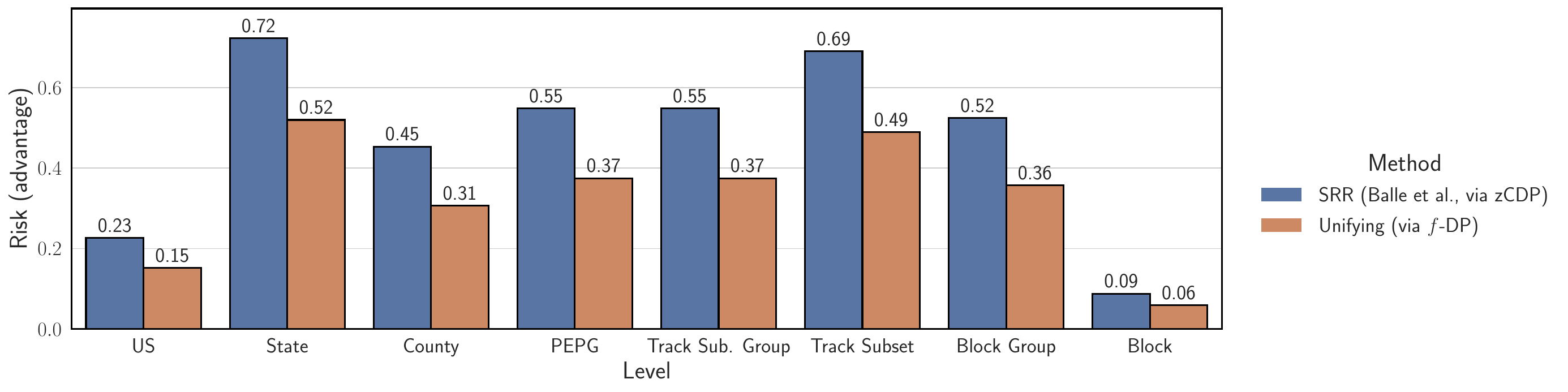}
    \caption{\textbf{Our method shows up to 33\% lower worst-case reconstruction risk in the US 2020 Census release than the prior method.} x axis shows granularity levels of the release, y axis shows risk of attacks as $\success - \baseline$ for the worst-case baseline.}
    \label{fig:census}
    \vspace{-.5em}
\end{figure}

\section{Related Work}
\label{sec:related}
Prior work has extensively studied bounds on attack risks in DP. \citet{balle2022reconstructing} introduced a formalization of robustness to reconstruction attacks and provided bounds using $(\varepsilon, 0)$-DP, RDP and zCDP. 
Compared to these, our bounds are tighter, as we demonstrate in \cref{sec:exp}. Under specialized threat models, these bounds were subsequently improved by \citet{guo2023analyzing}, and by \citet{guerra2023analysis} for the case of $(\varepsilon, 0)$-DP. We study the bounds by \citet{guo2023analyzing}, which specifically assume a discrete prior $\prior$, in \cref{app:specialized-threat-models}.
\citet{cherubin2024closed} provided closed-form bounds on data reconstruction and membership inference advantage for DP-SGD using an approximate analysis. Our results do not rely on approximations and instead use the tight privacy analysis in terms of $f$-DP based on the state-of-the-art DP-SGD accountants, as detailed in \cref{sec:bound-discussion}.

\citet{hayes2024bounding} used a decision-theoretic view of DP via approximations to provide a bound on reconstruction robustness. 
There are several core differences compared to our results: (1) they provide a Monte-Carlo sampling-based method rather than a provable bound, (2) their method is specific to DP-SGD, and (3) it only covers reconstruction attacks. In contrast, our work focuses on the fundamental connection between different standard types of risks and the $f$-DP framework, as opposed to providing a method to evaluate a specific type of privacy risk in DP-SGD. Our approach enables us to obtain provable bounds on risks for general DP mechanisms. Even though the theoretical underpinnings of our approaches are different, the outputs of the algorithm from \citet{hayes2024bounding} will converge in the limit of infinite Monte-Carlo samples to our $f$-DP bound in the setting of reconstruction risk in DP-SGD. 

In \cref{app:related}, we provide an additional overview of the alternative Bayesian approaches to interpret DP and other approaches to unify privacy risks. 

\paragraph{Bibliographic note.} This paper subsumes the results from a prior workshop paper~\cite{kaissis2023bounding} presented at the 2023 Theory and Practice of Differential Privacy (TPDP) workshop.

\section{Concluding Remarks}
\label{sec:conclusions}

This paper presents a unifying framework for analyzing re-identification, attribute inference, and data reconstruction risks---realistic privacy threats facing the releases of statistics, models, or de-identified data---using $f$-differential privacy, a decision-theoretic approach to differential privacy. We demonstrated that our unifying framework is applicable to privacy-preserving machine-learning algorithms such as DP-SGD, and statistical applications such as the US Census data release.

\paragraph{Impact.} We have shown that the existing approaches to mapping DP parameters to risk are imprecise. As a result, in settings where ensuring a certain maximum level of operational risk is required---as opposed to ensuring a target value of $\varepsilon$---practitioners would need to add more noise than is necessary. Not only does this hurt overall utility, but has negative effects on fairness~\cite{bagdasaryan2019differential} and prediction consistency~\cite{kulynych2023arbitrary}.
Our approach provides a more precise way to estimate maximum risk, enabling practitioners to obtain better utility at the same level of target risk.

\paragraph{$f$-DP for analyzing operational risks.} Our empirical and theoretical results demonstrated that the decision-theoretic view of DP via the $f$-DP framework is a useful tool for analyzing privacy-preserving algorithms in terms of operational risks. As we have shown, it enables both substantially tighter characterizations of operational risk compared to other standard approaches such as Rényi DP or Concentrated DP, and easy-to-use unified analyses of various types of risk.

\paragraph{Optimality.} Although our bounds outperform prior methods to measure risk in DP, as we show in \cref{app:specialized-threat-models}, they can be significantly strengthened under additional assumptions on the adversarial model, e.g., by assuming the setting of binary attribute inference. We leave finding other practically relevant settings in which the bounds can be further tightened---either under specific attack configurations and priors like in \cref{app:specialized-threat-models} or under relaxed threat models~\cite[see, e.g.,][]{kaissis2023optimal,swanberg2025beyond}---to future work.

\paragraph{Importance of the privacy unit.} Our privacy risk bounds have to be interpreted in terms of the underlying privacy unit. For instance, when training language models on open internet data, it is often practically infeasible to define a neighborhood relation that protects the privacy of \emph{individuals}, as an individual's sensitive data can appear in various forms across multiple documents~\cite{brown2022does}. In such settings, a more feasible privacy unit is a single \emph{document}~\cite{sinha2025vaultgemma}. Thus, our bounds in this context would quantify the risk to a single \emph{document} rather than to an \emph{individual}.

\paragraph{The need for transparent processes to prevent baseline gaming.} Our methods provide a way to adjust the level of risk to a given adversarial setting by modeling the baseline risk. This is a strength as it enables to measure the risk of relevant threats, but also could facilitate privacy theater~\cite{smart2022understanding} if the communicated or calibrated risk is specific to a baseline that is too low. In this case, the risk estimate could appear lower than it actually is. Trustworthy usage of our baseline-specific bounds requires ensuring that the baseline is well-justified and representative of realistic threats.

\clearpage

\section*{Acknowledgments}
{\footnotesize
The authors would like to thank Thanh-Long Tran for spotting the missing $\Lambda(\theta) = t$ case in the original version of the proof of \cref{stmt:f-dp-alt} in \cref{app:proofs}.
This project is supported by the U.S. Department of Energy, Office of Science, Office of Advanced Scientific Computing Research, Department of Energy Computational Science Graduate Fellowship under Award Number DE-SC0022158, by Swiss National Science Foundation under Award Numbers 10003518 and 237378, and is part of the SYNTHIA project. SYNTHIA (Synthetic Data Generation framework for integrated validation of use cases and AI healthcare applications) is supported by the Innovative Health Initiative Joint Undertaking (IHI JU) under grant agreement No. 101172872. Thus, the project is partially funded by the European Union, the private members, and those contributing partners of the IHI JU. Views and opinions expressed are however those of the authors only and do not necessarily reflect those of the aforementioned parties. Neither of the aforementioned parties can be held responsible for them.
}

\bibliographystyle{unsrtnat}
\bibliography{main}

\clearpage

\appendix
\crefalias{section}{appendix}
\section{Extended Background}\label{app:background}
This section presents a more extensive overview of the background.

\subsection{Differential Privacy and its Variants}

\paragraph{Classical differential privacy.} 
Differential privacy (DP) is a formal notion of privacy applicable to releases of statistical information or machine learning models:
\begin{definition}[\citealp{dwork2006calibrating}] A mechanism $M: 2^\sD \rightarrow \Theta$ satisfies $(\varepsilon, \delta)$-DP if for any $S \simeq S'$ we have
$
    \mathsf{E}_{e^\varepsilon}(P \parallel Q) \leq \delta,
$
where $P$ and $Q$ are the respective probability distributions of $M(S)$ and $M(S')$, and the \emph{hockey-stick divergence} between probability distributions is defined as follows:
\begin{equation}
    \mathsf{E}_{\gamma}(P \parallel Q) \define \sup_{E \subseteq \Theta} P(E) - \gamma Q(E).
\end{equation}
We refer to the case when $\delta = 0$ as \emph{pure DP}.
\end{definition}

Realistic DP mechanisms satisfy a potentially infinite collection of different $(\varepsilon, \delta)$-DP guarantees. To analyze this collection, we make use of a notion of the privacy profile:
\begin{definition}[\citealp{balle2018privacy}]
    A mechanism $M(\cdot)$ has a privacy profile $\delta(\varepsilon)$ if for every $\varepsilon \in \mathbb{R}$, it satisfies $(\varepsilon, \delta(\varepsilon))$-DP. 
\end{definition}

\paragraph{Strong-adversary membership inference.}
DP can be completely characterised via a constraint on the success rate of \emph{membership inference attacks}, which aim to determine whether a given record was part of the input dataset $S$ based on the output of $M(S)$.
Formally, given a sensitive record $z \in \sD$, and a partial dataset $\bar S \in \sD^{n - 1}$, consider a probability distribution of the algorithm's outputs when the record is not part of the dataset, $P(E) \define \Pr_{\theta \sim M(\bar S)}[\theta \in E]$, and the respective distribution when the record is present in the data, $Q(E) \define \Pr_{\theta \sim M(\bar S \cup \{z\})}[\theta \in E]$. 
In a \emph{strong-adversary membership inference attack}~\cite[see, e.g.,][]{nasr2021adversary}, given knowledge of the mechanism $M(\cdot)$, its output $\theta$, a target record $z$, and a partial dataset $\bar S$, the adversary aims to determine whether the record was part of the training dataset used to produce $\theta$ or not.
In other words, the attack is a hypothesis test which aims to distinguish between two hypotheses:
\begin{equation}\label{eq:app-strong-mia-hypothesis-test}
H_0: \theta \sim P, \text{ and } H_1: \theta \sim Q.
\end{equation}
``Strong'' refers to the fact that the adversary knows all of the partial dataset $\bar S$.%
We discuss this threat model further in \cref{sec:threat}.

\paragraph{Differential privacy as hypothesis testing.} 
For a given hypothesis test (equivalent to a membership inference attack) $\phi: \Theta \rightarrow \{0, 1\}$ which returns $0$ when the guess is $H_0$, and $1$ when the guess is $H_1$, we can quantify the adversary's success by characterizing the error rates of the test~\cite{wasserman2010statistical, kairouz2015composition, dong2019gaussian}:
\begin{equation}
    \alpha_\phi \define \E_P[\phi], \quad \beta_\phi \define 1 - \E_Q[\phi],
\end{equation}
where $\alpha_\phi$ and $\beta_\phi$ are the false positive rate (FPR) and false negative rate (FNR) respectively.

For any desired FPR level $\alpha \in [0, 1]$, the Neyman-Pearson lemma guarantees that there exists an optimal test $\phi^*$ which achieves the lowest possible FNR $\beta$. 
We can thus characterize the attack setting by a \emph{trade-off curve}, a function which shows the lowest FNR achieved by the most powerful level $\alpha$ test for any level of FPR $\alpha \in [0, 1]$:
\begin{equation}
    T(P, Q)(\alpha) \define \inf_{\phi:~\Theta \rightarrow [0, 1]}\{ \beta_{\phi} \mid \alpha_\phi \leq \alpha \}
\end{equation}

A common way to summarize the success of the optimal strong-adversary membership inference attack (SMIA) is via its \emph{advantage}, the difference between its TPR and FPR~\cite{yeom2018privacy}: 
\[
\advantage_\text{SMIA} \define \sup_{\phi:~\Theta \rightarrow [0, 1]} 1 - \beta_{\phi} - \alpha_{\phi}.
\]

We can now define $f$-DP, a version of DP which centers the hypothesis-testing interpretation:
\begin{definition}[\citealp{dong2019gaussian}]\label{def:f-dp} 
A mechanism $M(\cdot)$ satisfies $f$-Differential Privacy ($f$-DP) if, for any pair $S \simeq S'$, the trade-off curve of the strong membership inference test is bounded:
\begin{equation}\label{eq:f-dp-app}
    T(M(S), M(S'))(\alpha) \geq f(\alpha), \quad \forall \alpha \in [0, 1],
\end{equation}
where $f: [0, 1] \rightarrow [0, 1]$ is a \emph{valid trade-off function:} a convex, continuous, non-increasing function such that $f(\alpha) \leq 1 - \alpha$ for all $\alpha \in [0, 1]$.
\end{definition}

Note that $(\varepsilon, \delta)$-DP is a special case of $f$-DP. 
Specifically, a mechanism is $(\varepsilon, \delta)$-DP if and only if it satisfies $f$-DP with the following $f(\alpha)$:
\begin{equation}\label{eq:dp-to-f-app}
    f(\alpha) = \max\{0, 1 - \delta - e^\varepsilon \alpha,\ e^{-\varepsilon} (1 - \delta - \alpha)\}.
\end{equation}
We make use of the following equivalent representation of $f$-DP:
\fdpalt*
This form is key to deriving our results in a streamlined way.

A useful property of $f$-DP is \emph{post-processing}: if $M(\cdot)$ satisfies $f$-DP, so does $g \circ M$ for any deterministic or randomized mapping $g(\cdot)$.
In other words, post-processing the results of an $f$-DP mechanism cannot decrease its privacy or, equivalently, cannot increase the success of the aforementioned decision problem.
This is a consequence of the Blackwell-Sherman-Stein theorem~\citep{dong2019gaussian, kaissis2024beyond,su2024statistical}.

\paragraph{Total-variation privacy.}
We also make use of the notion of total variation (TV) privacy:
\begin{definition}\label{def:tv-privacy}
    A mechanism $M: 2^\sD \rightarrow \Theta$ satisfies $\eta$-TV privacy if it satisfies $(0, \eta)$-DP. 
    Equivalently, \emph{the total variation distance}, a special case of $\mathsf{E}_\gamma$ with $\gamma = 1$, is bounded for all $S \simeq S'$:
    \begin{equation}
        \mathsf{E}_1(P \parallel Q) = \sup_{E \subseteq \Theta} P(E) - Q(E) \leq \eta,
    \end{equation}
    where $P$ and $Q$ are the respective probability distributions of $M(S)$ and $M(S')$.
\end{definition}
TV privacy has been extensively studied under equivalent forms and different names in literature~\cite{geng2015optimal,bassily2016algorithmic,kulynych2022disparate, chatzikokolakis2023bayes,ghazi2023total}. 
Although TV privacy is a weak notion of privacy on its own~\cite{vadhan2017complexity}, any DP mechanism satisfies TV privacy for some $\eta$:

\begin{proposition}\label{stmt:dps-to-tv}
    The following statements hold:
    \begin{itemize}
        \item $(\varepsilon, \delta)$-DP implies $\eta$-TV privacy with $\eta = \frac{e^\varepsilon - 1 + 2\delta}{e^\varepsilon + 1}$ \cite{kairouz2015composition}.
        \item $f$-DP implies $\eta$-TV privacy with $\eta = \max_{\alpha \in [0, 1]} (1 - f(\alpha) - \alpha)$ \cite{kaissis2024beyond}.
    \end{itemize}
\end{proposition}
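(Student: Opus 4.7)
The plan is to treat the two parts separately, since the second bound will follow almost immediately from the variational representation of $f$-DP in \cref{stmt:f-dp-alt}, while the first requires a short two-sided linear-programming argument. In both cases the target is to bound $P(E) - Q(E)$ uniformly in the measurable event $E$, where $P$ and $Q$ are the output distributions of $M(S)$ and $M(S')$ for neighbouring $S \simeq S'$.

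For the $(\varepsilon,\delta)$-DP statement, write $p \define P(E)$ and $q \define Q(E)$. I would use two $(\varepsilon,\delta)$-DP inequalities simultaneously: applied to $E$ itself we get $p \leq e^\varepsilon q + \delta$, and applied to the complement $E^c$ it gives $1 - p \leq e^\varepsilon(1-q) + \delta$, which rearranges to $q \leq \tfrac{e^\varepsilon - 1 + p + \delta}{e^\varepsilon}$, or equivalently $q \geq 1 - e^\varepsilon + e^\varepsilon p - \delta$ after swapping directions. Maximising $p - q$ subject to these two half-plane constraints (plus $p,q \in [0,1]$) is a small LP. At the optimum both constraints should bind, yielding the linear system
\begin{equation}
p = e^\varepsilon q + \delta, \qquad q = 1 - e^\varepsilon + e^\varepsilon p - \delta,
\end{equation}
which I would solve to obtain $q = \tfrac{1-\delta}{e^\varepsilon + 1}$ and $p = \tfrac{e^\varepsilon + \delta}{e^\varepsilon + 1}$, so $p - q = \tfrac{e^\varepsilon - 1 + 2\delta}{e^\varepsilon + 1}$. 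Since $E$ was arbitrary, this bounds $\mathsf{E}_1(P \parallel Q)$ by the same quantity, establishing $\eta$-TV privacy for the claimed $\eta$.

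For the $f$-DP statement, I would invoke \cref{stmt:f-dp-alt} directly: it gives $P(E) \leq 1 - f(Q(E))$ for every measurable $E$, whence
\begin{equation}
P(E) - Q(E) \;\leq\; 1 - f(Q(E)) - Q(E) \;\leq\; \sup_{\alpha \in [0,1]} \bigl(1 - f(\alpha) - \alpha\bigr).
\end{equation}
Taking the supremum over $E$ and over neighbouring pairs recovers $\eta = \max_\alpha (1 - f(\alpha) - \alpha)$. Note that the maximum is attained because $f$ is continuous and $[0,1]$ is compact, and it is nonnegative because $f(\alpha) \leq 1 - \alpha$ by the definition of a trade-off function.

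The main (and only mild) obstacle is justifying, in the first part, that it suffices to consider just the two constraints above rather than all four (\emph{i.e.}\@ also $q \leq e^\varepsilon p + \delta$ and the companion on $E^c$). This follows from the symmetry of the neighbouring relation: the pair $(S',S)$ is also neighbouring, so the supremum of $P(E) - Q(E)$ equals the supremum of $Q(E) - P(E)$ over all neighbouring pairs, and hence the \enquote{missing} constraints only influence the opposite sign of the objective, which is irrelevant for the TV bound. Beyond this check, both derivations are short and do not require any further machinery.
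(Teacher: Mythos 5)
The paper does not actually prove this proposition; it simply cites \citet{kairouz2015composition} and \citet{kaissis2024beyond} for the two items. Your proposal therefore supplies a self-contained argument that the paper delegates to prior work, and your argument is essentially correct. Your second bullet is exactly the right route: it is an immediate consequence of \cref{stmt:f-dp-alt}, which the paper proves in \cref{app:proofs}.

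Two remarks, one stylistic and one a genuine shortcut you missed. Stylistically, your derivation of the lower bound $q \geq 1 - e^\varepsilon + e^\varepsilon p - \delta$ is phrased misleadingly: the inequality you obtain by applying DP to $E^c$ in the direction $P\to Q$ is the upper bound $q \leq \tfrac{e^\varepsilon - 1 + p + \delta}{e^\varepsilon}$, and the lower bound you then claim \enquote{equivalently} comes from the \emph{other} direction ($Q \to P$ on $E^c$), not from rearranging the same inequality. These are different constraints, both valid under DP, but calling them equivalent is wrong even though your final LP and its solution are correct. You also assert without verification that both constraints bind at the optimum; a one-line slope comparison (one constraint has slope $e^{-\varepsilon}<1$, the other $e^{\varepsilon}>1$, so $p - q$ is increasing then decreasing in $p$) would close that gap. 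More substantively, you treat the two bullets as independent, but the first is a corollary of the second: plug the $(\varepsilon,\delta)$-DP trade-off curve from \cref{eq:dp-to-f} into $\max_\alpha(1 - f(\alpha) - \alpha)$. The maximum occurs at $\alpha^* = \tfrac{1-\delta}{1+e^\varepsilon}$, where $f(\alpha^*) = \alpha^*$, giving $1 - 2\alpha^* = \tfrac{e^\varepsilon - 1 + 2\delta}{e^\varepsilon + 1}$ directly and making the LP unnecessary. That would have tightened the exposition and made explicit that the two bullets are consistent instances of the same statement.
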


\paragraph{Privacy via R\'enyi divergence.} Another notion we use is R\'enyi DP (RDP)~\cite{mironov2017renyi}.
\begin{definition}[\citealp{mironov2017renyi}]\label{def:rdp} A mechanism $M(\cdot)$ satisfies $(t, \varepsilon)$-RDP iff for all $S \simeq S'$:
\begin{equation}
    D_{t}(P \parallel Q)\leq \varepsilon,
\end{equation}
where $P$ and $Q$ are the respective probability distributions of $M(S)$ and $M(S')$, and $D_{t}(P \parallel Q)$ is the R\'enyi divergence of order $t \geq 1$:
\[
    D_t(P \parallel Q) \define \frac{1}{t - 1} \log \mathbb{E}_{o \sim Q} \left ( \frac{P(o)}{Q(o)}\right )^t,
\]
where the case of $t = 1$ is defined by continuous extension.
\end{definition}

The mechanism satisfies $\rho$-zCDP~\cite{bun2016concentrated,dwork2016concentrated} for $\rho \geq 0$ if it satisfies $(t,\rho \: t)$-RDP for every $t \geq 1$.

\subsection{Predicate Singling Out}
\label{app:psos}
In this section, we provide a brief overview of the notion of predicate singling out security.

Predicate singling out (PSO) risk, which is defined in \cref{sec:risks}, is a privacy concept introduced by \citet{cohen2020towards} to rigorously interpret the ``singling out'' criterion mentioned in the EU GDPR (General Data Protection Regulation). Their goal was to bridge the gap between legal expectations of data anonymity and technical guarantees provided by data anonymization mechanisms. At a high level, in the PSO threat model, the adversary uses the output of a data release mechanism $\theta = M(S)$ to find a predicate $p$ that matches exactly one individual (row) in the original dataset $S$ with significantly greater success than would be expected by chance. In this context, a predicate represents a set of attribute values that characterize a person, e.g., ``speaks Dutch $\wedge$ vegan $\wedge$ concert pianist $\wedge$ born March 15.'' If a predicate matches only one individual in a dataset, that individual is singled out. Moreover, such a predicate is unlikely to identify a person by random chance. Formally, the goal of the adversary is to find a predicate $p$ such that $\sum_{z \in S} p(z) = 1$. 

Unlike differential privacy, which operates under a strong threat model to provide formal ``upper bounds'' to more realistic notions of privacy, PSO takes the alternate approach of intentionally being a necessary but not sufficient condition for privacy, i.e. a ``lower bound'' to realistic notions of privacy. The goal of the original paper was to make claims such as  ``if you do not satisfy PSO security, you are not GDPR compliant''. Hence, \citeauthor{cohen2020towards} consider the i.i.d.\@ setting: the dataset $S$ is sampled i.i.d.\@ from a distribution $P$ known to the adversary. \citeauthor{cohen2020towards} reason about this adversary by considering the baseline (before seeing $M(S)$) and success probability (after seeing $M(S)$) of the adversary assuming the adversary's predicate in both cases has a fixed \emph{weight} $w$, where weight is defined as $w = \E_{P}[p]$, the probability that the predicate $p$ evaluates to $1$ on a random sample from $P$. 

In this setting, the optimal baseline is the random guessing strategy: before observing the output of the mechanism $M(S)$, the adversary constructs a predicate $p$ that has a small weight $w$. Intuitively, predicates such as ``woman $\wedge$ over 60 year old'' have high weight and predicates such as the Dutch vegan pianist from the previous paragraph have low weight. The best the adversary can do as a baseline is to pick a predicate with weight $w$ and hope that they single out. This has a probability of $\baseline = n w (1-w)^{n-1}$ for a dataset of size $n$.

\subsection{Previous Bounds on Privacy Risk Notions}
\label{app:prior-bounds}

\paragraph{PSO Security}

Assuming a fixed weight $w$ and approximate DP, \citet{cohen2020towards} showed that the optimal strategy for PSO
can be upper bounded as a function of the baseline probability:
\begin{theorem}[\citealp{cohen2020towards}]\label{stmt:dp-to-pso-loose}
    Suppose that $M: \sD^n \to \Theta$ satisfies $(\varepsilon, \delta)$-DP w.r.t. replace-one neighbourhood relation. Then, for any given $n > 1$, data distribution $\prior$ over $\sD$, and an adversary $\calA_{n, M, \prior, w}: \Theta \rightarrow \sQ_{\prior}$ with $\sQ_{\prior} \define \{ p ~\mid~ p: \sD \rightarrow \{0, 1\}, \E_\prior[p] \leq w \}$ for given $w \in [0, \nicefrac{1}{n}]$, we have:
   \begin{align}
       \success_\text{PSO}(n, M, \prior, w; \calA) &\leq n(e^\varepsilon w + \delta) \\
        &= e^{\varepsilon} \cdot \frac{\baseline_\text{PSO}(n, \prior, w)}{(1 - w)^{n-1}} + n\delta.
   \end{align}
\end{theorem}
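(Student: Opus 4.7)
The plan is to reduce the singling-out event to $n$ per-record match events and bound each one via a coupling that makes the target record independent of the adversary's predicate. First, I would observe that
$$\Big\{ \textstyle\sum_{z \in S} p(z) = 1 \Big\} \subseteq \bigcup_{i=1}^{n} \{ p(S_i) = 1 \},$$
so by the union bound
$$\success_\text{PSO}(n, M, \prior, w; \calA) \leq \sum_{i=1}^{n} \Pr_{S \sim \prior^n,\; p \gets \calA(M(S))}\bigl[ p(S_i) = 1 \bigr].$$

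Next, I would fix $i$ and couple $S$ with a neighboring dataset $S^{(i)}$ obtained by replacing $S_i$ with an independent draw $z' \sim \prior$; then $S \simeq S^{(i)}$ under the replace-one relation. Conditioning on $S_i = z$, the event $\{ \calA(M(S))(z) = 1 \}$ is, with $z$ held fixed, a post-processing of $M(S)$, so $(\varepsilon,\delta)$-DP yields
$$\Pr\bigl[ \calA(M(S))(z) = 1 \bigr] \leq e^\varepsilon \Pr\bigl[ \calA(M(S^{(i)}))(z) = 1 \bigr] + \delta.$$
Taking the expectation over $z \sim \prior$ and using that $S_i$ is independent of $M(S^{(i)})$, the right-hand expectation becomes $e^\varepsilon \cdot \E_{p}\bigl[\E_{\prior}[p]\bigr] + \delta \leq e^\varepsilon w + \delta$, since every $p \in \sQ_{\prior, w}$ satisfies $\E_{\prior}[p] \leq w$. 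Summing over $i$ gives the first bound $\success_\text{PSO} \leq n(e^\varepsilon w + \delta)$.

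For the second equality, I would note that on $[0, 1/n]$ the map $t \mapsto n t (1-t)^{n-1}$ is increasing (its derivative equals $n(1-t)^{n-2}(1 - n t)$), so the supremum defining $\baseline_\text{PSO}(n, \prior, w)$ over predicates of weight at most $w \leq 1/n$ equals $n w (1-w)^{n-1}$. Solving for $n w$ and substituting into $n(e^\varepsilon w + \delta)$ yields the stated second form.

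The main (mild) obstacle is the careful handling of the DP step: the event $\{p(S_i) = 1\}$ depends both on the mechanism output (through $p = \calA(M(S))$) and on $S_i$, which is itself part of the dataset that $M$ processes, so one cannot apply DP or post-processing directly. The coupling trick above resolves this by conditioning on $S_i = z$, treating $z$ as a fixed parameter of the post-processing, applying DP to swap $S$ for the neighbor $S^{(i)}$ on which $z$ no longer appears, and only then integrating out $z \sim \prior$ to recover the weight constraint on $p$. Everything else is a union bound and elementary algebra.
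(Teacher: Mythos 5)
Your proof is correct and follows the same route as the standard argument from Cohen and Nissim that the paper cites (and closely mirrors the paper's own $f$-DP version in its \cref{stmt:f-to-pso}): union-bound over coordinates, replace the target coordinate with an independent draw to form a neighboring dataset, apply the $(\varepsilon,\delta)$-DP guarantee, and then exploit the weight constraint $\E_\prior[p]\leq w$ once the replaced coordinate is independent of the predicate. The one cosmetic difference is that the paper's proof uses the disjoint decomposition $\{p(z_i)=1 \text{ and } \forall j\neq i,\ p(z_j)=0\}$ whereas you union-bound over the weaker events $\{p(S_i)=1\}$; both yield the same $w$ after the DP step, so the final bound is unchanged. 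Your coupling/conditioning discussion correctly isolates the point one might otherwise gloss over (that the singling-out event depends on the dataset both through $M$ and through $S_i$), and the monotonicity check establishing $\baseline_\text{PSO}=nw(1-w)^{n-1}$ is the right way to justify the stated equality.
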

This is the bound we use in \cref{fig:pso-gaussian}. 

\paragraph{Reconstruction Robustness}\label{app:rero}
Next, we review the state-of-the-art results for bounding reconstruction robustness. We start with the R\'enyi DP bound due to \citet{balle2022reconstructing}:
\begin{theorem}[\citealp{balle2022reconstructing}]\label{stmt:f-to-succ-informal-rdp}
    Suppose that $M(\cdot)$ satisfies $(t, \varepsilon)$-RDP w.r.t.\@ either add-remove or replace-one relation. It holds that:
    \begin{equation}
        \success_{\lbrack \text{SRR} \rbrack} \leq (\baseline_{\lbrack \text{SRR} \rbrack} \cdot e^\varepsilon)^{\frac{t-1}{t}}.
    \end{equation}
\end{theorem}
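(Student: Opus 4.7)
The plan is to combine Hölder's inequality with the Rényi-divergence structure of the RDP guarantee and then use Jensen's inequality to move an outer expectation inside a concave power, recognizing the resulting quantity as the baseline success. Concretely, I would begin by writing $\success_{\text{SRR}} = \E_{z \sim \prior}\E_{\theta \sim P_z}[h_z(\theta)]$, where $P_z \define M(\bar S \cup \{z\})$ and $h_z(\theta) \define \id[\ell(z, \calA(\theta)) \leq \gamma]$. To invoke RDP I introduce a reference distribution $Q$: under the add-remove relation I take $Q \define M(\bar S)$, and under the replace-one relation I fix an arbitrary anchor $z_0 \in \sD$ and set $Q \define M(\bar S \cup \{z_0\})$. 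In both cases $P_z$ and $Q$ are neighbors for every $z$, so the $(t,\varepsilon)$-RDP hypothesis gives $\E_Q[(dP_z/dQ)^t] \leq e^{(t-1)\varepsilon}$.

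The core inequality arises by a change of measure followed by Hölder's inequality with conjugate exponents $t$ and $t/(t-1)$:
\begin{equation}
\E_{\theta \sim P_z}[h_z(\theta)] = \E_{\theta \sim Q}\bigl[(dP_z/dQ)(\theta)\,h_z(\theta)\bigr] \leq \bigl(\E_Q[(dP_z/dQ)^t]\bigr)^{1/t}\bigl(\E_Q[h_z^{t/(t-1)}]\bigr)^{(t-1)/t}.
\end{equation}
Since $h_z$ is $\{0,1\}$-valued, $h_z^{t/(t-1)} = h_z$, so the right-hand side collapses to $e^{\varepsilon(t-1)/t}\bigl(\E_Q[h_z(\theta)]\bigr)^{(t-1)/t}$.

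Next, I would take the outer expectation over $z \sim \prior$ and apply Jensen's inequality to $x \mapsto x^{(t-1)/t}$, which is concave on $[0,\infty)$ since $(t-1)/t \in (0,1)$ for $t > 1$. This yields $\success_{\text{SRR}} \leq e^{\varepsilon(t-1)/t}\bigl(\E_{z \sim \prior}\E_{\theta \sim Q}[h_z(\theta)]\bigr)^{(t-1)/t}$. Because $z$ and $\theta \sim Q$ are independent, I can swap the expectations and rewrite the inner quantity as $\E_{\theta \sim Q}\Pr_{z \sim \prior}[\ell(z, \calA(\theta)) \leq \gamma]$. For each fixed $\theta$, the output $\calA(\theta)$ is a specific element of $\sD$, so the inner probability is at most $\sup_{\hat z \in \sD}\Pr_{z \sim \prior}[\ell(z, \hat z) \leq \gamma] = \baseline_{\text{SRR}}$. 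Combining everything gives $\success_{\text{SRR}} \leq (e^\varepsilon \cdot \baseline_{\text{SRR}})^{(t-1)/t}$.

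The main obstacle is keeping the Hölder exponents, the direction of Jensen's inequality, and the RDP order $t$ all properly aligned so that the final exponent lands on $(t-1)/t$ rather than, say, $1/t$; swapping the conjugate exponents would either waste the indicator trick or produce a strictly weaker bound. A secondary subtlety is the reference-distribution choice under replace-one, since $M(\bar S)$ is not a neighbor of $M(\bar S \cup \{z\})$ in that regime; I need to verify that a single size-$n$ anchor $Q = M(\bar S \cup \{z_0\})$ works uniformly in $z$, including the degenerate case $z = z_0$ where $P_z = Q$.
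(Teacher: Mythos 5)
The paper does not prove this theorem: it is stated as a quotation of the result from \citet{balle2022reconstructing} in the background appendix, with no proof reproduced. Your derivation is correct and matches the standard argument behind the original result: change of measure to the reference distribution, H\"older with conjugate exponents $(t,\,t/(t-1))$ to bring in the $(t,\varepsilon)$-RDP constraint $\E_Q\bigl[(dP_z/dQ)^t\bigr]\le e^{(t-1)\varepsilon}$, the idempotence of the $\{0,1\}$-valued indicator to collapse the second H\"older factor, Jensen on the concave map $x\mapsto x^{(t-1)/t}$ to push $\E_{z\sim\prior}$ inside, and finally Fubini plus $\sup_{\hat z}$ to recognize $\baseline_{\text{SRR}}$. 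Your treatment of the replace-one case via a fixed anchor $z_0$ (with $P_{z_0}=Q$ handled trivially) is also the right fix.

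One point worth making explicit but not a gap: you change measure through $M$ directly while folding $\calA$ into the query $h_z(\theta)=\id[\ell(z,\calA(\theta))\le\gamma]$, which implicitly treats $\calA$ as deterministic; if $\calA$ is randomized one either extends $\theta$ to include $\calA$'s coins or invokes post-processing closure of RDP, and the final $\sup_{\hat z}$ step goes through either way. It is also worth noting that this is precisely why the paper's own Theorem~\ref{stmt:avg-to-sup-baseline} (the $f$-DP analogue) outperforms it: your H\"older step is a generic moment inequality and discards the full trade-off structure, whereas Lemma~\ref{stmt:f-dp-alt} uses the exact Neyman--Pearson boundary, so the resulting $1-f(\baseline)$ bound dominates $(e^\varepsilon\baseline)^{(t-1)/t}$ when $f$ is known exactly.
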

Applying the above result to a $\rho$-zCDP mechanism and optimizing $t$ to minimize the upper bound yields the following:
\begin{corollary}[\citealp{balle2022reconstructing}]\label{stmt:f-to-succ-informal-zcdp}
    Suppose that $M(\cdot)$ satisfies $\rho$-zCDP w.r.t.\@ either add-remove or replace-one relation. It holds that:
    \begin{equation}
        \success_{\lbrack \text{SRR} \rbrack} \leq \exp \left \{- \left (\sqrt{\log{\frac{1}{\baseline_{\lbrack \text{SRR} \rbrack}}}} - \sqrt{\rho}\right)^2\right \} .
    \end{equation}
\end{corollary}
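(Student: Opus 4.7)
The plan is to derive the corollary as a direct consequence of \cref{stmt:f-to-succ-informal-rdp} together with the standard inclusion that $\rho$-zCDP implies $(t,\rho t)$-RDP for every $t \geq 1$. First, I would substitute $\varepsilon = \rho t$ into the RDP bound of \cref{stmt:f-to-succ-informal-rdp}, yielding, for every $t \geq 1$,
\begin{equation*}
    \success_{\text{SRR}} \leq \bigl(\baseline_{\text{SRR}} \cdot e^{\rho t}\bigr)^{(t-1)/t}.
\end{equation*}
Since this inequality holds for \emph{every} $t \geq 1$, I can minimize the right-hand side over $t$ to obtain the tightest bound.

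Next, I would take logarithms to convert the optimization into a more tractable form. Writing $b = \baseline_{\text{SRR}}$ and $s = \success_{\text{SRR}}$, the log-bound reads
\begin{equation*}
    \log s \leq \frac{t-1}{t}\log b + \rho(t-1) = \log b - \frac{\log b}{t} + \rho t - \rho,
\end{equation*}
which is a smooth function of $t$ on $(0,\infty)$. Differentiating with respect to $t$ and setting the derivative to zero gives $\frac{\log b}{t^{2}} + \rho = 0$, so the unconstrained minimizer is $t^{*} = \sqrt{\log(1/b) / \rho}$, which is real since $b \in (0,1]$.

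Substituting $t^{*}$ back into the log-bound, the cross term $-\log b / t^{*} + \rho t^{*}$ collapses to $2\sqrt{\rho \log(1/b)}$, and completing the square yields
\begin{equation*}
    \log s \leq -\log(1/b) + 2\sqrt{\rho \log(1/b)} - \rho = -\bigl(\sqrt{\log(1/b)} - \sqrt{\rho}\bigr)^{2},
\end{equation*}
from which exponentiating produces the desired inequality. The one subtlety I expect to be the main obstacle is that the RDP guarantee only holds for $t \geq 1$, so I must check that $t^{*} \geq 1$, i.e., that $\rho \leq \log(1/b)$. When this fails (the high-$\rho$, low-baseline regime), the exponent $-\bigl(\sqrt{\log(1/b)} - \sqrt{\rho}\bigr)^{2}$ is still defined and the stated bound simply becomes vacuous (larger than, or close to, $1$), so the bound holds trivially via $s \leq 1$; thus the result as stated is valid on the whole parameter range, but only informative when $\rho < \log(1/b)$.
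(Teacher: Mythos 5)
Your derivation is the standard one (zCDP $\Rightarrow$ $(t,\rho t)$-RDP, plug into the RDP bound, optimize over $t$), and the calculus is correct: the log of the bound is $\log b - \log b / t + \rho t - \rho$, the unconstrained minimizer is $t^* = \sqrt{\log(1/b)/\rho}$, substituting back gives $-\bigl(\sqrt{\log(1/b)} - \sqrt{\rho}\bigr)^2$, and exponentiating yields the claim. The paper itself does not re-prove this corollary---it imports it from \citet{balle2022reconstructing}---so there is no competing argument to compare against, and this is indeed how the result is obtained there.

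The one genuine flaw is in your handling of the case $t^* < 1$, i.e.\ $\rho > \log(1/b)$. You claim that in this regime the stated bound ``becomes vacuous (larger than, or close to, $1$),'' so the inequality holds trivially. That is incorrect: the exponent $-\bigl(\sqrt{\log(1/b)} - \sqrt{\rho}\bigr)^2$ is always $\leq 0$, so the right-hand side is always $\leq 1$; it is never larger than $1$. Worse, for $\rho > 4\log(1/b)$ the right-hand side drops strictly \emph{below} $b = \baseline_{\text{SRR}}$ (set $u = \sqrt{\log(1/b)}$, $v = \sqrt{\rho}$; then $(u-v)^2 > u^2 \iff v > 2u$), while $\success_{\text{SRR}} \geq \baseline_{\text{SRR}}$ always holds because the adversary can ignore $\theta$ and output the best prior guess. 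So the stated inequality is actually \emph{false} in that regime, and the minimization-over-$t\geq 1$ argument gives only the trivial $\success \leq 1$ there (the constrained minimum is at $t\to 1^+$). The resolution is that the original result of \citet{balle2022reconstructing} carries the explicit side condition $\baseline_{\text{SRR}} \leq e^{-\rho}$ (equivalently $t^* \geq 1$), which the paper's restatement silently omits. Your proof is correct and complete precisely under that condition; you should state it rather than appeal to a vacuity that does not hold.
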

\cref{stmt:f-to-succ-informal-zcdp} is the bound on \textit{SRR} used in \cref{fig:rero-gaussian,fig:census,fig:sai,fig:census-multiple-baselines}. We can also apply \cref{stmt:f-to-succ-informal-rdp} to a mechanism that satisfies a continuum of RDP guarantees and minimize over the upper bound:
\begin{corollary}\label{stmt:f-to-succ-informal-rdpc}
    Suppose that $M(\cdot)$ satisfies $(t, \varepsilon(t))$-RDP for all $t > 1$ w.r.t.\@ either add-remove or replace-one relation. It holds that:
    \begin{equation}
        \success_{\lbrack \text{SRR} \rbrack} \leq \min_{t > 1} \left \{(\baseline_{\lbrack \text{SRR} \rbrack} \cdot e^{\varepsilon(t)})^{\frac{t-1}{t}} \right \}.
    \end{equation}
\end{corollary}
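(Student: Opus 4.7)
The plan is to exploit the fact that the hypothesis supplies an entire \emph{family} of RDP guarantees—one for every order $t > 1$—and to select, within the bound, the order that is sharpest. In particular, the mechanism satisfies $(t_0, \varepsilon(t_0))$-RDP for any fixed $t_0 > 1$, so the per-order bound in Theorem \ref{stmt:f-to-succ-informal-rdp} applies uniformly across orders.

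Concretely, I would first fix an arbitrary $t > 1$ and invoke Theorem \ref{stmt:f-to-succ-informal-rdp} with the $(t, \varepsilon(t))$-RDP guarantee, yielding
\[
\success_{\text{SRR}} \leq (\baseline_{\text{SRR}} \cdot e^{\varepsilon(t)})^{(t-1)/t}.
\]
The key observation is that the left-hand side does not depend on $t$, while only the right-hand side varies with the chosen order. Since the inequality holds for every admissible $t$, I can take the infimum of the right-hand side over $t > 1$, giving
\[
\success_{\text{SRR}} \leq \inf_{t > 1} (\baseline_{\text{SRR}} \cdot e^{\varepsilon(t)})^{(t-1)/t}.
\]
The corollary writes this as a minimum; this is justified whenever the infimum is attained, which is typical for standard RDP profiles (e.g., those of the Gaussian or subsampled-Gaussian mechanisms, where the expression blows up near the boundaries of $(1, \infty)$). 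Otherwise, one would simply state it as an infimum.

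There is essentially no genuine obstacle here—the substance sits entirely in Theorem \ref{stmt:f-to-succ-informal-rdp}, and the corollary is a one-line quantifier manipulation. The only items requiring minor care are (i) confirming that the per-$t$ application is valid, which is immediate from the \enquote{for all $t > 1$} assumption, and (ii) deciding between $\min$ and $\inf$ based on the regularity of $\varepsilon(\cdot)$ provided by the specific mechanism under analysis.
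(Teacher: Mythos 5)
Your proposal is correct and matches the approach the paper sketches: the paper introduces this corollary with \enquote{We can also apply \cref{stmt:f-to-succ-informal-rdp} to a mechanism that satisfies a continuum of RDP guarantees and minimize over the upper bound,} which is precisely the per-order application of \cref{stmt:f-to-succ-informal-rdp} followed by optimizing the right-hand side over $t$ that you describe. Your caveat about $\min$ versus $\inf$ is a fair technicality inherited from the notation of \citet{balle2022reconstructing}, where the optimizer is attained for the mechanisms under consideration.
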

We use \cref{stmt:f-to-succ-informal-rdpc} in \cref{fig:bound-at-a-glance}
as the prior method that we compare our $f$-DP bound to. 

\section{Additional Discussion on Related Work}
\label{app:related}

\paragraph{Related unification efforts.} Recently, \citet{cohen2025data} proposed a unifying notion of risk called \emph{narcissus resiliency}, which averages over dataset sampling. The notions of risk we consider under our unifying framework are all tailored to the strong threat model, which is dataset-specific. We additionally provide bounds on Narcissus resiliency using $f$-DP in \cref{app:extra-bounds}. \citet{salem2023sok,cummings2024attaxonomy} provided taxonomies and reductions between risk notions. Our focus is providing analysis using $f$-DP, as opposed to a taxonomization.

\paragraph{Bayesian semantics of DP.} An alternative way to analyze and interpret risk in DP is a Bayesian posterior-to-prior analysis~\cite[see, e.g.,][]{wood2018differential,kifer2022bayesian,kazan2024prior}. We focus on security-based literature instead, where risk is analyzed from the point of view of relevant adversarial models. See \citet{kifer2022bayesian,kaissis2024beyond} for connections between the Bayesian view and $f$-DP.

\paragraph{Attack-aware noise calibration.  } \citet{kulynych2024attack} introduced attack-aware noise calibration, the idea of calibrating DP algorithms to a given level of operational attack risk as opposed to the standard practice of calibrating to a given pair of $(\varepsilon, \delta)$ parameters. They evaluated this approach for the use case of calibrating noise to a given level of membership inference risk. One of the applications of our result is that it extends the toolbox of attack-aware calibration to further attack risks, thus enabling practitioners to calibrate DP algorithms to notions of risk beyond membership inference, which often appear in data protection guidelines (see \cref{sec:intro}).

\section{Omitted Proofs}
\label{app:proofs}

\subsection{Additional and Auxiliary Results}

We start with a proof of a useful equivalent form of $f$-DP.
\fdpalt*

\begin{proof}
Note that we only consider the functions $f$ that are valid trade-off functions (\cref{def:f-dp}). Denote by $P$ the distribution of $M(S)$ and by $Q$ the distribution of $M(S')$.

$\implies$ Suppose that the algorithm satisfies $f$-DP.
Let $E \subseteq \Theta$ be any measurable set. Consider the deterministic test $\phi(\theta) = \id[\theta \in E]$. By the definition of $f$-DP, for this specific test we must have:
\[
    f(\E_Q[\phi]) \leq 1 - \E_P[\phi].
\]
Substituting $\E_P[\phi] = P(E)$ and $\E_Q[\phi] = Q(E)$, we directly obtain:
\[
    f(Q(E)) \leq 1 - P(E).
\]

$\impliedby$ Suppose that \cref{eq:f-dp-alt} holds for any measurable $E \subseteq \Theta$ and any $S \simeq S'$. Let $\phi: \Theta \to [0,1]$ be any hypothesis test. By the Neyman-Pearson lemma, it suffices to consider likelihood-ratio tests. Any such test can be decomposed into a convex combination of two indicator functions:
\begin{align*}
    \phi(\theta) &= \id[\Lambda(\theta) > t] + c \cdot \id[\Lambda(\theta) = t] \\
    &= (1 - c) \cdot \id[\Lambda(\theta) > t] + c \cdot \id[\Lambda(\theta) \geq t],
\end{align*}
for some $c \in [0, 1]$ and $t \in \sR$, where $\Lambda(\theta) \define \nicefrac{\mathrm{d} P}{\mathrm{d} Q}(\theta)$ is the likelihood ratio.

Denoting by $E_1 = \{\theta \in \Theta \mid \Lambda(\theta) > t\}$ and $E_2 = \{\theta \in \Theta \mid \Lambda(\theta) \geq t\}$, we have by the linearity of expectation and the convexity of $f$:
\begin{align*}
    f(\E_Q[\phi]) &= f((1 - c) \cdot Q(E_1) + c \cdot Q(E_2)) \\
        &\leq (1 - c) \cdot f(Q(E_1)) + c \cdot f(Q(E_2)).
\end{align*}
By assumption that \cref{eq:f-dp-alt} holds for the sets $E_1$ and $E_2$:
\begin{align*}
        &\leq (1 - c) \cdot (1 - P(E_1)) + c \cdot (1 - P(E_2)) \\
        &= 1 - \E_P[\phi].
\end{align*}
As $\alpha_\phi = \E_Q[\phi]$ and $\beta_\phi = 1 - \E_P[\phi]$, this implies $f(\alpha_\phi) \leq \beta_\phi$, which recovers \cref{eq:f-dp-app}.
\end{proof}

To show the main result on the unified bounds, we make use of the following elementary lemmas. First, due to the properties of the trade-off function $f$, we can push the expectation inside of $f(\cdot)$:
\begin{lemma}\label{stmt:push-exp-inside}
    Suppose that $f$ is a valid trade-off function according to \cref{def:f-dp}. For any random variable $W$ over $[0, 1]$, we have:
    \begin{equation}
        \E[1 - f(W)] \leq 1 - f(\E W).
    \end{equation}
\end{lemma}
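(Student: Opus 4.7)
The plan is to derive the inequality as a direct application of Jensen's inequality, leveraging the convexity property built into the definition of a trade-off function. Recall from \cref{def:f-dp} that any valid $f$ is convex on $[0,1]$; hence the map $\alpha \mapsto 1 - f(\alpha)$ is concave on $[0,1]$. Since $V$ must take values in $[0,1]$ for $f(V)$ to be defined, both $V$ and $1 - f(V)$ are bounded random variables, so the expectations $\E V$ and $\E[1 - f(V)]$ are well-defined and finite, and $\E V$ itself lies in $[0,1]$.

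With these pieces in place, the proof reduces to a single invocation of Jensen's inequality for concave functions: for any integrable random variable $V$ and concave $g$, one has $\E[g(V)] \leq g(\E V)$. Taking $g(v) = 1 - f(v)$ yields the desired bound
\begin{equation*}
    \E[1 - f(V)] \leq 1 - f(\E V).
\end{equation*}

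There is no real obstacle here; the only thing worth being careful about is confirming the domain constraint so that Jensen's inequality applies without issue — namely, that $V \in [0,1]$ almost surely and that $f$ is genuinely convex rather than merely non-increasing. Both facts are guaranteed by \cref{def:f-dp}, so no additional argument is needed, and the lemma follows in one line.
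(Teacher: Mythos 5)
Your proof is correct and takes essentially the same route as the paper: both observe that convexity of $f$ (from \cref{def:f-dp}) makes $1-f$ concave and then apply Jensen's inequality. The paper's proof is a one-liner; your write-up just spells out the domain check on $V$ more explicitly.
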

\begin{proof}
    By assumption, $1 - f$ is concave and increasing. The result immediately follows from applying the linearity of expectation and Jensen's inequality.
\end{proof}

Second, we can supremize an expectation inside of $f(\cdot)$:
\begin{lemma}\label{stmt:avg-to-sup-f}
    Suppose that $f$ is a valid trade-off function according to \cref{def:f-dp}. For any random variable $V$ taking values in a set $\sV$, and any bounded function $g: \sV \rightarrow [0, 1]$, we have:
    \begin{equation}
        1 - f(\E g(V)) \leq 1 - f\left(\sup_{v \in \sV} g(v)\right).
    \end{equation}
\end{lemma}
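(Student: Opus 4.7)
The plan is to observe that the inequality is essentially a monotonicity statement: since $f$ is non-increasing by \cref{def:f-dp}, the function $\alpha \mapsto 1 - f(\alpha)$ is non-decreasing on $[0,1]$, so it suffices to show that its argument on the left-hand side does not exceed its argument on the right-hand side.

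More concretely, first I would note the trivial fact that for any bounded measurable $g: \sV \to [0,1]$ and any random variable $V$ on $\sV$, we have $\E g(V) \leq \sup_{v \in \sV} g(v)$, because the essential supremum of a random variable upper bounds its expectation. Second, I would invoke the property of trade-off functions from \cref{def:f-dp} that $f$ is non-increasing on $[0,1]$, which makes $1 - f$ non-decreasing. Combining these two observations immediately yields
\begin{equation}
    1 - f(\E g(V)) \leq 1 - f\!\left(\sup_{v \in \sV} g(v)\right),
\end{equation}
which is the claimed bound.

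There is no substantive obstacle here: the statement is a one-line monotonicity argument, and its purpose in the broader proof of \cref{stmt:avg-to-sup-baseline} is simply to package a step that will be chained after \cref{stmt:push-exp-inside} (Jensen's inequality pushes the expectation inside $f$, and then this lemma replaces the inner expectation by a supremum). The only care needed is to ensure measurability of $g$ so that $\E g(V)$ is well-defined, which is implicit in the hypothesis that $g$ is a bounded function used in an expectation; no further assumption is required.
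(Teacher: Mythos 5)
Your proof is correct and follows essentially the same one-line argument as the paper: note $\E g(V) \leq \sup_{v \in \sV} g(v)$ and apply the monotonicity of $1-f$ (which is non-decreasing since $f$ is non-increasing by \cref{def:f-dp}).
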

\begin{proof}
    Observe that $\E g(V) \leq \sup_{v \in \sV} g(v)$. We get the result as $1 - f$ is increasing.
\end{proof}

Next, we can show a result which can be seen as a version of a risk bound for on-average baselines.
\begin{restatable}{lemma}{datasetspecificgeneralization}\label{stmt:dataset-specific-generalization}
    Suppose that $M: 2^\sD \rightarrow \Theta$ satisfies $f$-DP w.r.t add-remove relation. Then, for any bounded function $q: \sD \times \Theta \rightarrow [0, 1]$, any partial dataset $\bar S \in \sD^n$, and any probability distribution $\prior$ over $\sD$, we have:
    \begin{equation}
        \E_{z \sim \prior} \E_{\theta \sim M(\bar S \cup \{z\})} \left[ q(z; \theta) \right] \leq 1 - f\left(\E_{z \sim \prior} \E_{\theta \sim M(\bar S)} \left[ q(z; \theta) \right]\right).
        \label{eq:dataset-specific-generalization-2}
    \end{equation}
    Moreover, if the $M(\cdot)$ satisfies $f$-DP w.r.t. replace-one relation, we have for any $z' \in \sD$:
    \begin{equation}
        \E_{z \sim \prior} \E_{\theta \sim M(\bar S \cup \{z\})} \left[ q(z; \theta) \right] \leq 1 - f\left(\E_{z \sim \prior} \E_{\theta \sim M(\bar S \cup \{z'\})} \left[ q(z; \theta) \right]\right).
        \label{eq:dataset-specific-generalization-ro-2}
    \end{equation}
\end{restatable}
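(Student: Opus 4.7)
The plan is to reduce the lemma to a pointwise-in-$z$ application of $f$-DP, then average over $z \sim \prior$ using Jensen's inequality on the concave curve $1 - f$. Concretely, for fixed $z \in \sD$, the datasets $\bar S$ and $\bar S \cup \{z\}$ are neighbours under the add-remove relation, so $f$-DP gives a constraint between $M(\bar S)$ and $M(\bar S \cup \{z\})$; the goal is to turn this constraint, which \cref{stmt:f-dp-alt} phrases in terms of probabilities of measurable events, into a statement about expectations of the $[0,1]$-valued function $\theta \mapsto q(z; \theta)$.

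The first step is this lift from events to bounded functionals. For any measurable $g: \Theta \to [0,1]$ and any $U \sim \mathrm{Unif}[0,1]$ independent of $M$, consider the randomized post-processing
\[
    M'(\cdot) \define \id[U \leq g(M(\cdot))].
\]
By post-processing of $f$-DP, $M'$ is $f$-DP whenever $M$ is, and \cref{stmt:f-dp-alt} applied to the event $\{M'(\cdot) = 1\}$ yields
\[
    \E_{\theta \sim M(\bar S \cup \{z\})}[g(\theta)] = \Pr[M'(\bar S \cup \{z\}) = 1] \leq 1 - f\!\left(\Pr[M'(\bar S) = 1]\right) = 1 - f\!\left(\E_{\theta \sim M(\bar S)}[g(\theta)]\right).
\]
Instantiating $g(\theta) = q(z; \theta)$ produces the pointwise inequality
\[
    \E_{\theta \sim M(\bar S \cup \{z\})}[q(z; \theta)] \leq 1 - f\!\left(\E_{\theta \sim M(\bar S)}[q(z; \theta)]\right), \qquad \forall z \in \sD.
\]

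The second step is to take $\E_{z \sim \prior}$ on both sides of the pointwise bound and use concavity of $1 - f$. Since $1 - f$ is nondecreasing and concave, \cref{stmt:push-exp-inside} (Jensen's inequality) gives
\[
    \E_{z \sim \prior}\!\left[1 - f\!\left(\E_{\theta \sim M(\bar S)}[q(z; \theta)]\right)\right] \leq 1 - f\!\left(\E_{z \sim \prior} \E_{\theta \sim M(\bar S)}[q(z; \theta)]\right),
\]
and chaining with the pointwise bound yields exactly \cref{eq:dataset-specific-generalization-2}. The replace-one statement follows by the same two steps: for any fixed reference element $z' \in \sD$, the datasets $\bar S \cup \{z\}$ and $\bar S \cup \{z'\}$ are neighbours under the replace-one relation for every $z \in \sD$, so the pointwise inequality becomes
\[
    \E_{\theta \sim M(\bar S \cup \{z\})}[q(z; \theta)] \leq 1 - f\!\left(\E_{\theta \sim M(\bar S \cup \{z'\})}[q(z; \theta)]\right),
\]
and Jensen's inequality on $z$ concludes.

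The main obstacle I anticipate is the lift from events to expectations in the first step: \cref{stmt:f-dp-alt} is formulated for $\Pr[\cdot \in E]$, but we need it for $\E[q(z; \theta)]$. A layer-cake decomposition $\E[g(\theta)] = \int_0^1 \Pr[g(\theta) \geq t] \, dt$ plus pointwise application of \cref{stmt:f-dp-alt} inside the integral does not immediately work because $1 - f$ is nonlinear and the integral of $1 - f(\cdot)$ at each level is not straightforwardly comparable to $1 - f$ of the integral. The uniform-randomization trick sidesteps this entirely by reducing an expectation to a single event probability of a post-processed mechanism, after which post-processing closure of $f$-DP does the work.
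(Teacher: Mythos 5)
Your proof is correct and follows essentially the same route as the paper: establish the pointwise-in-$z$ inequality $\E_{\theta \sim M(\bar S \cup \{z\})}[q(z;\theta)] \leq 1 - f\bigl(\E_{\theta \sim M(\bar S)}[q(z;\theta)]\bigr)$ from $f$-DP and post-processing, then average over $z \sim \prior$ and push the expectation inside $1 - f$ via Jensen (\cref{stmt:push-exp-inside}), with the replace-one case handled identically using $\bar S \cup \{z'\}$ in place of $\bar S$. The one place where you go beyond the paper is making explicit the uniform-randomization (Bernoulli) post-processing $M'(\cdot) = \id[U \leq q(z; M(\cdot))]$ that lifts the event-based form of \cref{stmt:f-dp-alt} to expectations of $[0,1]$-valued functions --- the paper compresses this into the phrase \enquote{and the post-processing property}, and your remark that a naive layer-cake decomposition would fail due to the nonlinearity of $f$ correctly explains why that explicit construction is the right device.
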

\begin{proof}
    Fix any $z \in \sD$. By the form of $f$-DP in \cref{stmt:f-dp-alt} and the post-processing property of $f$-DP, we have:
    \begin{equation}
    \label{eq:proof-lemma-application-1}
        \begin{aligned}
        \E_{\theta \sim M(\bar S)} q(z; \theta) &\leq 1 - f\left(\E_{\theta \sim M(\bar S \cup \{z\})} q(z; \theta) \right) \\
        \E_{\theta \sim M(\bar S \cup \{z\})} q(z; \theta) &\leq 1 - f\left(\E_{\theta \sim M(\bar S)} q(z; \theta) \right).
        \end{aligned}
    \end{equation}
    By \cref{stmt:push-exp-inside}, we have for any $g: \sD \rightarrow [0, 1]$:
    \begin{equation}\label{eq:proof-jensen}
        \E_{z \sim \prior}[1 - f(g(z))] \leq 1 - f(\E_{z \sim \prior} g(z)).
    \end{equation}
    We get the sought statement in \cref{eq:dataset-specific-generalization-2} by taking the expectation over $z \sim \prior$ of both sides in \cref{eq:proof-lemma-application-1} and applying \cref{eq:proof-jensen}. 
    Finally, we get the result in \cref{eq:dataset-specific-generalization-ro-2} analogously from:
    \[
        \E_{\theta \sim M(\bar S \cup \{z\})} q(z; \theta) \leq 1 - f\left(\E_{\theta \sim M(\bar S \cup \{z'\})} q(z; \theta) \right).
    \]
\end{proof}

In \cref{app:generalization}, we also derive an equivalent result with the semantics of standard generalization bounds.
Moreover, in \cref{app:extra-bounds}, we show that this result immediately implies bounds on reconstruction robustness with an alternative on-average definition of baseline success rate~\cite{guerra2023analysis}.

We can now show the general version of our risk bound:
\avgtosupbaseline*
\begin{proof}
    For add-remove relation, we immediately get the result by applying \cref{stmt:avg-to-sup-f} to the r.h.s.\@ of \cref{eq:dataset-specific-generalization-2}, and taking $g(\theta) = \E_{z \sim P} q(z; \theta)$:
    \[
            \E_{z \sim \prior} \E_{\theta \sim M(\bar S)} q(z; \theta) = \E_{\theta \sim M(\bar S)} \E_{z \sim \prior} q(z; \theta) = \E_{\theta \sim M(\bar S)} g(\theta).
    \]
    Analogously, for replace-one relation, we get the result by applying \cref{stmt:avg-to-sup-f} to the r.h.s.\@ of \cref{eq:dataset-specific-generalization-ro-2} with:
    \[
            \E_{z \sim \prior} \E_{\theta \sim M(\bar S \cup \{z'\})} q(z; \theta) = \E_{\theta \sim M(\bar S \cup \{z'\})} \E_{z \sim \prior} q(z; \theta) = \E_{\theta \sim M(\bar S \cup \{z'\})} g(\theta).
    \]
\end{proof}

We proceed to showing that the difference between the success and the baseline is upper-bounded by TV privacy:
\begin{restatable}{lemma}{gentvtoadv}\label{stmt:generalized-tv-to-adv}
Suppose that the algorithm $M: 2^\sD \rightarrow \Theta$ satisfies $\eta$-TV privacy w.r.t.\@ either the add-remove or replace-one relation.
Then, for any bounded function $q: \sD \times \Theta \rightarrow [0, 1]$, any partial dataset $\bar S \in \sD^{n-1}$ with $n \geq 1$, and any probability distribution $P$ over $\sD$, we have:
\begin{equation}
    \E_{z \sim \prior} \E_{\theta \sim M(\bar S \cup \{z\})}[q(z; \theta)] - \sup_{\theta \in \Theta} \E_{z \sim \prior} [q(z; \theta)] \leq \eta.
\end{equation}
\end{restatable}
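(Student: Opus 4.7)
The statement is essentially a corollary of \cref{stmt:avg-to-sup-baseline}, obtained by specializing $f$ to the trade-off function associated with $\eta$-TV privacy. Since $\eta$-TV privacy is by definition $(0,\eta)$-DP, I would first invoke \cref{eq:dp-to-f} with $\varepsilon=0$ and $\delta=\eta$ to conclude that any $\eta$-TV private mechanism satisfies $f$-DP with the explicit trade-off function
\begin{equation*}
    f_\eta(\alpha) \;=\; \max\{0,\; 1-\eta-\alpha\}.
\end{equation*}
In particular, this $f_\eta$ yields the simple envelope $1 - f_\eta(\alpha) = \min\{1,\, \alpha+\eta\} \leq \alpha + \eta$ for every $\alpha \in [0,1]$.

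Next, I would apply \cref{stmt:avg-to-sup-baseline} directly to $q$, $\bar S$, and $\prior$, using the trade-off function $f_\eta$. This gives
\begin{equation*}
    \E_{z \sim \prior} \E_{\theta \sim M(\bar S \cup \{z\})}[q(z;\theta)] \;\leq\; 1 - f_\eta\!\left(\sup_{\theta \in \Theta} \E_{z \sim \prior}[q(z;\theta)]\right).
\end{equation*}
Substituting the bound $1 - f_\eta(\alpha) \leq \alpha + \eta$ with $\alpha = \sup_{\theta} \E_{z \sim \prior}[q(z;\theta)]$ and rearranging yields the claimed inequality. The argument is identical whether $M$ is analyzed under the add-remove or the replace-one relation, because \cref{stmt:avg-to-sup-baseline} already handles both.

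\textbf{Anticipated obstacle.} There is no substantive obstacle: the result is a one-line consequence of \cref{stmt:avg-to-sup-baseline} once the correct $f_\eta$ is identified. The only thing worth being careful about is the direction of the conversion: I am using that $\eta$-TV privacy implies $f_\eta$-DP (which is immediate from \cref{eq:dp-to-f} specialized to $\varepsilon=0$), rather than the reverse implication from \cref{stmt:dps-to-tv} that $f$-DP implies some $\eta$-TV guarantee. Once this is clearly invoked, the chain $\text{$\eta$-TV} \Rightarrow f_\eta\text{-DP} \Rightarrow$ \cref{stmt:avg-to-sup-baseline} $\Rightarrow$ linear envelope on $1-f_\eta$ closes the argument.
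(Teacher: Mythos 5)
Your proof is correct and follows the same route as the paper: convert $\eta$-TV privacy to the trade-off curve $f_\eta(\alpha) = \max\{0,\, 1-\eta-\alpha\}$ via \cref{eq:dp-to-f}, apply \cref{stmt:avg-to-sup-baseline}, and bound the slack $1 - f_\eta(\alpha) - \alpha$ by $\eta$. The only cosmetic difference is that you compute the linear envelope $1 - f_\eta(\alpha) \leq \alpha + \eta$ directly, whereas the paper closes the argument by citing \cref{stmt:dps-to-tv} for $\max_\alpha\bigl(1 - f_\eta(\alpha) - \alpha\bigr) \leq \eta$; these are the same computation.
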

\begin{proof}
    Let us assume that $M(\cdot)$ satisfies $f$-DP for some valid trade-off curve $f$. This is always the case under the assumption of the statement, as $\eta$-TV implies $f(\alpha) = 1 - \alpha - \eta$ by \cref{eq:dp-to-f}.
    Denoting by $\success = \E_{z \sim P} \E_{\theta \sim M(\bar S \cup \{z\})}[q(z; \theta)]$ and $\baseline = \sup_{\theta \in \Theta} \E_{z \sim P}[q(z; \theta)]$, recall that by \cref{stmt:avg-to-sup-baseline}:
    \[
         \success - \baseline \leq 1 - f(\baseline) - \baseline
    \]
    To make the r.h.s. independent of the baseline, let us maximize it over all possible baseline values:
    \[
        1 - f(\alpha) - \alpha \leq \max_{\alpha \in [0, 1]} 1 - f(\alpha) - \alpha \leq \eta,
    \]
    where the last inequality is by \cref{stmt:dps-to-tv}. Therefore, $\success - \baseline \leq \eta$, which is exactly the sought result.
    
\end{proof}

\subsection{Formal Versions of the Simplified Statements in the Main Body}

We can now formally re-state \cref{stmt:f-to-succ-informal,stmt:tv-to-adv-informal}:
\begin{theorem}[Formal version of \cref{stmt:f-to-succ-informal}]
    Suppose that the algorithm $M: 2^\sD \rightarrow \Theta$ satisfies $f$-DP w.r.t. either the add-remove or replace-one relation. Then, the following hold:
    \begin{itemize}
        \item \textit{SPSO.} For any $n \geq 1, k \geq 2$, partial dataset $\bar S \in \sD^{n-1}$, data distribution $\prior$ over a candidate set $\sW \subseteq \sD$ of size $|\sW| = k$, weight $w \in [0, 1]$, and any adversary $\calA_{M, \bar S, P, w}: \Theta \rightarrow \sQ_{\bar S, \prior}$ for $\sQ_{\bar S, \prior} \subseteq \{p \mid p: \sD \rightarrow \{0, 1\}, \sum_{z' \in \bar S} p(z') = 0, \E_\prior[p] \leq w \}$, we have:
        \[
            \success_\text{SPSO}(M, \bar S, \prior, w; \calA_{M, \bar S, \prior, w}) \leq 1 - f(\baseline_\text{SPSO}(\bar S, \prior, w)).
        \]
        \item \textit{SRR.} For any $n \geq 1$, partial dataset $\bar S \in \sD^{n-1}$, data distribution $\prior$ over $\sD$, loss function $\ell: \sD \times \sD \rightarrow \sR$, threshold $\gamma \in \sR$, and any adversary $\calA_{M, \bar S, P}: \Theta \rightarrow \sD$, we have:
        \[
            \success_\text{SRR}(M, \bar S, \prior; \calA_{M, \bar S, \prior}, \ell, \gamma) \leq 1 - f(\baseline_\text{SRR}(\prior; \ell, \gamma)).
        \]
        \item \textit{SAI.} For any $n \geq 1$, set of attributes $\sA = \{1, \ldots, k\}$, mapping from records to attributes $a: \sD \rightarrow \sA$, partial dataset $\bar S \in \sD^{n-1}$, data distribution $\prior$ over $\sD$, and any adversary $\calA_{M, \bar S, P}: \Theta \rightarrow \sA$, we have:
        \[
            \success_\text{SAI}(M, \bar S, \prior; \calA_{M, \bar S, \prior}, a) \leq 1 - f(\baseline_\text{SAI}(\prior, a)).
        \]
    \end{itemize}
\end{theorem}
\begin{proof}
    Consider the mechanism $\calA(M(\cdot))$, where we omit the subscripts for brevity. It satisfies $f$-DP by post-processing. Let us denote the output space of this mechanism as $\sY$, i.e., $\sY = \sQ_{\bar S, \prior}$ for SPSO, $\sY = \sD$ for SRR, and $\sY = \sA$ for SAI. The upper bounds follow immediately from \cref{stmt:avg-to-sup-baseline} by taking an appropriate choice of $q: \sD \times \sY \rightarrow [0, 1]$:
    \begin{equation}
        \label{eq:proof-upper-bound-instantiations}
        \begin{aligned}
        \textit{SPSO:}\quad & q(z; p) = \id[p(z) = 1] \\
        \textit{SRR:}\quad & q(z; \hat z) = \id[\ell(z, \hat z) \leq \gamma] \\
        \textit{SAI:}\quad & q(z; \hat a) = \id[a(z) = \hat a]
        \end{aligned}
    \end{equation}
\end{proof}

\begin{theorem}[Formal version of \cref{stmt:tv-to-adv-informal}]
    Suppose that the algorithm $M: 2^\sD \rightarrow \Theta$ satisfies $\eta$-TV privacy w.r.t. either the add-remove or replace-one relation. Then, the following hold:
    \begin{itemize}
        \item \textit{SPSO.} For any $n \geq 1, k \geq 2$, partial dataset $\bar S \in \sD^{n-1}$, data distribution $\prior$ over a candidate set $\sW \subseteq \sD$ of size $|\sW| = k$, weight $w \in [0, 1]$, and any adversary $\calA_{M, \bar S, P, w}: \Theta \rightarrow \sQ_{\bar S, \prior}$ for $\sQ_{\bar S, \prior} \subseteq \{p \mid p: \sD \rightarrow \{0, 1\}, \sum_{z' \in \bar S} p(z') = 0,  \E_\prior[p] \leq w \}$, we have:
        \[
            \success_\text{SPSO}(M, \bar S, \prior, w; \calA_{M, \bar S, \prior, w}) - \baseline_\text{SPSO}(\bar S, \prior, w) \leq \eta.
        \]
        \item \textit{SRR.} For any $n \geq 1$, partial dataset $\bar S \in \sD^{n-1}$, data distribution $\prior$ over $\sD$, loss function $\ell: \sD \times \sD \rightarrow \sR$, threshold $\gamma \in \sR$, and any adversary $\calA_{M, \bar S, P}: \Theta \rightarrow \sD$, we have:
        \[
            \success_\text{SRR}(M, \bar S, \prior; \calA_{M, \bar S, \prior}, \ell, \gamma) - \baseline_\text{SRR}(\prior; \ell, \gamma) \leq \eta.
        \]
        \item \textit{SAI.} For any $n \geq 1$, set of attributes $\sA = \{1, \ldots, k\}$, mapping from records to attributes $a: \sD \rightarrow \sA$, partial dataset $\bar S \in \sD^{n-1}$, data distribution $\prior$ over $\sD$, and any adversary $\calA_{M, \bar S, P}: \Theta \rightarrow \sA$, we have:
        \[
            \success_\text{SAI}(M, \bar S, \prior; \calA_{M, \bar S, \prior}, a) - \baseline_\text{SAI}(\prior, a) \leq \eta.
        \]
    \end{itemize}
\end{theorem}
\begin{proof}
    As before, we obtain the results by applying \cref{stmt:generalized-tv-to-adv} to a mechanism $\calA(M(\cdot))$, which satisfies $f$-DP by post-processing, and an appropriately chosen $q(z; \cdot)$ in \cref{eq:proof-upper-bound-instantiations}.
\end{proof}

\section{Tighter Bounds under a Bernoulli Prior}
\label{app:specialized-threat-models}

In this section, we provide a tighter bound than in \cref{stmt:f-to-succ-informal} assuming a known form of the prior distribution $\prior$ over $\sD$, specifically, $z = z_b$ with $b \sim \mathsf{Bern}(\pi)$ for some $\pi \in [0, 1]$. 
This setting can model two threats: (1) SAI of a single binary attribute with a non-uniform prior~\cite[see, e.g.,][]{guo2023analyzing} or, equivalently, SRR with only two points in the support of the prior distribution $\prior$, and (2) SMIA with a non-uniform prior probability of membership~\cite{jayaraman2021revisiting}.

We make use of the notion of the Bayes error of the attacker~\cite{chatzikokolakis2023bayes}.
\begin{definition}[Bayes error for $f$-DP, \citealp{kaissis2024beyond}]\label{def:bayes-error}
    Suppose that $f$ is a valid trade-off function according to \cref{def:f-dp}. For any $\pi \in [0, 1]$, we define the \emph{Bayes error} w.r.t.\@ the prior $\pi$ as follows:
    \[
        R_{f}(\pi) \define \min_{\alpha \in [0, 1]} (\pi \alpha + (1 - \pi) f(\alpha)).
    \]
\end{definition}

Given a numeric representation of a trade-off curve $f$, Bayes error can be easily computed numerically by, e.g., grid search over a grid of $\alpha$ values. Additionally, we show an elegant way to obtain the Bayes error directly from the privacy profile $\delta(\varepsilon)$, which was previously unknown in the context of DP:

\begin{proposition}[Based on \citealp{sason2016f}, Eq. (421)]
    Suppose that a mechanism $M(\cdot)$ satisfies $f$-DP and has an associated privacy profile $\delta(\varepsilon)$ w.r.t.\@ any neighbourhood relation. Then, the Bayes error is bounded:
    \begin{equation}
        R_f(\pi) \geq (1 - \pi) \cdot \left[ 1 - \delta \big(\mathrm{logit}(\pi)\big) \right], \quad \text{ where } \mathrm{logit}(\pi) \define \log \left( \frac{\pi}{1 - \pi}\right).
    \end{equation}
\end{proposition}

Next, we show a bound on success of SAI/SMIA under Bernoulli prior using the Bayes error.
\begin{theorem}\label{stmt:bernoulli-prior-bound-replace-one}
    Fix any partial dataset $\bar S \in \sD^{n-1}$ with $n \geq 1$.
    Suppose that $M: \sD^n \rightarrow \Theta$ satisfies $f$-DP w.r.t.\@ the replace-one relation. Then, for any prior probability $\pi \in [0, 1]$, a set of any two candidate records $\{ z_0, z_1 \} \subseteq \sD$, and any score function $\hat b: \Theta \rightarrow [0, 1]$ indicating adversary's confidence of whether $z_1$ or $z_0$ was used for training, we have:
    \begin{equation}\label{eq:bernoulli-prior-bound-replace-one}
        \begin{aligned}
        \E_{b \sim \mathsf{Bern}(\pi)} \E_{\theta \sim M(\bar S \cup \{z_b\})} \left[ q(z_b; \theta) \right] &\leq 1 - R_f(\pi), \\
        \end{aligned}
    \end{equation}
    where we define the success indicator $q(z; \theta)$ as follows:
    \begin{equation}
        q(z; \theta) \define \begin{cases}
            \hat b(\theta), & \text{ if } z = z_1 \\
            1 - \hat b(\theta), & \text{ if } z = z_0
        \end{cases}.
    \end{equation}
\end{theorem}
\vspace{-1.5em}
\begin{proof}
We can decompose the l.h.s.\@ in \cref{eq:bernoulli-prior-bound-replace-one} as follows:
\begin{align}
    \label{eq:proof-bayes-err-decomp}
    \E_{\substack{b \sim \mathsf{Bern}(\pi) \\ \theta \sim M(\bar S \cup \{z_b\})} }[q(z_b; \theta)]
    &= (1 - \pi) \cdot \E_{\theta \sim M(\bar S \cup \{z_0\}) }[q(z_0; \theta)] 
     + \pi \cdot \E_{\theta \sim M(\bar S \cup \{z_1\}) }[q(z_1; \theta)].
\end{align}
Observe that in the setup of this statement, we have $q(z_1; \theta) = 1 - q(z_0; \theta)$. Denote by $\alpha = \E_{\theta \sim M(\bar S \cup \{z_0\})}[q(z_1; \theta)] = 1 - \E_{\theta \sim M(\bar S \cup \{z_0\})}[q(z_0; \theta)]$. By \cref{stmt:f-dp-alt}, we have:
\begin{align*}
     \E_{\theta \sim M(\bar S \cup \{z_1\}) }[q(z_1; \theta)] \leq 1 - f(\alpha).
\end{align*}
Therefore, we can upper bound \cref{eq:proof-bayes-err-decomp} as:
\begin{align*}
   &\leq (1 - \pi) \cdot (1 - \alpha) + \pi \cdot (1 - f(\alpha)) \\
   &= 1 - R_{f}(\pi),
\end{align*}
where the last equality is by \cref{def:bayes-error}. 
\end{proof}

\paragraph{Experimental evaluation.} We compare our bounds in \cref{stmt:f-to-succ-informal}, which are applicable to any prior, and the specialized result in \cref{stmt:bernoulli-prior-bound-replace-one} for Bernoulli priors. 
We additionally compare these results to the bounds on SRR based on RDP as in \cref{sec:exp}, and the bounds on SAI based on the Fano's inequality~\cite{guo2023analyzing}.

For this comparison, we use Gaussian mechanism with noise scales $\sigma \in \{0.5, 1.0, 2.0\}$, analyzed under replace-one relation. We use $\delta = 10^{-5}$ to compute $\varepsilon$. We use two versions of the bounds from \citet{guo2023analyzing}: (1) based on an analytical bound on mutual information (MI) between Gaussians, and a Monte-Carlo (MC) method using \num{10000} samples (we repeat each run 10 times). Note that the latter method only provides an estimate of a bound, unlike all other evaluated approaches. We show the result in \cref{fig:sai} (error bars for the MC approach are not visible). We can see that for all noise parameters, and all baselines, the bound in \cref{stmt:bernoulli-prior-bound-replace-one} outperforms the other bounds.

\begin{figure}
    \centering
    \includegraphics[width=\figwidth]{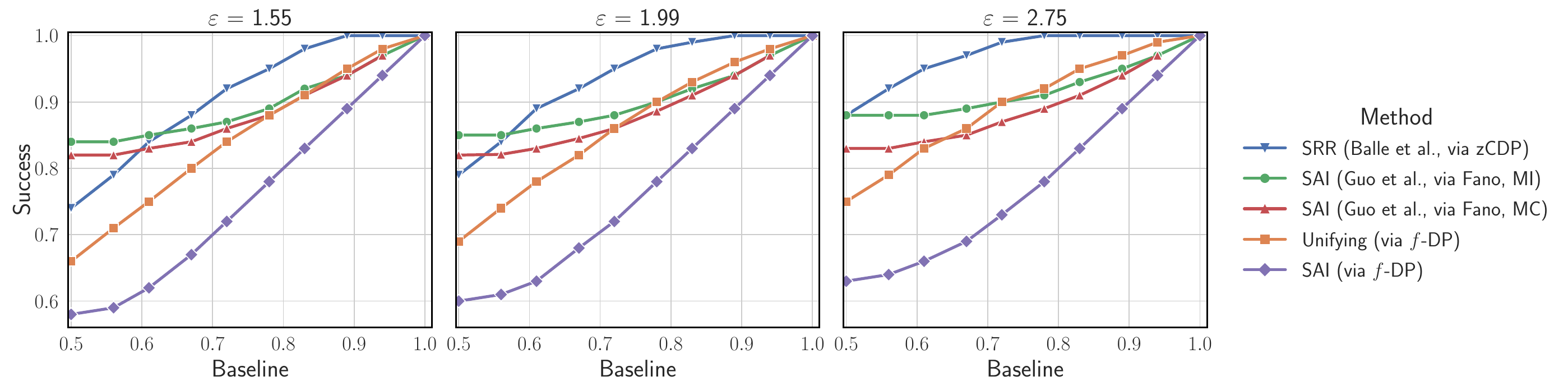}
    \caption{\textbf{We can significantly tighten the bounds in the setting of binary attribute inference (SAI, via $f$-DP), outperforming the prior bound based on Fano's inequality.}}
    \label{fig:sai}
\end{figure}

\section{Generalization Bounds}
\label{app:generalization}

In this section, we extend the previous results to the setting in which the dataset is assumed to be sampled i.i.d. from some data distribution $\prior$. The results in this section assume the replace-one neighbourhood relation.

\begin{restatable}[On-average generalization from $f$-DP]{theorem}{generalization1}\label{stmt:generalization1}
    Suppose that $M: \sD^n \to \Theta$ satisfies $f$-DP w.r.t. replace-one relation. Then, for any bounded loss function $q: \sD \times \Theta \rightarrow [0, 1]$, and any probability distribution $\prior$ over $\sD$, we have:
    \begin{align}
        \underbrace{\E_{\substack{S \sim \prior^n \\ \theta \sim M(S)}} \frac{1}{n} \sum_{i = 1}^{n} q(z_i; \theta)}_{\text{on-average train loss}} \leq 1 - f\underbrace{\left(\E_{\substack{S \sim \prior^n,~z \sim P \\ \theta \sim M(S)}} q(z; \theta)\right)}_{\text{on-average test loss}}
        \label{eq:generalization-train-to-test} \\
        \underbrace{\E_{\substack{S \sim \prior^n,~z \sim P \\ \theta \sim M(S)}} q(z; \theta)}_{\text{on-average test loss}} \leq 1 - f\underbrace{\left( \E_{\substack{S \sim \prior^n \\ \theta \sim M(S)}} \frac{1}{n} \sum_{i = 1}^{n} q(z_i; \theta) \right)}_{\text{on-average train loss}},
        \label{eq:generalization-test-to-train}
    \end{align}
    where $S = (z_1, z_2, \ldots, z_n)$.
\end{restatable}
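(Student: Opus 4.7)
The plan is to reduce both inequalities to Lemma~\ref{stmt:dataset-specific-generalization} applied at a single fixed coordinate, and then recover the full on-average train loss by exchangeability of $\prior^n$. Fix an index $i \in \{1,\ldots,n\}$, write $S = (z_1,\ldots,z_n)$, and let $\bar S_i$ denote the sub-tuple $(z_1,\ldots,z_{i-1},z_{i+1},\ldots,z_n)$. Let $z' \sim \prior$ be independent of $S$. Viewing $\bar S_i$ as the partial dataset, Lemma~\ref{stmt:dataset-specific-generalization} under the replace-one relation, applied to the bounded function $q$ with ``unknown record'' $z_i$ and replacement record $z'$, gives
\begin{equation*}
    \E_{z_i \sim \prior}\E_{\theta \sim M(S)}[q(z_i;\theta)] \leq 1 - f\!\left(\E_{z_i \sim \prior}\E_{\theta \sim M(\bar S_i \cup \{z'\})}[q(z_i;\theta)]\right),
\end{equation*}
and by symmetry of the replace-one relation ($f$-DP holds in both directions) the analogous bound holds with $M(S)$ and $M(\bar S_i \cup \{z'\})$ interchanged.

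Next, I would take the outer expectation over $\bar S_i \sim \prior^{n-1}$ and $z' \sim \prior$ and push it inside $f$ via Jensen (Lemma~\ref{stmt:push-exp-inside}). The key observation is that $z_i$ is independent of $\bar S_i \cup \{z'\}$, and that $\bar S_i \cup \{z'\}$ is itself an i.i.d.\ sample from $\prior^n$; renaming it $\tilde S$ shows that the inner expectation on the right-hand side equals the on-average test loss $\E_{\tilde S \sim \prior^n,\, z \sim \prior,\, \theta \sim M(\tilde S)}[q(z;\theta)]$, independent of $i$. The left-hand side becomes $\E_{S,\theta \sim M(S)}[q(z_i;\theta)]$, which by exchangeability of $\prior^n$ has the same value for every $i$ and hence equals $\E_{S,\theta}[\tfrac{1}{n}\sum_j q(z_j;\theta)]$, the on-average train loss.

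Combining these identifications, the first bound yields~(\ref{eq:generalization-train-to-test}) and the swapped bound yields~(\ref{eq:generalization-test-to-train}). The only non-mechanical step is careful bookkeeping: verifying that after all expectations are taken, $z_i$ is still independent of the dataset actually fed to $M$ on the right-hand side so that one genuinely recovers the test loss, and using exchangeability to collapse $\tfrac{1}{n}\sum_i \E[q(z_i;\theta)]$ to a single summand. No new inequality beyond $f$-DP (in the variational form of Lemma~\ref{stmt:f-dp-alt}) and Jensen is required; the lemma is essentially the i.i.d.\ reinterpretation of Lemma~\ref{stmt:dataset-specific-generalization}.
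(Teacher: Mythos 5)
Your proof is correct and takes essentially the same route as the paper's: both start from the pointwise $f$-DP inequality of \cref{stmt:f-dp-alt} (applied under replace-one to $\bar S \cup \{z\}$ vs.\ $\bar S \cup \{z'\}$), average over $z$, $z'$, and the partial dataset, push the averages inside $f$ by Jensen via \cref{stmt:push-exp-inside}, and identify the resulting quantities as the on-average train and test losses using exchangeability of $\prior^n$. The only presentational difference is that you route through \cref{stmt:dataset-specific-generalization} as an intermediate (performing Jensen first over $z_i$ and then over $\bar S_i, z'$, and spelling out the exchangeability argument explicitly for a fixed coordinate $i$), whereas the paper applies a single joint Jensen over $z, z', \bar S$ and leaves the exchangeability implicit in writing the left-hand side as $\E_{z \sim \mathsf{Unif}[S]}$.
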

\begin{proof}
    Fix any $z, z' \in \sD$, and $\bar S \in \sD^{n-1}$. By \cref{stmt:f-dp-alt} and the post-processing property of $f$-DP, we have:
    \begin{equation}
    \label{eq:proof-lemma-application-2}
        \begin{aligned}
        \E_{\theta \sim M(\bar S \cup \{z\})} q(z; \theta) &\leq 1 - f\left(\E_{\theta \sim M(\bar S \cup \{z'\})} q(z; \theta) \right) \\
        \E_{\theta \sim M(\bar S \cup \{z'\})} q(z; \theta) &\leq 1 - f\left(\E_{\theta \sim M(\bar S \cup \{z\})} q(z; \theta) \right).
        \end{aligned}
    \end{equation}
    By using \cref{stmt:push-exp-inside} after taking the expectation over $z, z' \sim \prior$ and $\bar S \sim \prior^{n-1}$ of both sides in \cref{eq:proof-lemma-application-2}, we have:
    \[
        \begin{aligned}
        \E_{\substack{S \sim \prior^n \\ z \sim \mathsf{Unif}[S] \\ \theta \sim M(S)}} q(z; \theta) &\leq 1 - f\left(\E_{\substack{S \sim \prior^n \\ z \sim \prior \\ \theta \sim M(S)}} q(z; \theta) \right) \\
        \E_{\substack{S \sim \prior^n \\ z \sim \prior \\ \theta \sim M(S)}} q(z; \theta) &\leq 1 - f\left(\E_{\substack{S \sim \prior^n \\ z \sim \mathsf{Unif}[S] \\ \theta \sim M(S)}} q(z; \theta)\right)
        \end{aligned}
    \]
    The forms in \cref{eq:generalization-test-to-train,eq:generalization-train-to-test} are equivalent as $\E_{z \sim \mathsf{Unif}[S]} q(z; \theta) = \frac{1}{n} \sum_{i = 1}^n q(z_i; \theta)$.
\end{proof}

By applying \cref{stmt:tv-to-adv-informal}, we recover the prior result that $\sup_{q:~\sD \times \Theta \rightarrow [0, 1]}|\mathsf{err}_\text{tr}(q) - \mathsf{err}_\text{test}(q)| \leq \eta$ for $\eta$-TV algorithms~\cite{kulynych2022what}. 
Next, we show a related result which covers non-linear queries, i.e., those that do not decompose linearly over the dataset. For this, we make use of the \emph{group privacy} property of $f$-DP:
\begin{proposition}[\citealp{dong2019gaussian}]\label{stmt:f-group-privacy}
    Suppose that $M(\cdot)$ satisfies $f$-DP w.r.t.\@ replace-one relation. Then, the algorithm satisfies $f^{(k)}$-DP w.r.t.\@ replace-k relation $S \simeq S'$ ($|S| = |S'|$ but differ by exactly $k$ points), where $f^{(k)} = 1 - (1 - f)^{\circ k}$, with $f^{\circ k}$ denoting repeated $k$-fold composition of the function as $f \underbrace{\circ \cdots \circ}_{k} f(x)$. The function $f^{(k)}$ is a valid trade-off curve as per \cref{def:f-dp}.
\end{proposition}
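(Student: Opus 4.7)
The plan is to proceed by induction on $k$, using the variational characterization of $f$-DP from \cref{stmt:f-dp-alt}. Given any pair of datasets $S \simeq_k S'$ differing in exactly $k$ records, I would construct a chain $S = S_0, S_1, \ldots, S_k = S'$ in which consecutive datasets are replace-one neighbors, which is possible by swapping out the $k$ differing records one at a time. By \cref{stmt:f-dp-alt}, the $f$-DP hypothesis says that for every measurable $E$ and every such neighbor pair, $\Pr[M(S_i) \in E] \leq 1 - f(\Pr[M(S_{i+1}) \in E])$.

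Setting $g \define 1 - f$, note that $g: [0,1] \to [0,1]$ is non-decreasing (since $f$ is non-increasing), which is precisely the property that allows iteration:
\[
    \Pr[M(S) \in E] \leq g(\Pr[M(S_1) \in E]) \leq g \circ g(\Pr[M(S_2) \in E]) \leq \cdots \leq g^{\circ k}(\Pr[M(S') \in E]).
\]
Rewriting yields $\Pr[M(S) \in E] \leq 1 - f^{(k)}(\Pr[M(S') \in E])$, and invoking \cref{stmt:f-dp-alt} in the reverse direction gives $f^{(k)}$-DP w.r.t.\ the replace-$k$ relation. Working in the variational form of \cref{stmt:f-dp-alt} is what makes this clean: the monotonicity of $g$ plus the freedom to choose $E$ makes the chaining transparent, whereas manipulating trade-off curves $T(P,Q)$ directly would require extra Neyman--Pearson arguments at each hop.

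It remains to verify that $f^{(k)}$ is itself a valid trade-off function. Since $f$ is convex, continuous, and non-increasing, $g = 1 - f$ is concave, continuous, and non-decreasing; composition preserves these three properties, so $g^{\circ k}$ is concave, continuous, and non-decreasing, hence $f^{(k)} = 1 - g^{\circ k}$ is convex, continuous, and non-increasing. For the pointwise bound $f^{(k)}(\alpha) \leq 1 - \alpha$, the hypothesis $f(\alpha) \leq 1 - \alpha$ translates to $g(\alpha) \geq \alpha$; by monotonicity of $g$ and induction on $k$, $g^{\circ k}(\alpha) \geq \alpha$, so $f^{(k)}(\alpha) \leq 1 - \alpha$. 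The main obstacle I anticipate is notational rather than mathematical---one must be careful that the argument of $g$ evolves along the chain of intermediate datasets rather than being fixed---but once the chain is set up and \cref{stmt:f-dp-alt} is invoked, the induction collapses to a single iteration of a monotone map.
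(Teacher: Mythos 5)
Your proof is correct, and the paper itself does not include a proof of this statement---it is cited directly from \citet{dong2019gaussian}. Your chaining argument through the variational form of \cref{stmt:f-dp-alt} is a clean way to establish it: decomposing $S \to S'$ into $k$ replace-one hops, applying $\Pr[M(S_i)\in E] \le g(\Pr[M(S_{i+1})\in E])$ with $g = 1-f$, and using monotonicity of $g$ to telescope gives exactly $\Pr[M(S)\in E] \le g^{\circ k}(\Pr[M(S')\in E])$. The original argument in Dong et al.\ works directly at the level of trade-off functions and the Neyman--Pearson lemma, so your route through the event-level characterization is somewhat more elementary, at the small cost of leaning on the full ``iff'' of \cref{stmt:f-dp-alt}. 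One logical-ordering note: you should verify that $f^{(k)}$ is a valid trade-off function \emph{before} invoking the $\impliedby$ direction of \cref{stmt:f-dp-alt}, since that direction is only available for valid trade-off functions; you do carry out the verification correctly (composition of concave, continuous, non-decreasing self-maps of $[0,1]$, plus $g(\alpha)\ge\alpha$ propagating by monotonicity), so the proof goes through once reordered.
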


We can now show the result for non-linear queries:
\begin{restatable}{theorem}{generalization2}\label{stmt:generalization2}
    Suppose that $M: \sD^n \to \Theta$ satisfies $f$-DP w.r.t.\@ replace-one relation. Then, for any bounded function $R: \sD^n \times \Theta \rightarrow [0, 1]$, and any probability distribution $\prior$ over $\sD$, we have:
    \begin{align}
         \E_{\substack{S \sim \prior^n \\ \theta \sim M(S)}} R(S; \theta) &\leq 1 - f^{(n)}\left(\E_{\substack{S, T \sim \prior^n \\ \theta \sim M(S)}} R(T; \theta) \right) \\
        E_{\substack{S, T \sim \prior^n \\ \theta \sim M(S)}} R(T; \theta) &\leq 1 - f^{(n)}\left(\E_{\substack{S \sim \prior^n \\ \theta \sim M(S)}} R(T; \theta)\right).
    \end{align}
\end{restatable}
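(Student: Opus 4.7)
The plan is to reduce this lemma to the same variational-plus-Jensen argument that powered \cref{stmt:dataset-specific-generalization,stmt:generalization1}, but upgraded from the replace-one relation to the replace-$n$ relation via the group privacy property of $f$-DP. The key observation is that any two datasets $S, T \in \sD^n$ differ in at most $n$ records, so they are always within replace-$n$ distance.

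First I would fix arbitrary $S, T \in \sD^n$ and invoke \cref{stmt:f-group-privacy}: since $M(\cdot)$ is $f$-DP w.r.t.\@ replace-one, it is $f^{(n)}$-DP w.r.t.\@ replace-$n$, hence the pair $(M(S), M(T))$ satisfies the $f^{(n)}$-DP constraint. Then I would apply the alternative representation of $f$-DP from \cref{stmt:f-dp-alt} to the bounded function $\theta \mapsto R(T; \theta)$ (the indicator form extends to bounded $[0,1]$-valued functions just as it was used in the proof of \cref{stmt:dataset-specific-generalization}), yielding
\begin{equation*}
    \E_{\theta \sim M(S)} R(T; \theta) \leq 1 - f^{(n)}\!\left( \E_{\theta \sim M(T)} R(T; \theta) \right),
\end{equation*}
and symmetrically with $S$ and $T$ swapped.

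Next I would take the expectation of both sides over independent draws $S, T \sim \prior^n$. On the left this produces the mixed quantity $\E_{S, T \sim \prior^n, \theta \sim M(S)} R(T; \theta)$ (or, with $S, T$ swapped, the on-average train-like quantity $\E_{S \sim \prior^n, \theta \sim M(S)} R(S; \theta)$). On the right, since $f^{(n)}$ is again a valid trade-off function by \cref{stmt:f-group-privacy}, I can push the outer expectation inside $f^{(n)}$ by applying \cref{stmt:push-exp-inside} (Jensen's inequality for the concave increasing function $1 - f^{(n)}$). After the push, the argument of $f^{(n)}$ loses its dependence on the unused dataset, and relabeling $T \to S$ (or vice versa) recovers exactly the two inequalities in the statement.

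\textbf{Main obstacle.} There is no real technical difficulty; the only subtlety to get right is the bookkeeping of which dataset is being replaced by which, so that Jensen's inequality yields an inner expectation that can be cleanly renamed into the claimed on-average train loss $\E_{S \sim \prior^n, \theta \sim M(S)} R(S; \theta)$ or the cross-term $\E_{S, T \sim \prior^n, \theta \sim M(S)} R(T; \theta)$. The only conceptual ingredient beyond the proof of \cref{stmt:generalization1} is the use of $n$-fold group privacy, which is needed because $R$ no longer decomposes as a sum of per-record losses and so one cannot reduce to changing a single record at a time.
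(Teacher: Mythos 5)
Your proposal is correct and follows essentially the same route as the paper's proof: both invoke group privacy (\cref{stmt:f-group-privacy}) to promote $f$-DP under replace-one to $f^{(n)}$-DP between any two size-$n$ datasets, apply the variational form of $f$-DP (\cref{stmt:f-dp-alt}, applied to the bounded query as in \cref{stmt:dataset-specific-generalization}), take the i.i.d. expectation over $S, T \sim \prior^n$, push it inside via Jensen (\cref{stmt:push-exp-inside}), and relabel. The only cosmetic difference is that you fix the query as $\theta \mapsto R(T;\theta)$ whereas the paper fixes $\theta \mapsto R(S;\theta)$; after averaging over i.i.d.\@ $S$ and $T$ and renaming, the two choices yield identical inequalities.
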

\begin{proof}
    Fix any $S \in \sD^{n}$ and $T \in \sD^{n}$. By \cref{stmt:f-dp-alt} and the group privacy property in \cref{stmt:f-group-privacy}, we have:
    \begin{equation}
    \label{eq:proof-lemma-application-3}
        \begin{aligned}
        \E_{\theta \sim M(S)} R(S; \theta) &\leq 1 - f^{(n)}\left(\E_{\theta \sim M(T)} R(S; \theta) \right) \\
        \E_{\theta \sim M(T)} R(S; \theta) &\leq 1 - f^{(n)}\left(\E_{\theta \sim M(S)} R(S; \theta) \right).
        \end{aligned}
    \end{equation}
    
    By using \cref{stmt:push-exp-inside} after taking the expectation over $S \sim \prior^n$, and $T \sim \prior^n$  of both sides in \cref{eq:proof-lemma-application-3}, we have:
    \[
        \begin{aligned}
         \E_{\substack{S \sim \prior^n \\ \theta \sim M(S)}} R(S; \theta) &\leq 1 - f^{(n)}\left(\E_{\substack{S, T \sim \prior^n \\ \theta \sim M(T)}} R(S; \theta) \right) \\
        \E_{\substack{S, T \sim \prior^n \\ \theta \sim M(T)}} R(S; \theta) &\leq 1 - f^{(n)}\left(\E_{\substack{S \sim \prior^n \\ \theta \sim M(S)}} R(S; \theta)\right)
        \end{aligned}
    \]
    We get the desired result by renaming the variables $S$ and $T$ in the $\E_{S, T}[R(S; M(T))]$ terms.
\end{proof}

In \cref{app:extra-bounds}, we use this result to bound the notion of narcissus resiliency~\cite{cohen2025data} using $f$-DP. Unfortunately, the privacy guarantees obtained using this approach can quickly become vacuous, as they scale with the dataset size.

\section{Bounds for Other Risk Notions}
\label{app:extra-bounds}

In this section, we show additional bounds based on $f$-DP for other risk notions beyond the strong-adversary notions in \cref{sec:risks}.

\paragraph{Counterfactual memorization and influence.}
Using the tools in \cref{app:proofs,app:generalization}, we can bound notions of memorization~\cite{feldman2019does, zhang2023counterfactual}. First, we define cross-influence:

\begin{definition}[Cross-influence]
Fix a partial dataset $\bar S \in \bbD^{n-1}$, two records $z, z' \in \sD$, and a bounded loss function $\ell: \sD \times \sD \rightarrow [0, 1]$.
We define the cross-influence of $z'$ on $z$ as:
\begin{equation}
   \xinf(z \Leftarrow z') \define \E_{\theta \sim \mech(\bar S \cup\{z'\})}\left[\ell(z; \theta)\right] -
   \bbE_{z'' \sim P,\;\theta \sim \mech(\bar S \cup \{z''\})}\left[\ell(z; \theta)\right].
\end{equation}
\end{definition}
When $z=z'$, we obtain the special case of \emph{counterfactual self-influence}, i.e., memorization.
\begin{definition}[Counterfactual memorization]
Fix $\bar S \in \sD^{n-1}$ to be a partial dataset, $z \in \bbD$ to be a fixed record, $\prior$ to be a distribution over $\sD$, and a bounded loss function $\ell: \sD \times \sD \rightarrow [0, 1]$.
We define the memorization of $z$ as:
\begin{equation}
   \mem(z) \define \E_{\theta \sim \mech(\bar S \cup\{z\})}\left[\ell(z; \theta)\right] - 
   \bbE_{z' \sim \prior,\;\theta \sim \mech(\bar S \cup \{z'\})}\left[\ell(z; \theta)\right].
\end{equation}
\end{definition}
For the special case that $\ell(z; \theta)$ is a decision rule implementing a membership inference attack which aims to infer the membership of $z$, we can write the SMIA advantage in a replace-one neighbourhood model as follows:
\begin{equation}
    \adv_{\text{SMIA}'}(z) 
    \define \underbrace{\E_{\theta \sim \mech(\bar S \cup \lbrace z \rbrace)}\left[\phi(z; \theta)\right]}_{\text{TPR of } \phi} - 
    \underbrace{\E_{z' \sim \prior,\;\theta \sim \mech(\bar S \cup \lbrace z' \rbrace)}\left[\phi(z; \theta)\right]}_{\text{FPR of } \phi}.
\end{equation}

The next result then follows by recognizing that $\mem$ recovers the definition of $\adv_{\text{SMIA}'}$, and that $\adv_{\text{SMIA}'}$ is upper-bounded by the TV privacy parameter:
\begin{proposition}[$\eta$-TV privacy bounds memorization/advantage]
Suppose that $\mech: \sD^n \rightarrow \Theta$ satisfies $\eta$-TV privacy w.r.t.\@ the replace-one neighbourhood relation. Then, for any $n \geq 1$, probability distribution $\prior$ over $\sD$, partial dataset $\bar S \in \sD^{n-1}$, any bounded $\ell:\mathbb{D} \times \Theta \rightarrow [0,1]$, and any target record $z \in \sD$, it holds that:
\begin{equation}
    \mem(z) \leq \eta \; \text{and} \; \adv_{\text{SMIA}'}(z) \leq \eta.
\end{equation}
\end{proposition}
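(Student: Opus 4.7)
Both quantities $\mem(z)$ and $\adv_{\text{SMIA}'}(z)$ share the same structural form: a difference
\[
\E_{\theta \sim M(\bar S \cup \{z\})}[g(\theta)] - \E_{z' \sim \prior}\E_{\theta \sim M(\bar S \cup \{z'\})}[g(\theta)],
\]
where $g: \Theta \to [0,1]$ is a bounded function of $\theta$ alone once $z$ is held fixed (take $g(\theta) = \ell(z; \theta)$ for $\mem$ and $g(\theta) = \phi(z; \theta)$ for $\adv_{\text{SMIA}'}$). The plan is therefore to prove a single lemma of the form ``the above difference is at most $\eta$'' and apply it twice.

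The argument I would use is pointwise-then-average. For any fixed $z' \in \sD$, the datasets $\bar S \cup \{z\}$ and $\bar S \cup \{z'\}$ are either identical (when $z = z'$) or replace-one neighbours (otherwise). In both cases the $\eta$-TV privacy assumption, via the definition in \cref{def:tv-privacy}, guarantees that the output distributions $M(\bar S \cup \{z\})$ and $M(\bar S \cup \{z'\})$ satisfy $\sup_{E \subseteq \Theta} (\Pr[M(\bar S \cup \{z\}) \in E] - \Pr[M(\bar S \cup \{z'\}) \in E]) \leq \eta$. By the variational characterization of total variation distance for $[0,1]$-valued test functions, this yields
\[
\E_{\theta \sim M(\bar S \cup \{z\})}[g(\theta)] - \E_{\theta \sim M(\bar S \cup \{z'\})}[g(\theta)] \leq \eta.
\]
Taking expectations over $z' \sim \prior$ on both sides (the left term is independent of $z'$) gives exactly $\mem(z) \leq \eta$, and the analogous substitution $g(\theta) = \phi(z; \theta)$ gives $\adv_{\text{SMIA}'}(z) \leq \eta$.

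\textbf{Expected obstacle.} There is essentially no technical obstacle: the proof is a direct application of the variational form of TV distance, followed by linearity of expectation. The only minor subtlety is noting that the pointwise inequality remains valid when $z = z'$, in which case the two datasets coincide and the inequality is trivial with slack $0 \leq \eta$. Notably, because $g$ only depends on the fixed target $z$ and not on the $z'$ being averaged over, one does \emph{not} need the Jensen-style maneuver underlying \cref{stmt:generalized-tv-to-adv}; the simple pointwise-and-average argument suffices.
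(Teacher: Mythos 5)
Your proof is correct, and it takes a genuinely different and more elementary route than the paper's. The paper's proof passes through $f$-DP: it invokes the variational representation of $f$-DP (\cref{stmt:f-dp-alt}), applies Jensen's inequality (\cref{stmt:push-exp-inside}) to push $\E_{z'\sim\prior}$ inside $f(\cdot)$, and then invokes $\max_\alpha(1 - f(\alpha) - \alpha) \le \eta$ from \cref{stmt:dps-to-tv}. This lets the paper reuse machinery built for \cref{stmt:avg-to-sup-baseline}, where the query function can genuinely depend on the averaged variable. Your observation that the dependence on $z'$ is absent here (the test function $g(\theta) = \ell(z;\theta)$ or $\phi(z;\theta)$ is fixed once $z$ is fixed, and only the \emph{distribution} of $\theta$ depends on $z'$) is exactly what makes the pointwise-then-average argument work: for every fixed $z'$, $\bar S\cup\{z\}$ and $\bar S\cup\{z'\}$ are replace-one neighbours (or equal), so the variational/layer-cake bound $\E_P[g] - \E_Q[g] \le \sup_E (P(E)-Q(E)) \le \eta$ for $g \in [0,1]$ gives the pointwise inequality, and averaging over $z'\sim\prior$ preserves it since the left term is constant in $z'$. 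Your approach buys a shorter, self-contained argument that uses only the raw definition of TV privacy; the paper's buys uniformity with its other lemmas and, by passing through the general $f$, also yields the stronger intermediate bound $\mem(z) \le 1 - f(\E_{z'}\E_\theta[\ell(z;\theta)]) - \E_{z'}\E_\theta[\ell(z;\theta)]$ as a by-product before being relaxed to $\eta$.
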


\begin{proof}
Recall that $M$ satisfies $f$-DP for some $f$. For any fixed $z, z' \in \sD$, we have by \cref{stmt:f-dp-alt}:
\[
    \E_{\theta \sim M(\bar S \cup \{z\})} q(z; \theta) \leq 1 - f\left(\E_{\theta \sim M(\bar S \cup \{z'\})} q(z; \theta) \right),
\]
Subsequently, taking the expectation over $z' \sim \prior$ and applying \cref{stmt:avg-to-sup-baseline}, we get:
\[
    \E_{\theta \sim M(\bar S \cup \{z\})} q(z; \theta) \leq 1 - f\left(\E_{z' \sim \prior} \E_{\theta \sim M(\bar S \cup \{z'\})} q(z; \theta) \right),
\]
Finally, we obtain the bound in terms of $\eta$ by subtracting $E_{z' \sim \prior\;\theta \sim M(\bar S \cup \{z'\})} q(z; \theta)$ from both sides and recalling that $\max_{\alpha \in [0, 1]} (1 - f(\alpha) - \alpha) \leq \eta$ by \cref{stmt:dps-to-tv}.
\end{proof}

Note that a similar connection between memorization and the advantage of membership inference attacks in an average-dataset threat model has been observed previously~\cite{kulynych2022disparate}.

\paragraph{Unbiased reconstruction robustness.}
Next, we present a variant of reconstruction robustness with an average baseline.
\begin{definition}[Unbiased RR, adapted from \citealp{guerra2023analysis}]
    \label{def:urero}
    For a given $n \geq 1$, mechanism $M: 2^\sD \rightarrow \Theta$, data distribution $\prior$ over $\sD$, partial dataset $\bar S \in \sD^{n-1}$, loss function $\ell: \sD \times \sD \rightarrow \sR$, threshold $\gamma \in \sR$, and reconstruction attack $\calA_{M, \bar S, \cal D}: \Theta \rightarrow \sD$, we define the unbiased reconstruction robustness (URR) success rate as follows:
    \[
        \success_\text{URR}(M, \bar S, \prior; \calA, \ell, \gamma) \define \Pr_{\substack{z \sim \prior \\ \hat z \gets \calA_{M, \bar S, \prior}(M(\bar S \cup \{z\}))}}\left[ \ell(z, \hat z) \leq \gamma \right],
    \]
    and the baseline success as:
    \[
        \baseline_\text{URR}(\prior; \ell, \gamma) \define \Pr_{\substack{z, z' \sim \prior \\ \hat z \gets \calA_{M, \bar S, \prior}(M(\bar S \cup \{z'\}))}}\left[ \ell(z, \hat z) \leq \gamma \right].
    \]
\end{definition}
\begin{theorem}
    \label{stmt:f-to-urero}
    Suppose that the algorithm $M: \sD^n \rightarrow \Theta$ satisfies $f$-DP w.r.t. replace-one relation. Then, for any $n \geq 1$, partial dataset $\bar S \in \sD^{n-1}$, data distribution $\prior$ over $\sD$, loss function $\ell: \sD \times \sD \rightarrow \sR$, threshold $\gamma \in \sR$, and any adversary $\calA_{M, \bar S, P}: \Theta \rightarrow \sD$, we have:
    \begin{equation}\label{eq:urero-succ}
        \success_\text{URR}(M, \bar S, \prior; \calA_{M, \bar S, \prior}, \ell, \gamma) \leq 1 - f(\baseline_\text{URR}(\prior; \ell, \gamma)).
    \end{equation}
    Moreover, 
    \begin{equation}\label{eq:urero-adv}
        \success_\text{URR}(M, \bar S, \prior; \calA_{M, \bar S, \prior}, \ell, \gamma) - \baseline_\text{URR}(\prior; \ell, \gamma) \leq \eta,
    \end{equation}
    where $\eta$ is the TV privacy guarantee of the mechanism.
\end{theorem}
\begin{proof}
    We obtain \cref{eq:urero-succ} by applying \cref{stmt:dataset-specific-generalization} to a mechanism $\calA_{M, \bar S, \prior}(\theta)$, which satisfies $f$-DP by post-processing, and using $q(z; \cdot)$ chosen as in \cref{eq:proof-upper-bound-instantiations}. We immediately get \cref{eq:urero-adv} by applying \cref{stmt:dps-to-tv} to \cref{eq:urero-succ} as in the proof of \cref{stmt:generalized-tv-to-adv}.
\end{proof}

\paragraph{Narcissus resiliency.}
We present a recent definition of privacy due to \citet{cohen2025data}, adapted to our setting:
\begin{definition}[Narcissus resiliency, adapted from \citealp{cohen2025data}]\label{def:nr}
    For a given $n \geq 1$, mechanism $M: 2^\sD \rightarrow \Theta$, data distribution $\prior$ over $\sD$, an adversary $\calA_{n, M, \prior}: \Theta \rightarrow \sV$ which outputs an element from an arbitrary set $\sV$, and a bounded function $R: \sD^n \times \sV \rightarrow [0, 1]$, we define the narcissus resiliency (NR) success rate as follows:
    \[
        \success_\text{NR}(n, M, \prior; \calA, R) \define \Pr_{\substack{S \sim \prior^n \\ p \gets \calA_{n, M, \prior}(M(S))}}\left[ R(S, v) \right],
    \]
    and the baseline success as:
    \[
        \baseline_\text{NR}(n, \prior; \calA, R) \define \Pr_{\substack{S, T \sim \prior^n \\ v \gets \calA_{n, M, \prior}(M(S))}} \left[ R(T, v) \right].
    \]
    \end{definition}

Our generalization result for non-linear functions in \cref{stmt:generalization2} immediately implies a bound on its success:
\begin{theorem}\label{stmt:f-to-nr}
   Suppose that $M: \sD^n \to \Theta$ satisfies $f$-DP w.r.t.\@ replace-one neighbourhood relation. Then, for any given $n \geq 1$, data distribution $\prior$ over $\sD$, and an adversary $\calA_{n, M, \prior}: \Theta \rightarrow \sQ$, we have:
   \[
       \success_\text{NR}(n, M, \prior; \calA, R) \leq 1 - f^{(n)}(\baseline_\text{NR}(n, \prior; \calA, R)).
   \]
\end{theorem}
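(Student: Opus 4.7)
The plan is to derive this statement as a direct corollary of the generalization bound for non-linear queries (Lemma~\ref{stmt:generalization2}), together with the post-processing closure of $f$-DP. The key observation is that the composed mapping $M'(S) \define \calA_{n, M, \prior}(M(S))$ is itself an $f$-DP mechanism with respect to the replace-one relation, because $f$-DP is closed under post-processing. Hence any bound that applies to arbitrary $f$-DP algorithms applies verbatim to $M'$, whose output lives in the adversary's output space $\sV$.

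Next, I would rewrite both quantities in Definition~\ref{def:nr} as expectations of the bounded function $R$ against the distribution induced by $M'$:
\begin{align*}
    \success_\text{NR}(n, M, \prior; \calA, R) &= \E_{S \sim \prior^n,\; v \sim M'(S)}\bigl[R(S, v)\bigr], \\
    \baseline_\text{NR}(n, \prior; \calA, R)   &= \E_{S, T \sim \prior^n,\; v \sim M'(S)}\bigl[R(T, v)\bigr],
\end{align*}
where on the baseline side $S$ and $T$ are independent, so $v$ is independent of the ``evaluation'' dataset $T$. With this reformulation, the first inequality of Lemma~\ref{stmt:generalization2}, applied to the mechanism $M'$ and the function $R$, matches the pattern exactly: its left-hand side is the $\success_\text{NR}$ expression above, and the inner argument of $f^{(n)}$ on its right-hand side is the $\baseline_\text{NR}$ expression. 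Chaining these identifications yields $\success_\text{NR} \leq 1 - f^{(n)}(\baseline_\text{NR})$, which is the claim.

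The only real ``work'' here is upstream of this theorem---in particular, the group-privacy step encoded by $f^{(n)} = 1 - (1 - f)^{\circ n}$ (Proposition~\ref{stmt:f-group-privacy}) and the Jensen-type push-through manipulation used in the proof of Lemma~\ref{stmt:generalization2}. Once those are in place, the narcissus resiliency bound requires nothing beyond post-processing and a matching of notation, so I do not anticipate any genuine obstacle at this step; the whole argument can be stated in a few lines.
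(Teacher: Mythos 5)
Your proposal is correct and matches the paper's own proof exactly: the paper likewise observes that $\calA_{n,M,\prior}\circ M$ is $f$-DP by post-processing and then applies the first inequality of Lemma~\ref{stmt:generalization2} to this composed mechanism, matching its left-hand side to $\success_\text{NR}$ and the inner argument of $f^{(n)}$ to $\baseline_\text{NR}$. You simply spell out the notational identification in more detail than the paper's two-line proof does.
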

\begin{proof}
    After observing that $\calA(M(S))$ satisfies $f$-DP by post-processing, the result is an immediate implication of applying \cref{stmt:generalization2} to $\calA$ as a mechanism.
\end{proof}

\paragraph{Average-dataset predicate singling out.}
We can show the following bound on PSO based on $f$-DP:
\begin{theorem}\label{stmt:f-to-pso}
   Suppose that $M: \sD^n \to \Theta$ satisfies $f$-DP w.r.t. replace-one neighbourhood relation. Then, for any given $n > 1$, data distribution $\prior$ over $\sD$, and an adversary $\calA_{n, M, \prior}: \Theta \rightarrow \sQ_{\prior, w}$ with $\sQ_{\prior, w} \define \{ p ~\mid~ p: \sD \rightarrow \{0, 1\}, \E_\prior[p] = w \}$ for any given $w \in [0, \nicefrac{1}{n}]$, we have:
   \begin{equation}
       \success_\text{PSO}(n, M, \prior, w; \calA) \leq n \cdot (1 - f(w))
   \end{equation}
\end{theorem}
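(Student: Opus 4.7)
The plan is to combine a union bound with \cref{stmt:avg-to-sup-baseline}. The key observation is that the singling-out event $\sum_{z \in S} p(z) = 1$ implies that $p(z_i) = 1$ for at least one index $i \in \{1, \ldots, n\}$, so by a union bound,
\begin{equation}
    \success_\text{PSO}(n, M, \prior, w; \calA) \leq \sum_{i=1}^n \Pr_{\substack{S \sim \prior^n \\ p \gets \calA(M(S))}}[p(z_i) = 1].
\end{equation}
It thus suffices to show that each summand is bounded by $1 - f(w)$.

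For each fixed index $i$, I would condition on the other $n-1$ records, writing $S = \bar S_{-i} \cup \{z_i\}$ with $\bar S_{-i} \in \sD^{n-1}$ fixed and $z_i \sim \prior$. The post-processing property of $f$-DP implies that $\calA \circ M$ itself satisfies $f$-DP with respect to the replace-one relation. Applying \cref{stmt:avg-to-sup-baseline} to this composed mechanism with the query function $q(z; p) \define \id[p(z) = 1]$ yields
\begin{equation}
    \E_{z_i \sim \prior} \E_{p \gets \calA(M(\bar S_{-i} \cup \{z_i\}))}\left[\id[p(z_i) = 1]\right] \leq 1 - f\Bigl(\sup_{p \in \sQ_{\prior, w}} \E_{z \sim \prior}[\id[p(z) = 1]]\Bigr).
\end{equation}
Since $\calA$ outputs predicates in $\sQ_{\prior, w}$, every such $p$ satisfies $\E_\prior[p] = w$, so the supremum above equals $w$. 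Averaging over $\bar S_{-i} \sim \prior^{n-1}$ and using that the coordinates of $S$ are i.i.d.\@ preserves the bound, giving $\Pr[p(z_i) = 1] \leq 1 - f(w)$ for every index $i$.

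Summing the $n$ per-index bounds through the union bound yields the claimed inequality $\success_\text{PSO} \leq n \cdot (1 - f(w))$. There is no substantial obstacle: the heavy lifting has already been done by \cref{stmt:avg-to-sup-baseline}, and the only care needed is to verify that (i) the replace-one form of \cref{stmt:avg-to-sup-baseline} applies when we single out one coordinate of an i.i.d.\@ dataset, and (ii) the weight constraint on the admissible predicates correctly caps the inner supremum at $w$. The mild looseness in the final bound comes from the union bound, which does not exploit the mutual exclusivity implicit in the event $\sum_{z \in S} p(z) = 1$; this is presumably why, as noted in the main text, the resulting PSO bound is not meaningfully tighter than the $(\varepsilon, \delta)$-DP bound of \cref{stmt:dp-to-pso-loose}.
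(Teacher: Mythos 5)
Your proof is correct and follows essentially the same route as the paper's: a union bound over the $n$ coordinates, followed by applying $f$-DP with Jensen's inequality to bound each summand by $1 - f(w)$. The only cosmetic difference is that you invoke \cref{stmt:avg-to-sup-baseline} as a packaged lemma, whereas the paper unrolls the Jensen and reparametrization steps inline (mirroring the structure of the original PSO argument of \citet{cohen2020towards}, including the $p^*$ truncation, which the $\E_\prior[p] = w$ constraint in the statement renders vacuous).
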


\begin{proof}
We follow the steps of the proof in \citet{cohen2020towards}. Regardless of the weight of any given predicate $p$, we can modify it to have weight less than $w$ via:
\[
    p^*(x) = \begin{cases}
        p(x) & \quad \text{ if } \E_\prior[p] \leq w \\
        0 & \quad \text{ if } \E_\prior[p] > w
    \end{cases}
\]
Note that $p^*$ is exactly $p$ if $p$'s weight is $\leq w$, and is trivially always 0 else. So, the weight of $p^*$ is either $0$ or $\E_{\prior}[p^*]$. Moreover, by postprocessing we know that $p^*$ is $f$-DP. We investigate the probability of a successful PSO attack on $S$, which occurs when $p^*(S) \define \frac{1}{n}\sum_{i=1}^n p^*(x_i) = \frac{1}{n}$ and $\E_{\prior}[p^*] \leq w$. The second condition is always satisfied via the construction of $p^*$, so: 
\begin{align*}
    \success(n, M, \prior; \calA) &= \Pr_{\substack{S \sim \prior^n \\ p \leftarrow M(S)}}[p(S) = 1/n \text{ and } \E_{\prior}[p] \leq w] \\
    &= \Pr_{\substack{S \sim \prior^n \\ p \leftarrow M(S)}}[p^*(S) = 1/n ] \\
    &\leq \sum_{i=1}^n \Pr_{\substack{S \sim \prior^n \\ p \leftarrow M(S)}}[p^*(z_i) = 1  \text { and } \forall \: j \neq i, \: p^*(z_j) = 0],
\end{align*}
where $S = (z_1, z_2, \ldots, z_n)$.

Now, as $p^*$ satisfies $f$-DP, we can apply \cref{stmt:f-dp-alt} in conjunction with \cref{stmt:push-exp-inside}:
\begin{align*}
    &\leq \sum_{i=1}^n 1 - f\left(\Pr_{\substack{S \sim \prior^{n} \\ z\sim \prior \\p \leftarrow M(S_{i \rightarrow z})}}[p^*(z_i) = 1  \text { and } \forall \: i \neq j, \; p^*(z_j) = 0] \right),
\end{align*}
where $i \rightarrow z$ denotes substitution of $z_i$ by $z$.
We can reparametrize without changing the expectation: 
\begin{align}
    &= \sum_{i=1}^n 1 - f\left(\Pr_{\substack{S \sim \prior^n \\ z\sim \prior \\p \leftarrow M(S)}}[p^*(z_i) = 1  \text { and } \forall \: j \neq i, p^*(x_j) = 0] \right).
\end{align}
Now, we can upper bound the probability inside $f(\cdot)$ by $w$ by \cref{stmt:avg-to-sup-f}:
\[
\begin{aligned}
    &\leq \sum_{i=1}^n 1 - f(w) \\
    &= n \cdot (1 - f(w)).
\end{aligned}
\]

\end{proof}
Note that, to tighten the bound, we slightly modified the definition of the admissible predicate in \cref{stmt:f-to-pso} to only consider predicates of a specific weight.

\section{Additional Experimental Details}
\label{app:exp-details}

\subsection{Additional Case Studies}

\paragraph{Tight budget analysis in DP query answering in terms of risk.}
One of the common use cases of DP is answering statistical queries about a dataset~\cite{gaboardi2020programming}. In real-world instances of such systems, e.g., in healthcare, re-identification risk is a concern~\cite{raisaro2018protecting}.

To evaluate how our bounds apply to risk measurement in this setting, we simulate the release of multiple queries with added Laplace noise. We compare our bound to the default budget computation procedure of \texttt{smartnoise} software framework~\cite{gaboardi2020programming}, which is based on optimal composition results for $(\varepsilon, \delta)$-DP mechanisms~\cite{kairouz2015composition}. To apply our bounds, we use the direct method from \citet{kulynych2024attack} to obtain the $f$ curve of an adaptive composition of Laplace mechanisms. In this simulation, we assume a fixed noise parameter $b = 5.0$ for each query, corresponding to $\varepsilon = 0.2$ pure DP per query. In \cref{fig:query-reid-adv}, we show how our bounds on risk can enable researchers to conduct significantly more queries while satisfying any given risk requirement, in particular, on re-identification. For example, if the requirement is to ensure at most $0.2$ attack advantage under the baseline risk of $0.1$ (\cref{fig:query-reid-adv}, middle pane), we can issue 15 queries according to our analysis vs. 5 with the standard one.

\begin{figure}[t]
    \centering
    \includegraphics[width=\figwidth]{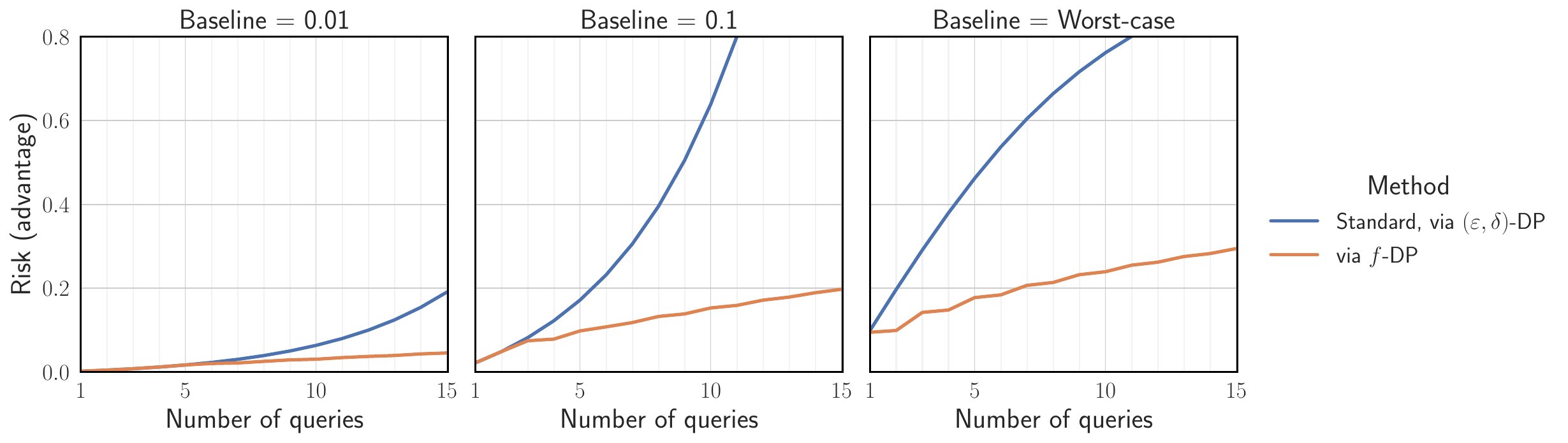}
    \caption{\textbf{Our $f$-DP based bound enables to perform more statistical queries at the same level of re-identification or other notions of risk.}}
    \label{fig:query-reid-adv}
\end{figure}

\paragraph{Calibrating noise to image reconstruction risk in deep learning.} 
In this case study, as in \cref{sec:exp}, we assume the modeler aims to use DP training for CIFAR-10 image classification~\cite{krizhevsky2009learning} to limit such risks to a given threshold. We train multiple convolutional networks using the approach of \citet{tramer2021differentially} (we provide further technical details next). We train with standard DP-SGD~\cite{abadi2016deep} and use five different levels of noise. We obtain the $f$ curve under the add-remove relation for each model using the direct method as before~\cite{kulynych2024attack}, and apply \cref{stmt:f-to-adv} to measure risk. We compare this to the RDP-based analysis from \citet{balle2022reconstructing}, as described previously.
\cref{fig:cifar10} shows that if we aim calibrate the noise scale to a given level of maximum attack risk, our unifying bound enables to choose a lower noise scale at the same level of risk (left plot), which, in turn, results in better classification accuracy (right plot). E.g., for the risk target of 0.25, we can increase the accuracy from 65\% to 68\% using our analysis, as opposed to an RDP-based one.

\begin{figure}[t]
    \centering
    \includegraphics[width=0.6\figwidth]{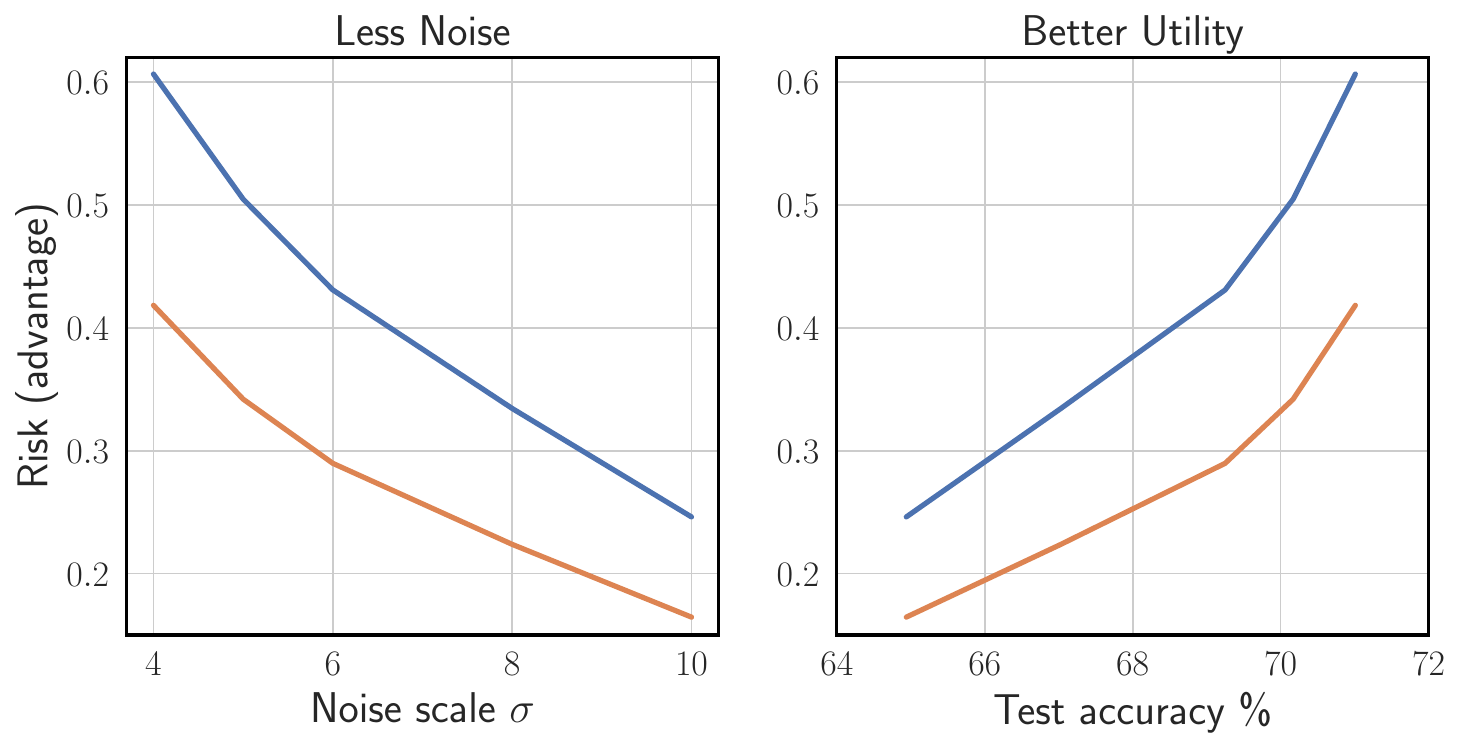}
    \caption{\textbf{Our $f$-DP based bound enables us to achieve higher classification accuracy for any level of risk.}}
    \label{fig:cifar10}
\end{figure}

\subsection{Experiment Details}
We use an Nvidia GeForce RTX 4070 16 GB GPU machine for the deep learning experiments. The experiments take up to four hours to finish.

\paragraph{Text Sentiment Classification case study details} 
We follow \citet{yu2021differentially} to finetune a GPT-2 (small)~\cite{radford2019language} using LoRA~\cite{hu2021lora} with DP-SGD on the SST-2 sentiment classification task~\cite{socher2013recursive} from the GLUE benchmark~\cite{wang2018glue}. We summarize the parameters next:
\begin{center}
\resizebox{0.75\figwidth}{!}{
\begin{tabular}{ll}
\textbf{Parameter} & \textbf{Values} \\
\midrule
Poisson subsampling probability & $\approx 0.004$ \\
Expected batch size & 256 \\
Gradient noise multiplier ($\sigma$) & $\{0.5715, 0.6072, 0.6366, 0.6945, 0.7498\}$ \\
Privacy budget ($\varepsilon$) at $\delta = 10^{-5}$ & $\{3.95, 3.2, 2.7, 1.9, 1.45\}$ \\
Training epochs & 3 \\
\midrule
Gradient clipping norm ($\Delta_2$) & 1.0 \\
LoRA dimension & 4 \\
LoRA scaling factor & 32 \\
\end{tabular}
}
\end{center}

\paragraph{Image-classification case study details.}
We use the method of \citet{tramer2021differentially} to train a convolutional neural network. We use CIFAR-10~\cite{krizhevsky2009learning} image classification dataset with a default split. We summarize the parameters next:
\begin{center}
\resizebox{0.5\figwidth}{!}{
\begin{tabular}{ll}
\textbf{Parameter} & \textbf{Values} \\
\midrule
Poisson subsampling probability & $\approx 0.16$ \\
Expected batch size & 8192 \\
Gradient noise multiplier ($\nicefrac{\sigma}{\Delta_2}$) & $\{4, 5, 6, 8, 10\}$ \\
Training epochs & $\leq 100$ \\
\midrule
Gradient clipping norm ($\Delta_2$) & 0.1 \\
Learning rate & 4 \\
Momentum (Nesterov) & 0.9 \\
\end{tabular}
}
\end{center}

\paragraph{Software} We use the following key open-source software:
\begin{itemize}
    \item PyTorch~\cite{pytorch} for implementing neural networks.
    \item opacus~\cite{opacus} for training PyTorch neural networks with DP-SGD.
    \item numpy~\cite{numpy}, pandas~\cite{pandas}, and jupyter~\cite{jupyter} for numeric analyses.
    \item seaborn~\cite{seaborn} for visualizations.

\end{itemize}

\section{Additional Figures}
\label{app:extra-stuff}

\begin{table}[h]
\centering
\caption{\textbf{Comparison of average-dataset vs. strong-adversary threat models.} $n$: dataset size, $\prior$: data distribution, $M$: mechanism, $\bar S$: partial dataset, \emph{poisoning cap.:} whether an adversary has the capability to insert arbitrary records into the dataset, *: the dataset size is implicitly known from the size of the partial dataset $\bar S$.}
\label{tab:threat-models}
\resizebox{\linewidth}{!}{
\begin{tabular}{lllcccccc}
    \toprule
    Threat model & Risk notion & Reference & \multicolumn{4}{c}{Adv. knowledge} & Poisoning cap. & Bounds from $f$-DP \\
    \cmidrule(lr){4-7}
    & & & $n$ & $\prior$ & $M$ & $\bar{S}$ \\
    \midrule 
    \multirow{3}{*}{Strong} & SPSO & \cref{def:spso} & * & \checkmark & \checkmark & \checkmark & \checkmark & $\vert$ \\
    & SRR~\cite{balle2022reconstructing} & \cref{def:srr} & * & \checkmark & \checkmark & \checkmark & \checkmark & \cref{stmt:f-to-adv} \\
    & SAI (any $k$) & \cref{def:sai} & * & \checkmark & \checkmark & \checkmark & \checkmark & $\vert$ \\
    & SAI ($k = 2$) & \cref{def:sai} & * & \checkmark & \checkmark & \checkmark & \checkmark & \cref{stmt:bernoulli-prior-bound-replace-one} \\
    & URR~\cite{guerra2023analysis} & \cref{def:urero} & * & \checkmark & \checkmark & \checkmark & \checkmark & \cref{stmt:f-to-urero} \\
    \midrule
    \multirow{1}{*}{Avg. dataset} & NR~\cite{cohen2025data} & \cref{def:nr} & \checkmark & \checkmark & \checkmark & & & \cref{stmt:f-to-nr} \\
    & PSO~\cite{cohen2020towards} & \cref{def:pso} & \checkmark & \checkmark & \checkmark & & & \cref{stmt:f-to-pso} \\
    \bottomrule
\end{tabular}
}
\end{table}

\begin{figure}[h]
    \centering
    \includegraphics[width=\figwidth]{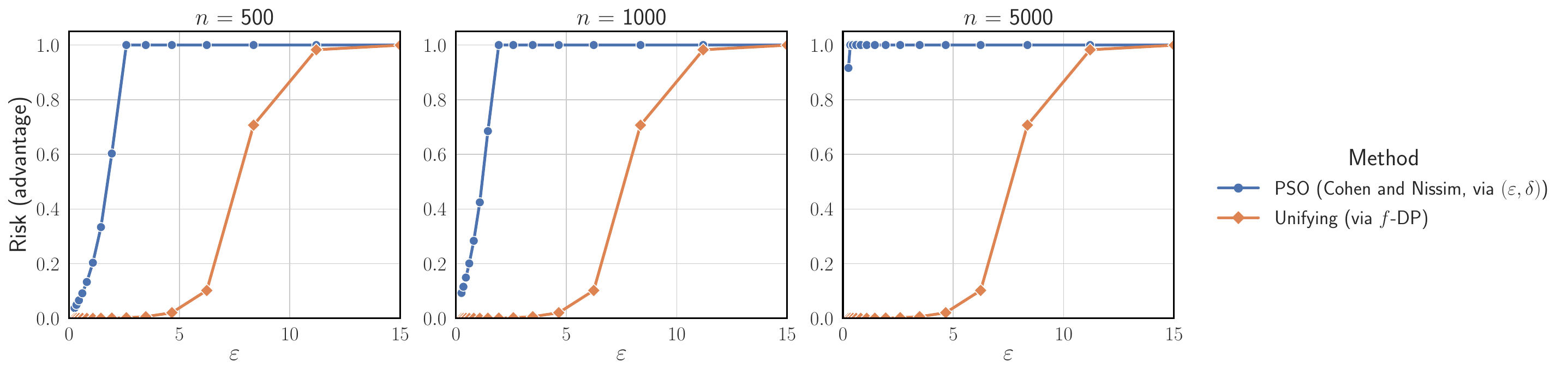}
    \caption{PSO bounds for Laplace mechanism. See \cref{fig:pso-gaussian} for details. We do not show the results with the PSO bound based on $f$-DP, as they are identical to the $(\varepsilon, \delta)$-based bound for this mechanism.}
    \label{fig:pso-laplace}
\end{figure}

\begin{figure}[h]
    \centering
    \includegraphics[width=\figwidth]{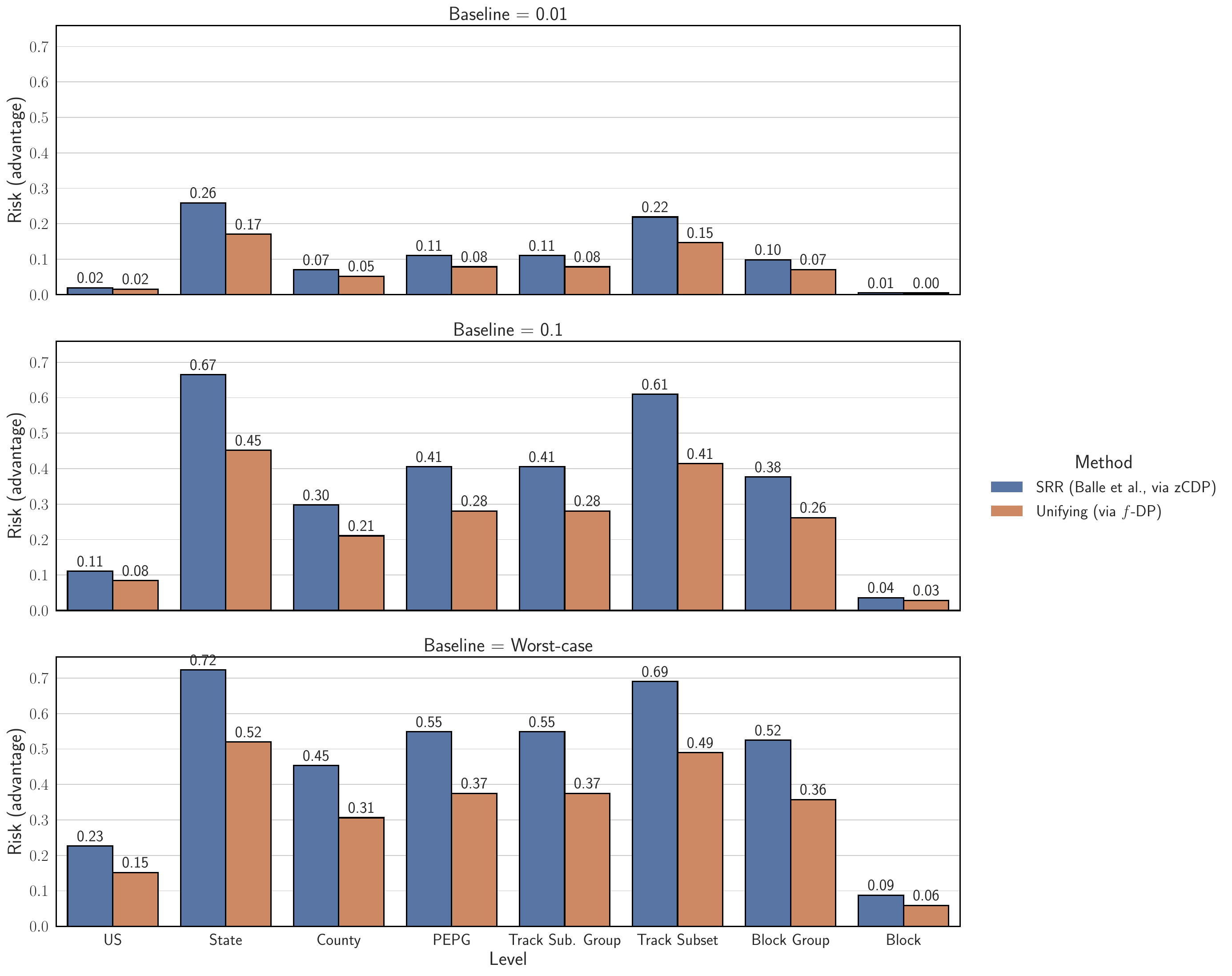}
    \caption{The results for US Census under multiple risk baselines. See \cref{fig:census} for details.}
    \label{fig:census-multiple-baselines}
\end{figure}

\begin{figure}[h]
    \centering
    \includegraphics[width=\figwidth]{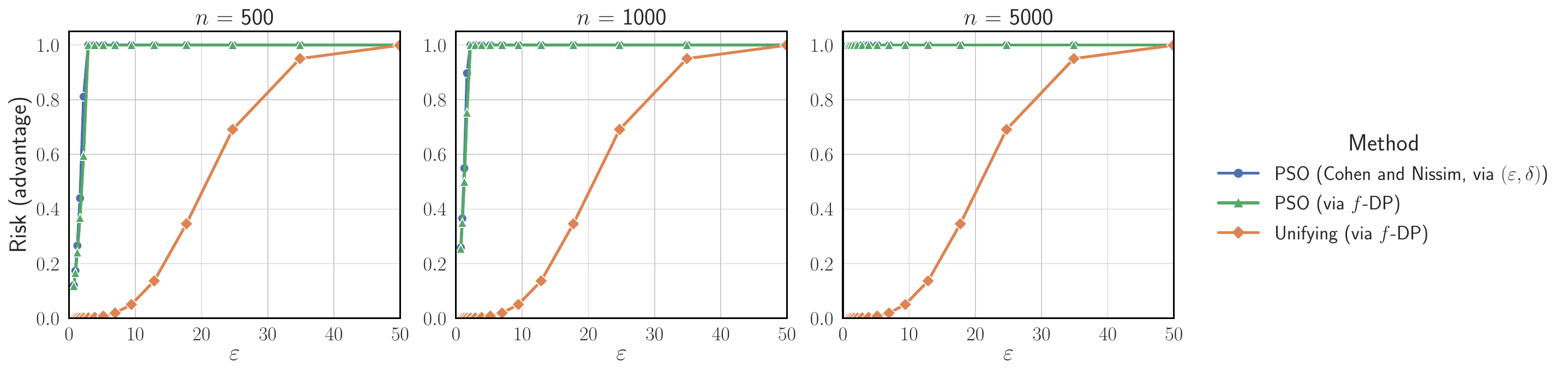}
    \includegraphics[width=\figwidth]{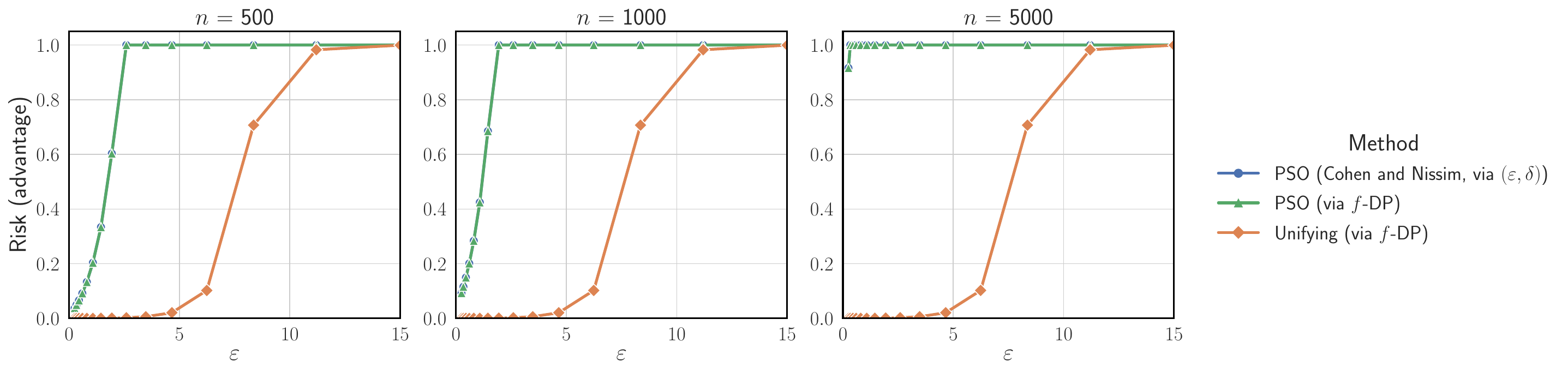}
    \caption{Result of redoing \cref{fig:pso-gaussian} (top) and \cref{fig:pso-laplace} (bottom) with the improved $f$-DP bound for PSO derived in \cref{stmt:f-to-pso}. We observe no significant gain in using this bound.}
    \label{fig:fpso-gaussian}
    \label{fig:fpso-laplace}
\end{figure}

\ifthenelse{\NOT \boolean{preprint}}{
\clearpage
\section*{NeurIPS Paper Checklist}

\begin{enumerate}

\item {\bf Claims}
    \item[] Question: Do the main claims made in the abstract and introduction accurately reflect the paper's contributions and scope?
    \item[] Answer: \answerYes{} %
    \item[] Justification: The abstract succinctly summarizes the main claims of the paper.
    \item[] Guidelines:
    \begin{itemize}
        \item The answer NA means that the abstract and introduction do not include the claims made in the paper.
        \item The abstract and/or introduction should clearly state the claims made, including the contributions made in the paper and important assumptions and limitations. A No or NA answer to this question will not be perceived well by the reviewers. 
        \item The claims made should match theoretical and experimental results, and reflect how much the results can be expected to generalize to other settings. 
        \item It is fine to include aspirational goals as motivation as long as it is clear that these goals are not attained by the paper. 
    \end{itemize}

\item {\bf Limitations}
    \item[] Question: Does the paper discuss the limitations of the work performed by the authors?
    \item[] Answer: \answerYes{}{} %
    \item[] Justification: The paper discusses the limitations in \cref{sec:conclusions}.
    \item[] Guidelines:
    \begin{itemize}
        \item The answer NA means that the paper has no limitation while the answer No means that the paper has limitations, but those are not discussed in the paper. 
        \item The authors are encouraged to create a separate "Limitations" section in their paper.
        \item The paper should point out any strong assumptions and how robust the results are to violations of these assumptions (e.g., independence assumptions, noiseless settings, model well-specification, asymptotic approximations only holding locally). The authors should reflect on how these assumptions might be violated in practice and what the implications would be.
        \item The authors should reflect on the scope of the claims made, e.g., if the approach was only tested on a few datasets or with a few runs. In general, empirical results often depend on implicit assumptions, which should be articulated.
        \item The authors should reflect on the factors that influence the performance of the approach. For example, a facial recognition algorithm may perform poorly when image resolution is low or images are taken in low lighting. Or a speech-to-text system might not be used reliably to provide closed captions for online lectures because it fails to handle technical jargon.
        \item The authors should discuss the computational efficiency of the proposed algorithms and how they scale with dataset size.
        \item If applicable, the authors should discuss possible limitations of their approach to address problems of privacy and fairness.
        \item While the authors might fear that complete honesty about limitations might be used by reviewers as grounds for rejection, a worse outcome might be that reviewers discover limitations that aren't acknowledged in the paper. The authors should use their best judgment and recognize that individual actions in favor of transparency play an important role in developing norms that preserve the integrity of the community. Reviewers will be specifically instructed to not penalize honesty concerning limitations.
    \end{itemize}

\item {\bf Theory assumptions and proofs}
    \item[] Question: For each theoretical result, does the paper provide the full set of assumptions and a complete (and correct) proof?
    \item[] Answer: \answerYes{} %
    \item[] Justification: We describe the high-level setting in \cref{sec:threat}, as well as clearly note all of the low-level assumptions in \cref{app:proofs}. We provide proofs of the statements in the main body in \cref{app:proofs}.
    \item[] Guidelines:
    \begin{itemize}
        \item The answer NA means that the paper does not include theoretical results. 
        \item All the theorems, formulas, and proofs in the paper should be numbered and cross-referenced.
        \item All assumptions should be clearly stated or referenced in the statement of any theorems.
        \item The proofs can either appear in the main paper or the supplemental material, but if they appear in the supplemental material, the authors are encouraged to provide a short proof sketch to provide intuition. 
        \item Inversely, any informal proof provided in the core of the paper should be complemented by formal proofs provided in appendix or supplemental material.
        \item Theorems and Lemmas that the proof relies upon should be properly referenced. 
    \end{itemize}

    \item {\bf Experimental result reproducibility}
    \item[] Question: Does the paper fully disclose all the information needed to reproduce the main experimental results of the paper to the extent that it affects the main claims and/or conclusions of the paper (regardless of whether the code and data are provided or not)?
    \item[] Answer: \answerYes{} %
    \item[] Justification: We detail on the information required to reproduce the results in \cref{sec:exp} and \cref{app:exp-details}.
     \item[] Guidelines:
    \begin{itemize}
        \item The answer NA means that the paper does not include experiments.
        \item If the paper includes experiments, a No answer to this question will not be perceived well by the reviewers: Making the paper reproducible is important, regardless of whether the code and data are provided or not.
        \item If the contribution is a dataset and/or model, the authors should describe the steps taken to make their results reproducible or verifiable. 
        \item Depending on the contribution, reproducibility can be accomplished in various ways. For example, if the contribution is a novel architecture, describing the architecture fully might suffice, or if the contribution is a specific model and empirical evaluation, it may be necessary to either make it possible for others to replicate the model with the same dataset, or provide access to the model. In general. releasing code and data is often one good way to accomplish this, but reproducibility can also be provided via detailed instructions for how to replicate the results, access to a hosted model (e.g., in the case of a large language model), releasing of a model checkpoint, or other means that are appropriate to the research performed.
        \item While NeurIPS does not require releasing code, the conference does require all submissions to provide some reasonable avenue for reproducibility, which may depend on the nature of the contribution. For example
        \begin{enumerate}
            \item If the contribution is primarily a new algorithm, the paper should make it clear how to reproduce that algorithm.
            \item If the contribution is primarily a new model architecture, the paper should describe the architecture clearly and fully.
            \item If the contribution is a new model (e.g., a large language model), then there should either be a way to access this model for reproducing the results or a way to reproduce the model (e.g., with an open-source dataset or instructions for how to construct the dataset).
            \item We recognize that reproducibility may be tricky in some cases, in which case authors are welcome to describe the particular way they provide for reproducibility. In the case of closed-source models, it may be that access to the model is limited in some way (e.g., to registered users), but it should be possible for other researchers to have some path to reproducing or verifying the results.
        \end{enumerate}
    \end{itemize}

\item {\bf Open access to data and code}
    \item[] Question: Does the paper provide open access to the data and code, with sufficient instructions to faithfully reproduce the main experimental results, as described in supplemental material?
    \item[] Answer: \answerYes{} %
    \item[] Justification: We provide the code as a supplementary material, accompanied with instructions for reproducing.
    \item[] Guidelines:
    \begin{itemize}
        \item The answer NA means that paper does not include experiments requiring code.
        \item Please see the NeurIPS code and data submission guidelines (\url{https://nips.cc/public/guides/CodeSubmissionPolicy}) for more details.
        \item While we encourage the release of code and data, we understand that this might not be possible, so “No” is an acceptable answer. Papers cannot be rejected simply for not including code, unless this is central to the contribution (e.g., for a new open-source benchmark).
        \item The instructions should contain the exact command and environment needed to run to reproduce the results. See the NeurIPS code and data submission guidelines (\url{https://nips.cc/public/guides/CodeSubmissionPolicy}) for more details.
        \item The authors should provide instructions on data access and preparation, including how to access the raw data, preprocessed data, intermediate data, and generated data, etc.
        \item The authors should provide scripts to reproduce all experimental results for the new proposed method and baselines. If only a subset of experiments are reproducible, they should state which ones are omitted from the script and why.
        \item At submission time, to preserve anonymity, the authors should release anonymized versions (if applicable).
        \item Providing as much information as possible in supplemental material (appended to the paper) is recommended, but including URLs to data and code is permitted.
    \end{itemize}

\item {\bf Experimental setting/details}
    \item[] Question: Does the paper specify all the training and test details (e.g., data splits, hyperparameters, how they were chosen, type of optimizer, etc.) necessary to understand the results?
    \item[] Answer: \answerYes{} %
    \item[] Justification: We detail on the technical details required to reproduce the results in \cref{sec:exp} and \cref{app:exp-details}.
    \item[] Guidelines:
    \begin{itemize}
        \item The answer NA means that the paper does not include experiments.
        \item The experimental setting should be presented in the core of the paper to a level of detail that is necessary to appreciate the results and make sense of them.
        \item The full details can be provided either with the code, in appendix, or as supplemental material.
    \end{itemize}

\item {\bf Experiment statistical significance}
    \item[] Question: Does the paper report error bars suitably and correctly defined or other appropriate information about the statistical significance of the experiments?
    \item[] Answer: \answerNo{} %
    \item[] Justification: In the main experiments, we compute theoretical quantities directly, thus do not need to use empirical statistical procedures.
    \item[] Guidelines:
    \begin{itemize}
        \item The answer NA means that the paper does not include experiments.
        \item The authors should answer "Yes" if the results are accompanied by error bars, confidence intervals, or statistical significance tests, at least for the experiments that support the main claims of the paper.
        \item The factors of variability that the error bars are capturing should be clearly stated (for example, train/test split, initialization, random drawing of some parameter, or overall run with given experimental conditions).
        \item The method for calculating the error bars should be explained (closed form formula, call to a library function, bootstrap, etc.)
        \item The assumptions made should be given (e.g., Normally distributed errors).
        \item It should be clear whether the error bar is the standard deviation or the standard error of the mean.
        \item It is OK to report 1-sigma error bars, but one should state it. The authors should preferably report a 2-sigma error bar than state that they have a 96\% CI, if the hypothesis of Normality of errors is not verified.
        \item For asymmetric distributions, the authors should be careful not to show in tables or figures symmetric error bars that would yield results that are out of range (e.g. negative error rates).
        \item If error bars are reported in tables or plots, The authors should explain in the text how they were calculated and reference the corresponding figures or tables in the text.
    \end{itemize}

\item {\bf Experiments compute resources}
    \item[] Question: For each experiment, does the paper provide sufficient information on the computer resources (type of compute workers, memory, time of execution) needed to reproduce the experiments?
    \item[] Answer: \answerYes{} %
    \item[] Justification: We describe the hardware used in \cref{app:exp-details}.
    \item[] Guidelines:
    \begin{itemize}
        \item The answer NA means that the paper does not include experiments.
        \item The paper should indicate the type of compute workers CPU or GPU, internal cluster, or cloud provider, including relevant memory and storage.
        \item The paper should provide the amount of compute required for each of the individual experimental runs as well as estimate the total compute. 
        \item The paper should disclose whether the full research project required more compute than the experiments reported in the paper (e.g., preliminary or failed experiments that didn't make it into the paper). 
    \end{itemize}
    
\item {\bf Code of ethics}
    \item[] Question: Does the research conducted in the paper conform, in every respect, with the NeurIPS Code of Ethics \url{https://neurips.cc/public/EthicsGuidelines}?
    \item[] Answer: \answerYes{} %
    \item[] Justification: Neither the research process itself nor the outcomes of the research carry
significant potential for harm.
    \item[] Guidelines:
    \begin{itemize}
        \item The answer NA means that the authors have not reviewed the NeurIPS Code of Ethics.
        \item If the authors answer No, they should explain the special circumstances that require a deviation from the Code of Ethics.
        \item The authors should make sure to preserve anonymity (e.g., if there is a special consideration due to laws or regulations in their jurisdiction).
    \end{itemize}

\item {\bf Broader impacts}
    \item[] Question: Does the paper discuss both potential positive societal impacts and negative societal impacts of the work performed?
    \item[] Answer: \answerYes{} %
    \item[] Justification: In \cref{sec:conclusions}, we reflect on some of the broader implications of our work, both positive and negative.
    \item[] Guidelines:
    \begin{itemize}
        \item The answer NA means that there is no societal impact of the work performed.
        \item If the authors answer NA or No, they should explain why their work has no societal impact or why the paper does not address societal impact.
        \item Examples of negative societal impacts include potential malicious or unintended uses (e.g., disinformation, generating fake profiles, surveillance), fairness considerations (e.g., deployment of technologies that could make decisions that unfairly impact specific groups), privacy considerations, and security considerations.
        \item The conference expects that many papers will be foundational research and not tied to particular applications, let alone deployments. However, if there is a direct path to any negative applications, the authors should point it out. For example, it is legitimate to point out that an improvement in the quality of generative models could be used to generate deepfakes for disinformation. On the other hand, it is not needed to point out that a generic algorithm for optimizing neural networks could enable people to train models that generate Deepfakes faster.
        \item The authors should consider possible harms that could arise when the technology is being used as intended and functioning correctly, harms that could arise when the technology is being used as intended but gives incorrect results, and harms following from (intentional or unintentional) misuse of the technology.
        \item If there are negative societal impacts, the authors could also discuss possible mitigation strategies (e.g., gated release of models, providing defenses in addition to attacks, mechanisms for monitoring misuse, mechanisms to monitor how a system learns from feedback over time, improving the efficiency and accessibility of ML).
    \end{itemize}
    
\item {\bf Safeguards}
    \item[] Question: Does the paper describe safeguards that have been put in place for responsible release of data or models that have a high risk for misuse (e.g., pretrained language models, image generators, or scraped datasets)?
    \item[] Answer: \answerYes{} %
    \item[] Justification: In \cref{sec:conclusions}, we propose measures for preventing the misuse of our methods.
    \item[] Guidelines:
    \begin{itemize}
        \item The answer NA means that the paper poses no such risks.
        \item Released models that have a high risk for misuse or dual-use should be released with necessary safeguards to allow for controlled use of the model, for example by requiring that users adhere to usage guidelines or restrictions to access the model or implementing safety filters. 
        \item Datasets that have been scraped from the Internet could pose safety risks. The authors should describe how they avoided releasing unsafe images.
        \item We recognize that providing effective safeguards is challenging, and many papers do not require this, but we encourage authors to take this into account and make a best faith effort.
    \end{itemize}

\item {\bf Licenses for existing assets}
    \item[] Question: Are the creators or original owners of assets (e.g., code, data, models), used in the paper, properly credited and are the license and terms of use explicitly mentioned and properly respected?
    \item[] Answer: \answerYes{} %
    \item[] Justification: We cite the sources of the datasets, and the sources for the key pieces of software used for the experimental evaluations in \cref{app:exp-details}.
    \item[] Guidelines:
    \begin{itemize}
        \item The answer NA means that the paper does not use existing assets.
        \item The authors should cite the original paper that produced the code package or dataset.
        \item The authors should state which version of the asset is used and, if possible, include a URL.
        \item The name of the license (e.g., CC-BY 4.0) should be included for each asset.
        \item For scraped data from a particular source (e.g., website), the copyright and terms of service of that source should be provided.
        \item If assets are released, the license, copyright information, and terms of use in the package should be provided. For popular datasets, \url{paperswithcode.com/datasets} has curated licenses for some datasets. Their licensing guide can help determine the license of a dataset.
        \item For existing datasets that are re-packaged, both the original license and the license of the derived asset (if it has changed) should be provided.
        \item If this information is not available online, the authors are encouraged to reach out to the asset's creators.
    \end{itemize}

\item {\bf New assets}
    \item[] Question: Are new assets introduced in the paper well documented and is the documentation provided alongside the assets?
    \item[] Answer: \answerNA{} %
    \item[] Justification: \answerNA{}
    \item[] Guidelines:
    \begin{itemize}
        \item The answer NA means that the paper does not release new assets.
        \item Researchers should communicate the details of the dataset/code/model as part of their submissions via structured templates. This includes details about training, license, limitations, etc. 
        \item The paper should discuss whether and how consent was obtained from people whose asset is used.
        \item At submission time, remember to anonymize your assets (if applicable). You can either create an anonymized URL or include an anonymized zip file.
    \end{itemize}

\item {\bf Crowdsourcing and research with human subjects}
    \item[] Question: For crowdsourcing experiments and research with human subjects, does the paper include the full text of instructions given to participants and screenshots, if applicable, as well as details about compensation (if any)? 
    \item[] Answer: \answerNA{} %
    \item[] Justification: \answerNA{}
    \item[] Guidelines:
    \begin{itemize}
        \item The answer NA means that the paper does not involve crowdsourcing nor research with human subjects.
        \item Including this information in the supplemental material is fine, but if the main contribution of the paper involves human subjects, then as much detail as possible should be included in the main paper. 
        \item According to the NeurIPS Code of Ethics, workers involved in data collection, curation, or other labor should be paid at least the minimum wage in the country of the data collector. 
    \end{itemize}

\item {\bf Institutional review board (IRB) approvals or equivalent for research with human subjects}
    \item[] Question: Does the paper describe potential risks incurred by study participants, whether such risks were disclosed to the subjects, and whether Institutional Review Board (IRB) approvals (or an equivalent approval/review based on the requirements of your country or institution) were obtained?
    \item[] Answer: \answerNA{} %
    \item[] Justification: \answerNA{}
    \item[] Guidelines:
    \begin{itemize}
        \item The answer NA means that the paper does not involve crowdsourcing nor research with human subjects.
        \item Depending on the country in which research is conducted, IRB approval (or equivalent) may be required for any human subjects research. If you obtained IRB approval, you should clearly state this in the paper. 
        \item We recognize that the procedures for this may vary significantly between institutions and locations, and we expect authors to adhere to the NeurIPS Code of Ethics and the guidelines for their institution. 
        \item For initial submissions, do not include any information that would break anonymity (if applicable), such as the institution conducting the review.
    \end{itemize}

\item {\bf Declaration of LLM usage}
    \item[] Question: Does the paper describe the usage of LLMs if it is an important, original, or non-standard component of the core methods in this research? Note that if the LLM is used only for writing, editing, or formatting purposes and does not impact the core methodology, scientific rigorousness, or originality of the research, declaration is not required.
    \item[] Answer: \answerNA{} %
    \item[] Justification: \answerNA{}
    \item[] Guidelines:
    \begin{itemize}
        \item The answer NA means that the core method development in this research does not involve LLMs as any important, original, or non-standard components.
        \item Please refer to our LLM policy (\url{https://neurips.cc/Conferences/2025/LLM}) for what should or should not be described.
    \end{itemize}
\end{enumerate}

}{}

\end{document}